%% file: main.tex
\pdfoutput=1
\documentclass[10pt]{article}
\usepackage{enumerate}
\usepackage[OT1]{fontenc}
\usepackage{amsmath,amssymb}
\usepackage{natbib}
\usepackage[usenames,dvipsnames]{xcolor}
\usepackage{geometry}
\usepackage{dsfont}
\usepackage{pgfplots}
\usepackage{smile}
\usepackage{multirow}
\usepackage{rotating}
\usepackage{esvect}
\usepackage{tikz}
\usetikzlibrary{patterns}
\usetikzlibrary{arrows}
\usepackage[colorlinks,
            linkcolor=mydarkblue,
            anchorcolor=blue,
            citecolor=blue
            ]{hyperref} 
\usepackage{algorithm}
\usepackage{algorithmic}
\usepackage[most,skins]{tcolorbox}
\definecolor{rliableolive}{HTML}{BBCC33}
\definecolor{rliableblue}{HTML}{77AADD}
\definecolor{rliablered}{HTML}{EE8866}
\definecolor{LightCyan}{rgb}{0.88,1,1}
\definecolor{darkblue}{HTML}{2878D9}
\definecolor{navyblue}{HTML}{0000FF}

\usepackage{enumitem}
\usepackage{acronym}
\acrodef{svd}[SVD]{Singular-Value Decomposition}
\acrodef{rgi}[RGI]{Relative Generalization Invariance}
\acrodef{argi}[ARGI]{Asymptotic RGI}
\acrodef{llm}[LLM]{Large Language Model}
\acrodef{ntk}[NTK]{Neural Tangent Kernel}
\acrodef{mf}[MF]{Mean-Field}
\acrodef{ccc}[CCC]{Concordance Correlation Coefficient}
\acrodef{ood}[OOD]{Out-Of-Distribution}

\usepackage{wrapfig}

\definecolor{sorange}{RGB}{252,91,90}
\definecolor{sblue}{RGB}{9,48,138}
\definecolor{syellow}{RGB}{253,179,51}
\definecolor{steal}{RGB}{19,119,116}

\newcommand{\spec}{\texttt{spec}}
\newcommand{\sgn}{\texttt{sgn}}

\theoremstyle{plain}

\usepackage{mathrsfs}
\usepackage{fullpage}

\usepackage{hyperref}
\usepackage[protrusion=false,expansion=true]{microtype}

\usepackage{xcolor}
\usepackage{amsmath, amssymb}
\usepackage{float} 
\usepackage{natbib}
\usepackage[usenames, dvipsnames]{xcolor}
\usepackage{bm}
\usepackage{graphicx}
\usepackage{acronym}
\usepackage{hhline}
\usepackage{wrapfig}
\usepackage{dsfont}
\usepackage{pgfplots}
\usepackage{smile}
\usepackage{multirow}
\usepackage{rotating}
\usepackage{enumerate}
\usepackage{esvect}
\usepackage{threeparttable}

\usepackage[utf8]{inputenc} 
\usepackage[T1]{fontenc}    
\usepackage{hyperref}       
\usepackage{url}            
\usepackage{booktabs}       
\usepackage{amsfonts}       
\usepackage{nicefrac}       

\usepackage{xcolor}         

\usepackage{tikz}
\usetikzlibrary{patterns}
\usetikzlibrary{arrows}
\usepackage{longtable}         
\usepackage{booktabs}           
\usepackage{array}          
\usepackage{multirow}  
\usepackage{subcaption}
\usepackage{framed}
\colorlet{shadecolor}{orange!15}
\usepackage{algorithm}
\usepackage{algorithmic}
\usepackage{siunitx}

\definecolor{mydarkblue}{rgb}{0,0.08,0.45}
\hypersetup{
	colorlinks=true,
	urlcolor=magenta,
	citecolor=mydarkblue,
}

\usepackage[most,skins]{tcolorbox}
\definecolor{rliableolive}{HTML}{BBCC33}
\definecolor{rliableorange}{HTML}{9A5F51}
\definecolor{rliableblue}{HTML}{77AADD}
\definecolor{rliablered}{HTML}{EE8866}
\definecolor{LightCyan}{rgb}{0.88,1,1}
\definecolor{darkblue}{HTML}{2878D9}
\definecolor{navyblue}{HTML}{0000FF}

\newtcolorbox{AIbox}[2][]{aibox,title=#2,colback=rliableblue!10!white,#1}

\def\##1\#{\begin{align}#1\end{align}}
\def\$#1\${\begin{align*}#1\end{align*}}

\usepackage{arydshln}
\usepackage{diagbox}
\newcommand{\soft}{\texttt{soft}}
\acrodef{llm}[LLM]{Large Language Model}
\acrodef{ffn}[FFN]{Feed-Forward Networks}
\acrodef{mha}[MHA]{Multi-Head Attention}
\acrodef{svd}[SVD]{Singular Value Decomposition}
\acrodef{gd}[GD]{Gradient Descent}
\acrodef{oco}[OCO]{Online Convex Optimization}
\acrodef{cnn}[CNN]{Convolutional Neural Network}

\usepackage{tcolorbox}
\usepackage{xcolor}
\definecolor{lightroyalblue}{HTML}{F6F8FD} 
\definecolor{lightorange}{RGB}{252,236,219}
\definecolor{royalblue}{HTML}{4169E1}
\definecolor{lighterblue}{HTML}{f2fafd}  

\definecolor{keycolor}{HTML}{B5605C}         
\definecolor{morandibluelight}{HTML}{E2ECF7} 
\definecolor{morandibluedeep}{HTML}{6E8AAD}  
\newcommand{\kw}[1]{\textcolor{keycolor}{#1}}

\newtcolorbox{abox1}{
  colback=morandibluelight,
  colframe=morandibluedeep,
  boxrule=0.5pt, arc=2pt,
  left=8pt, right=8pt, top=6pt, bottom=6pt
}
\definecolor{LightCyan}{rgb}{.9, .95, 1.}
\definecolor{rowblue}{RGB}{232,241,255}

\usepackage{tcolorbox}
\usepackage{xcolor}

\newcounter{observation}
\renewcommand{\theobservation}{\arabic{observation}}

\newtcolorbox{abox}[1][]{
  enhanced,
  colback=morandibluelight,
  colframe=morandibluelight,
  boxrule=0pt, arc=0pt,
  left=10pt, right=8pt, top=6pt, bottom=6pt,
  borderline west={2pt}{0pt}{morandibluedeep},
  before upper={%
    \refstepcounter{observation}%
    \textbf{Observation~{\theobservation}:}\space
  },
  #1
}
\title{Muon Learns More Robust and Transferable Features than Adam}

\author{
Tianyu Ruan\textsuperscript{1,3,4$\ast$} \quad Fengzhuo Zhang\textsuperscript{1,$\ast$,$\dagger$} \quad Shuche Wang\textsuperscript{2,$\ast$}\quad Shihua Zhang\textsuperscript{3,4} \\
\textsuperscript{1} Yale University\quad
\textsuperscript{2}National University of Singapore\quad\textsuperscript{3}University of Chinese Academy of Sciences\\ \textsuperscript{4}Academy of Mathematics and Systems Science, CAS 
}
\date{}

\begin{document}

    \maketitle
\renewcommand\thefootnote{}\footnotetext{$\ast$ Equal contribution.}\footnotetext{$\dagger$ Project Lead.}\footnotetext{Correspondence to: \textless{}fengzhuo.zhang@yale.edu\textgreater{}, \textless{}zsh@amss.ac.cn\textgreater{}}

\begin{abstract}
Muon has recently emerged as a state-of-the-art optimizer for pretraining Large Language Models (LLMs) and vision classifiers. Despite its efficiency advantage over Adam and SGD, the feature-learning advantage of Muon remains unclear. This paper investigates Muon's feature-learning advantage through the lens of robustness and transferability. First, by evaluating pretrained models on corrupted images and texts, we show that features learned by Muon are consistently more robust than those learned by Adam and SGD across different architectures, including transformers and Convolutional Neural Networks (CNNs). Using trained layer-wise probes, we further show that this robustness advantage is reflected in larger logit margins across layers. Second, by training linear classifiers or fine-tuning full models from pretrained parameters on downstream tasks, we demonstrate that Muon-learned features transfer more effectively than those learned by Adam and SGD. This transferability advantage is further supported by the diversity of hidden states across layers, as measured by effective rank. Finally, in a representative classification problem with multi-component features, we prove that Muon attains larger margins and higher effective rank than Adam and SGD, providing theoretical support for our empirical findings.\looseness=-1
\end{abstract}

    \input{main_paper/intro}

    \input{main_paper/related_work}
    \input{main_paper/prelim}
    \input{main_paper/Experiments}
    \input{main_paper/theory}
    \input{main_paper/conclusion}

    \bibliography{ref}
    \bibliographystyle{ims}
    \newpage
    \appendix
    \input{appendix/experimental_details}
    \input{appendix/Notations}

    \input{appendix/proof_thm}

    \input{appendix/proof_prop}
    \input{appendix/proof_lemma}

\newpage
\end{document}

%% file: main_paper/intro.tex
\section{Introduction}
Muon (Momentum Orthogonalized by Newton-Schulz) has emerged as a powerful alternative to Adam and stochastic gradient descent (SGD) for pretraining \acp{llm} and training vision classifiers~\citep{jordan2024muon,shulgin2025beyond}. 
It treats each parameter as a matrix and computes its update direction by orthogonalizing the gradient matrix, replacing the gradient with the matrix obtained by setting all of its nonzero singular values to one. This matrix-aware design yields substantial wall-clock gain.  Muon trains nearly $2\times$ faster than Adam and SGD across a wide range of model sizes and architectures~\citep{shah2025practical,liu2025muon,jordan2025cifar10airbench}, and is now used in production-scale \ac{llm} runs~\citep{liu2025muon,team2025kimi,deepseekai2026deepseekv4,glm5team2026glm5vibecodingagentic}.


A growing line of work analyzes Muon's efficiency advantage over Adam and SGD through the lenses of steepest gradient descent~\citep{bernstein2024old}, associative memory~\citep{wang2025muon}, and heavy-tailed data~\citep{wang2025muon,Vasudeva2025HowMS}. We complement this line of work by taking a first step toward demystifying Muon's behavior from the perspective of feature learning. Specifically, we ask:
\begin{abox1}
Does Muon learn better features than Adam and SGD? If so, what properties characterize this advantage, and how are these properties manifested in model hidden states?
\end{abox1}
Through extensive experiments and theoretical analysis, we show that Muon indeed learns better features than Adam and SGD. In addition, we identify two properties that characterize this advantage --- \kw{robustness} and \kw{transferability} --- and connect both to concrete hidden-state quantities: \kw{logit margin} and \kw{effective rank}.



First, to assess robustness, we evaluate pretrained vision and language models on corrupted inputs, using corrupted inputs from ImageNet-C (vision)~\citep{hendrycks2019benchmarking} and FineWeb10B-C (language), the latter being a new benchmark we construct. Across \acp{cnn}, Vision Transformers (ViTs), and causal transformers, Muon-pretrained models consistently outperform their Adam- and SGD-pretrained counterparts. To examine \kw{how} this robustness advantage is reflected in hidden states, we probe each layer with a trained linear decoder and compute the resulting logit margin, a quantity tightly associated with classifier robustness. The probing results show that \kw{Muon yields larger margins than Adam and SGD} at nearly every layer, indicating that its robustness advantage stems from a wider class-separation gap in the learned representations.    

Second, to evaluate transferability, we adapt pretrained models to a range of downstream tasks. For vision, we train linear classifiers on top of frozen pretrained backbones; for language, we fully fine-tune pretrained models. Across all settings, Muon-pretrained features transfer more effectively than those learned by Adam and SGD, as demonstrated by better performance on downstream tasks. To examine \kw{how} this transferability advantage is reflected in hidden states, we analyze the spectrum of each layer's hidden-state matrix, formed by stacking hidden states across evaluation examples. \kw{Muon attains consistently higher effective rank and lower Top-$k$ spectral energy than Adam and SGD}, indicating that its representations spread information more evenly across orthogonal directions and span a richer feature subspace --- a property naturally aligned with downstream adaptation.  

Finally, we provide theoretical support for these hidden-state findings in a stylized one-layer classification problem with multi-component features (a shared block-level component plus a class-specific component).
In this setting, Muon’s spectral normalization removes singular-value imbalance in the gradient matrix, thereby yielding a smaller imbalance ratio between the complementary representation components. By contrast, Adam’s coordinate-wise normalization and gradient descent leave this ratio larger, resulting in more spectrally imbalanced representations. As a result, at any matched training loss, Muon provably attains a strictly larger classification margin than both Adam and gradient descent (GD), supporting its robustness advantage. In addition, Muon provably attains a strictly higher feature effective rank than Adam and GD, supporting its transferability advantage. \looseness=-1



Together, these results identify Muon's feature-learning advantage as follows:
\begin{abox1}
 The features learned by Muon are more \kw{robust} to input corruption and \kw{transfer} more effectively to downstream tasks.
These advantages are reflected respectively in larger \kw{logit margins} and higher \kw{effective rank} in the model's hidden states.
\end{abox1}


%% file: main_paper/related_work.tex
\section{Related Work}\label{app:related_work}


{\noindent \bf Adam and its variants.} Adam~\citep{kingma2014adam}  has demonstrated strong performance in the \ac{llm} pretraining over the years. A broad range of follow-up work has further improved Adam from different perspectives. On the convergence and stability side, AMSGrad~\citep{reddi2019convergence} takes a running maximum of the second-moment estimate to fix the non-convergence issue, while Yogi~\citep{zaheer2018adaptive} revises the second-moment update to control the growth of the effective learning rate. Rectified Adam (RAdam)~\citep{liu2019variance} further rectifies the variance of adaptive learning rates in early iterations, reducing sensitivity to warmup. To improve generalization, \citet{chen2018closing} propose Partially adaptive momentum estimation (Padam), which unifies Adam with SGD by introducing a partially adaptive parameter.  AdamW~\citep{loshchilov2017decoupled} decouples weight decay from the adaptive update and has become a standard baseline in modern deep learning. In large-scale pretraining, LAMB~\citep{you2019large} introduces layer-wise adaptive scaling to support very large-batch optimization, whereas Adafactor~\citep{shazeer2018adafactor} factorizes second-moment statistics to reduce optimizer memory and is widely used in sequence modeling and large language models. Adam-style methods have also been adapted to specialized settings, including federated learning~\citep{reddi2020adaptive}, and non-Euclidean parameter spaces via Riemannian Adam~\citep{becigneul2018riemannian}.

\vspace{5pt}

{\noindent \bf Adam vs. SGD.}
Understanding why Adam and AdamW work has attracted substantial attention. Early studies observed that adaptive methods often optimize faster than SGD but can generalize worse, suggesting that their advantage is not explained by training loss alone \citep{wilson2017marginal}. A parallel line of work studies convergence. It begins with the non-convergence counterexample for Adam and the AMSGrad remedy \citep{reddi2019convergence}, and later establishes guarantees for broader Adam-type methods \citep{chen2018convergence,Zhou2018OnTC,Zou2018ASC,defossez2020simple} and, under refined assumptions, even vanilla Adam in nonconvex settings \citep{zhang2022adam,li2023convergence}. Complementing this, \cite{pan2023toward} attributes Adam's empirical advantage over SGD to the lower directional sharpness of its update steps, enabled by coordinate-wise adaptive scaling. Mechanistic accounts emphasize Adam's sign-like updates and variance normalization \citep{balles2018dissecting}, its preconditioning effects \citep{das2024towards}, and how its preconditioned curvature evolves during training \citep{cohen2022adaptive}, while recent work argues that such adaptivity is especially valuable in Transformers and language models, due to heterogeneous curvature \citep{zhang2024transformers} and heavy-tailed data statistics \citep{kunstner2024heavy}. Another line studies implicit bias and regularization: the convergent direction of adaptive methods has been characterized on homogeneous networks \citep{wang2021implicit}, Adam and AdamW can converge to solutions with geometry distinct from SGD \citep{zou2021understanding,xie2024implicit}, and decoupled weight decay avoids the interaction between adaptive scaling and L2 regularization, often improving generalization \citep{loshchilov2017decoupled}. For a broader overview, we refer to the survey by \citet{abdulkadirov2023survey}.

\vspace{5pt}

{\noindent \bf Muon and its variants.}  Muon has recently emerged as a promising optimizer for neural network training because of its strong empirical performance and distinctive matrix-structured update rule. Introduced as a method based on orthogonalized updates for hidden-layer weight matrices \citep{jordan2024muon}, it has subsequently been extended in three main directions. First, scalability-oriented variants adapt Muon to large-scale LLM training by improving update calibration, regularization, and distributed execution. \citet{liu2025muon} scale Muon to large models by adding weight decay and adjusting the per-parameter update scale, while MuonClip~\citep{team2025kimi} and Muon with Block-Periodic Orthogonalization (MuonBP)~\citep{khaled2025muonbp} focus on training stability and communication efficiency in tensor-parallel systems, respectively. Second, several methods inject adaptive statistics into Muon’s structured updates. Neuron-wise Normalized Muon (NorMuons)~\citep{li2025normuon} adds neuron-wise learning-rate adaptation and row-wise normalization, Muon+~\citep{zhang2026muon+} applies a lightweight post-orthogonalization normalization, and variance-aware or Adam-style hybrids such as Muon-NSR, Muon-VS~\citep{li2026variance} and Muon$^2$~\citep{liu2026muon} combine noise or second-moment information with Muon’s matrix-level geometry. AdaMuon~\citep{si2025adamuon} takes a related approach, combining element-wise adaptivity with orthogonal updates. Third, recent work generalizes Muon beyond per-layer matrices: TEON~\citep{zhang2026teon} performs tensor-level orthogonalization to capture cross-layer correlations, and MuonRec~\citep{shan2026muonrec} shows that Muon-style updates can also benefit recommendation models.

\vspace{5pt}

{\noindent \bf Advantage of Muon.}
Since Muon was introduced by \cite{jordan2024muon},  recent work has sought to explain its advantage from both optimization-theoretic and mechanistic perspectives. Building on the perspective that classical optimizers can be viewed as steepest descent under different operator norms~\citep{bernstein2024old}, one line interprets Muon through geometric and convergence analyses: viewing it as steepest descent under the spectral norm~\citep{li2025note} or a non-Euclidean trust-region method~\citep{kovalev2025understanding}, showing that with decoupled weight decay it implicitly enforces a spectral-norm constraint on the weights~\citep{chen2025muon}, relating its advantage over GD to the low-rank and approximately block-diagonal Hessian structure~\citep{shen2025convergence}, and clarifying the roles of momentum, weight decay, and critical batch size~\citep{sato2025analysis,shah2025practical}. Related work connects Muon to broader spectral and matrix-aware optimization principles, emphasizing the role of spectral scaling~\citep{yang2023spectral} in feature learning and distinguishing matrix-aware methods that address gradient anisotropy from Adam-style methods that address curvature anisotropy~\citep{lau2025polargrad}. A complementary line studies Muon through spectral dynamics and associative memory. These works relate Muon's gains to its effect on associative memory and improved tail-end learning under heavy-tailed data \citep{wang2025muon}, and show that spectral updates can produce more uniform learning across frequencies \citep{Li2026MuonIA} or principal components, improving both optimization and generalization on imbalanced data \citep{Vasudeva2025HowMS}. Related studies further characterize near-uniform singular-value growth in matrix factorization \citep{Kang2026UniformSG}, formalize orthogonalization as spectral preconditioning \citep{Ma2026PreconditioningBO}, and relate Muon to a broader family of spectral, structured-gradient \citep{an2025asgo}, and norm-constrained \citep{pethick2025training} optimization methods.
Our work complements this line of research by shifting the focus from optimization efficiency to feature quality. Specifically, we establish that Muon learns more robust and transferable features than Adam and SGD, and connect these advantages to larger logit margins and higher effective rank in the hidden states. These properties provide a feature-learning interpretation of Muon's empirical advantages.

\vspace{5pt}
{\noindent \bf  Feature learning: empirical perspectives.}     Feature learning is a central aspect of deep neural networks and is often regarded as a key source of their empirical success, yet the nature of the features they learn often remains poorly understood. This has motivated a large body of empirical work seeking to understand what features neural-network representations encode and how such features can be characterized across different domains. In the computer vision literature, early visualization~\citep{zeiler2014visualizing} and inversion~\citep{mahendran2015understanding} studies suggested that CNN representations become increasingly invariant and semantically structured with depth.
Follow-up work quantified this layerwise organization using transfer learning and representation-similarity metrics, showing that lower layers are typically more general and deeper layers are more task-specific~\citep{yosinski2014transferable}, with lower layers, but not deeper ones, learning similar representations across different datasets~\citep{kornblith2019similarity}. A complementary line asked whether learned features align with human-interpretable concepts, revealing concept detectors~\citep{bau2017network}, texture bias~\citep{geirhos2018imagenet}, and reliance on non-robust yet predictive cues~\citep{ilyas2019adversarial}. More recent work has turned to self-supervised and foundation-model representations, showing that training objectives strongly shape the emergent semantic structure of features~\citep{caron2021emerging}, feature suppression~\citep{xue2023features}, robustness~\citep{oquab2023dinov2}, and the extent to which visual models encode three-dimensional structure~\citep{el2024probing}. Similar questions have been pursued in language models. Probing and interpretability studies reveal a layerwise progression from lower-level to higher-level information \citep{tenney2019bert,jawahar2019does,jin2025exploring}, while their representations also encode information in sparse \citep{huben2023sparse,dunefsky2024transcoders}, distributed \citep{elhage2022toy}, and geometrically structured \citep{hewitt2019structural,gurnee2023language,engels2024not} form. Recent benchmarks in the language domain further evaluate representation quality through downstream transfer \citep{Enevoldsen2025MMTEBMM} and robustness under distribution shift \citep{Yuan2023RevisitingOR}.

\vspace{5pt}
{\noindent \bf  Feature learning: theoretical perspectives.} 
These empirical regularities have motivated a broad theory literature asking when neural networks truly learn useful features, rather than merely fit data with nearly fixed random representations. Classical representation-learning work emphasized invariance and disentanglement as desiderata for useful latent variables \citep{bengio2013representation}. Early deep-linear-network analyses showed that depth alone induces nontrivial optimization and representation dynamics \citep{saxe2013exact}. A central question in later theory is whether features evolve substantially during training. In lazy regimes~\citep{Chizat2018OnLT}, as captured by the neural tangent kernel (NTK)~\citep{jacot2018neural} and related infinite-width analyses, learned representations remain close to initialization. By contrast, mean-field analyses~\citep{Chizat2018OnTG,Mei2019MeanfieldTO} and beyond-NTK results~\citep{li2020learning,allen2019can} study regimes with genuine feature evolution. Building on this distinction, recent works further analyze how optimization, regularization, and training objectives induce feature selection and refinement. Examples include hierarchical feature learning~\citep{allen2023backward}, feature purification under adversarial training~\citep{allen2022feature}, multi-view features in distillation~\citep{allen2020towards}, and optimizer-dependent behavior such as the contrast between GD and Adam~\citep{zou2021understanding}. Orthogonal perspectives relate useful representations to minimality and invariance~\citep{Achille2017EmergenceOI}, connect information-theoretic compression to generalization~\citep{ShwartzZiv2017OpeningTB,Saxe2018OnTI}, characterize the late-stage geometry of supervised features via neural collapse \citep{Papyan2020PrevalenceON}, and analyze unsupervised and self-supervised learning through downstream guarantees~\citep{Arora2019ATA}, alignment--uniformity~\citep{wang2020understanding}, and identifiability limits~\citep{locatello2019challenging}. Our work contributes to this line by isolating the role of the optimizer in feature learning, showing that Muon, Adam, and SGD induce systematically different feature qualities under matched architectures and training budgets.

%% file: main_paper/prelim.tex
\section{Preliminaries}

This section introduces the update rules of Adam and Muon, which form the basis for the empirical comparisons in Section~\ref{sec:experiments} and the theoretical analysis in Section~\ref{sec:case_study}.

\textbf{Adam} has been widely adopted for \ac{llm} pre-training as an adaptive alternative to SGD~\citep{kingma2014adam}. 
Following standard practice, we describe each optimizer at the level of a single parameter matrix; in deep models, the optimizer is applied independently to each matrix-valued parameter. 
Let $W_t \in \Theta$ denote the parameter matrix at step $t$, where $\Theta$ is the parameter space, and let $G_t$ denote its gradient. SGD updates $W_t$ according to
$
W_{t+1} = W_t - \eta_t G_t,
$
where $\eta_t>0$ is the learning rate. In contrast, Adam introduces element-wise adaptive updates based on two exponential moving averages~\citep{kingma2014adam}. The first estimates the gradient direction as
$
M_t = \beta_1 M_{t-1} + (1-\beta_1) G_t,
$
where $\beta_1\in[0,1]$ controls the smoothing level of the gradient estimate. The second estimates the element-wise squared gradient as
$
V_t = \beta_2 V_{t-1} + (1-\beta_2)(G_t \odot G_t),
$
where $\beta_2\in[0,1]$ controls the smoothing level of the magnitude estimate, and $\odot$ denotes the Hadamard product. 
Adam then normalizes each coordinate of the gradient estimate by dividing by the square root of the second-moment estimate:
$$
W_{t+1}
=
W_t
-
\eta_t \hat M_t/(\hat{V}_t^{1/2}+\epsilon),
$$
where $\hat M_t = M_t/(1-\beta_1^t)$ and $\hat V_t = V_t/(1-\beta_2^t)$ are bias-corrected estimates, $\epsilon>0$ prevents division by zero, and the square root is also applied element-wise. AdamW \citep{loshchilov2017decoupled} further improves Adam by decoupling weight decay from the adaptive update as
$$
W_{t+1}
=
(1-\eta_t\lambda)W_t
-
\eta_t \hat M_t/(\hat{V}_t^{1/2}+\epsilon),$$
where $\lambda>0$ is the weight decay coefficient.

\textbf{Muon} departs from Adam by explicitly leveraging the matrix structure of each parameter. Instead of rescaling gradient coordinates element-wise, it orthogonalizes the gradient matrix and uses the result as the update direction. Concretely, given $W_t$ and $G_t$, Muon first forms a momentum-based gradient estimate
$
M_t = \beta M_{t-1} + (1-\beta)G_t,
$
where $\beta\in[0,1]$ controls the exponential smoothing level. It then updates the parameters using a spectrally normalized version  of $M_t$ as
$$
W_{t+1}
=
W_t
-
\eta_t U_tV_t^\top,
$$
where $M_t=U_t\Sigma_tV_t^\top$ is the \ac{svd} of $M_t$. Thus, Muon removes the singular values $\Sigma_t$ and preserves only the orthogonal directions of the gradient estimate. In practice, the exact normalization is approximated by a small number of Newton--Schulz iterations for computational efficiency. Recent studies demonstrate the effectiveness of Muon in large-scale \ac{llm} pre-training~\citep{liu2025muon,team2025kimi}. See Appendix~\ref{app:related_work} for more related works.

%% file: main_paper/Experiments.tex
\section{Feature Learning: Muon vs.\ Adam and SGD}
\label{sec:experiments}

In this section, we empirically demonstrate that Muon induces better feature learning than Adam and SGD across various models and tasks. We assess feature quality from two common and practical perspectives: \kw{robustness} and \kw{transferability}. Robustness experiments, presented in Section~\ref{sec:robustness}, evaluate pretrained models on corrupted datasets and measure the resilience of learned features to input perturbations. Transferability experiments, presented in Section~\ref{sec:transfer}, fine-tune the pretrained models on downstream tasks and measure whether the learned features support effective adaptation to new tasks. \looseness=-1

\vspace{5pt}
\noindent \textbf{Experiment setups.} Our experiments cover both vision- and language-related tasks. In the vision experiments, we pretrain ResNet-18 (11M)~\citep{he2016deep} and ViT-S (22M) \citep{dosovitskiy2020image} on ImageNet-1K~\citep{deng2009imagenet,russakovsky2015imagenet} for object classification. 
ResNet-18 is trained for 100 epochs, consistent with prior supervised training setups~\citep{chrysos2022augmenting}, and ViT-S is trained for 300 epochs, following DeiT~\citep{touvron2021training}. These well-established recipes ensure that all optimizers are evaluated under a standardized and sufficiently long training protocol. For both robustness and transfer evaluations in vision, we report Top-$1$ accuracy. \looseness=-1

In the language experiments, we pretrain nanoGPT-style causal language models on FineWeb~\citep{penedo2024fineweb} for next-token prediction. 
Since SGD does not train GPT-style models effectively in this setting~\citep{zhao2024deconstructing}, we only compare  Muon to Adam for \ac{llm} training. We use a 12-layer model (124M, GPT-2) as the main setting and scale up to a 24-layer model (354M, GPT-2 Medium) to verify that our findings persist at a larger scale. In language robustness and transferability experiments, we use perplexity to evaluate the performance~\citep{touvron2023llama}. 
Moreover, in our LLM experiments, the token-to-parameter ratio is at least $2\times$ the Chinchilla ratio~\citep{hoffmann2022training}. This places our models in a sufficiently trained regime and allows us to examine optimizer behavior after substantial pretraining. 

In all experiments, we use the same hyperparameter search protocol for each optimizer to ensure a fair comparison. Throughout the paper, we compare optimizers under a matched-budget protocol: model architecture, data pipeline, training budget (epochs or tokens), and evaluation protocol are held fixed across optimizers, while each optimizer's learning rate and scheduler are tuned independently. This isolates differences in learned representations from differences in engineering effort. See Appendix~\ref{app:exp_details} for the experimental details and Appendix~\ref{app:results_more} for full numerical results with mean $\pm$ standard deviation. 

\subsection{Robustness to Input Corruptions}
\label{sec:robustness}

\begin{figure}[t]
    \centering
  \includegraphics[width=0.49\linewidth]{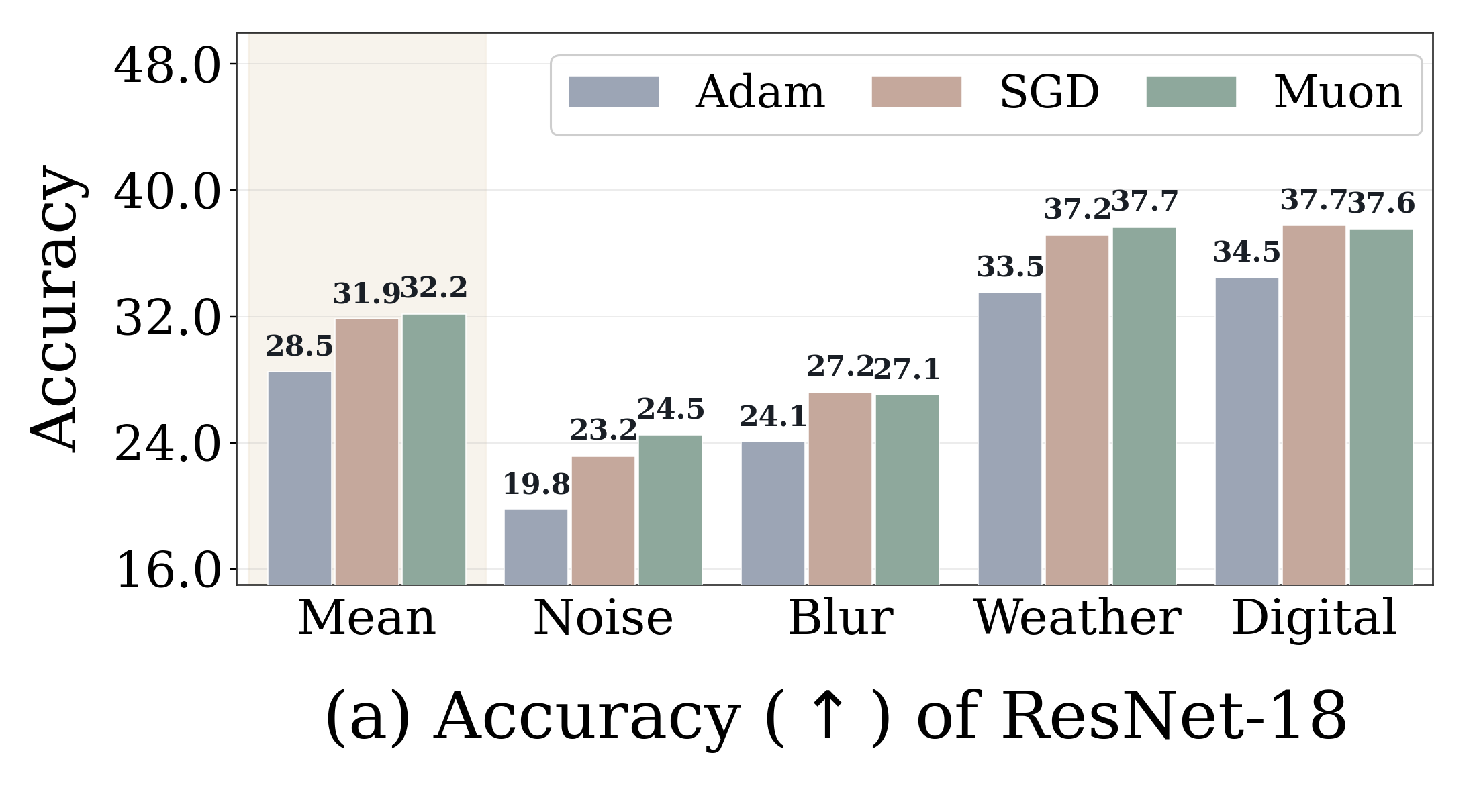}%
  \includegraphics[width=0.49\linewidth]{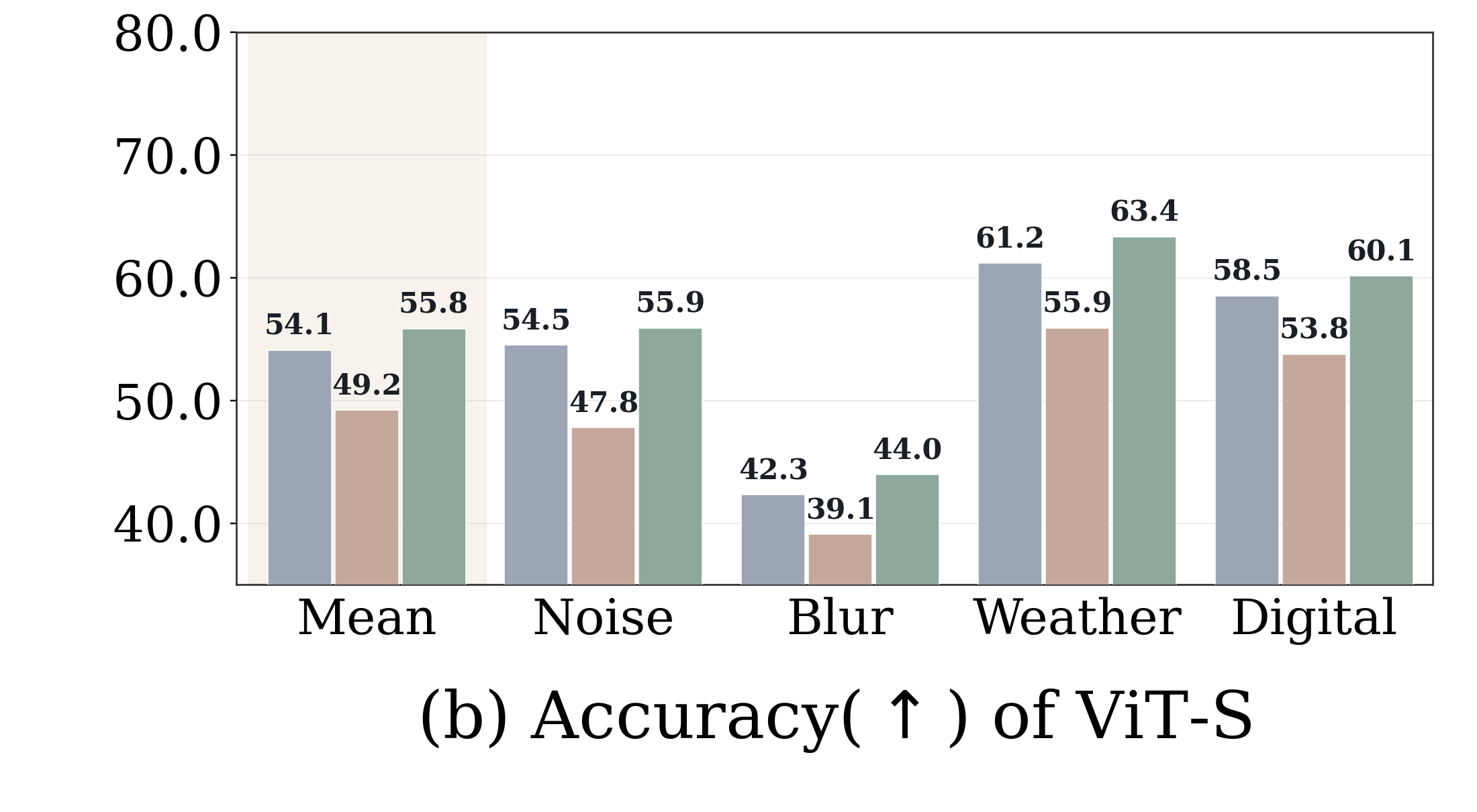}
  \\
  \includegraphics[width=0.49\linewidth]{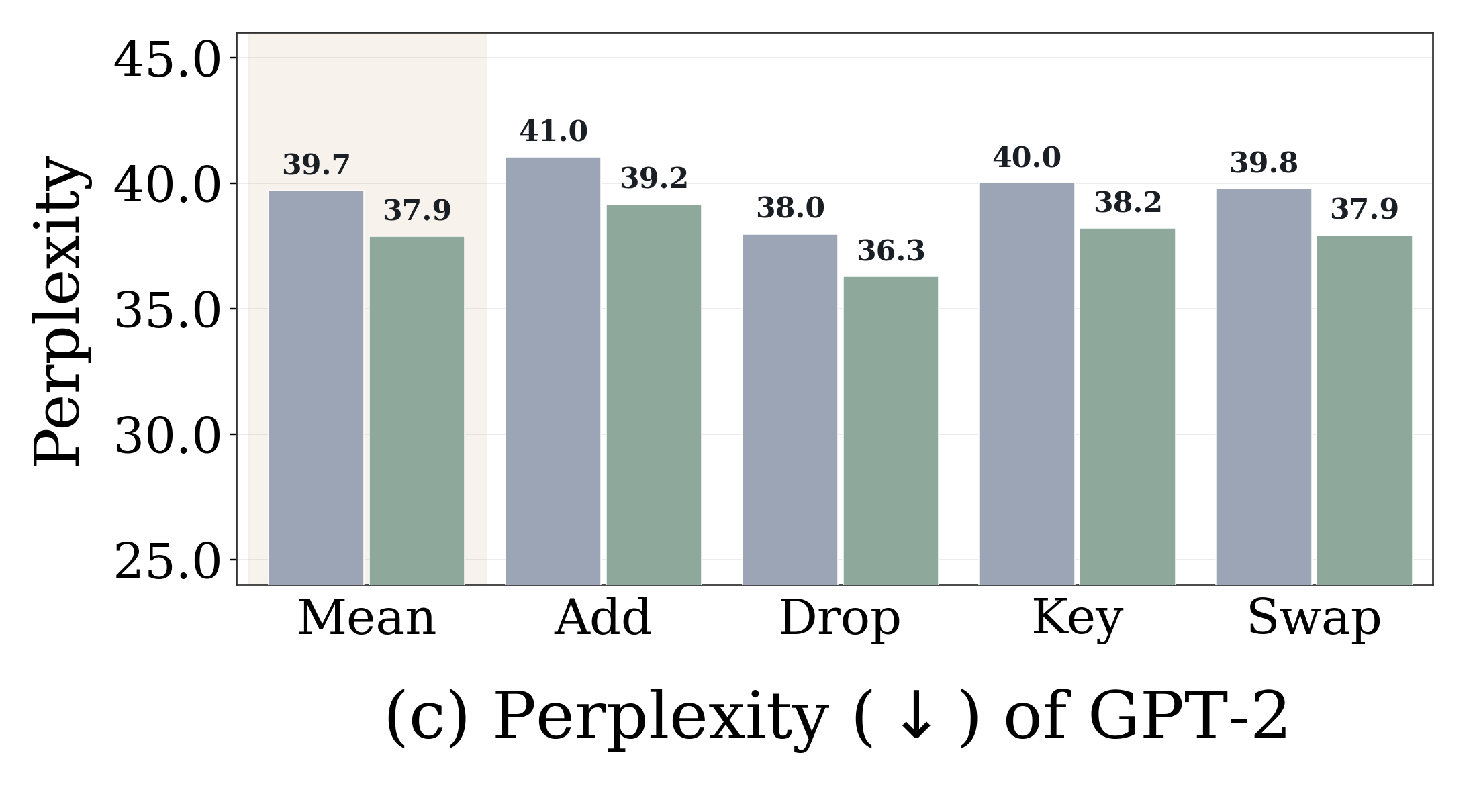}
  \includegraphics[width=0.49\linewidth]{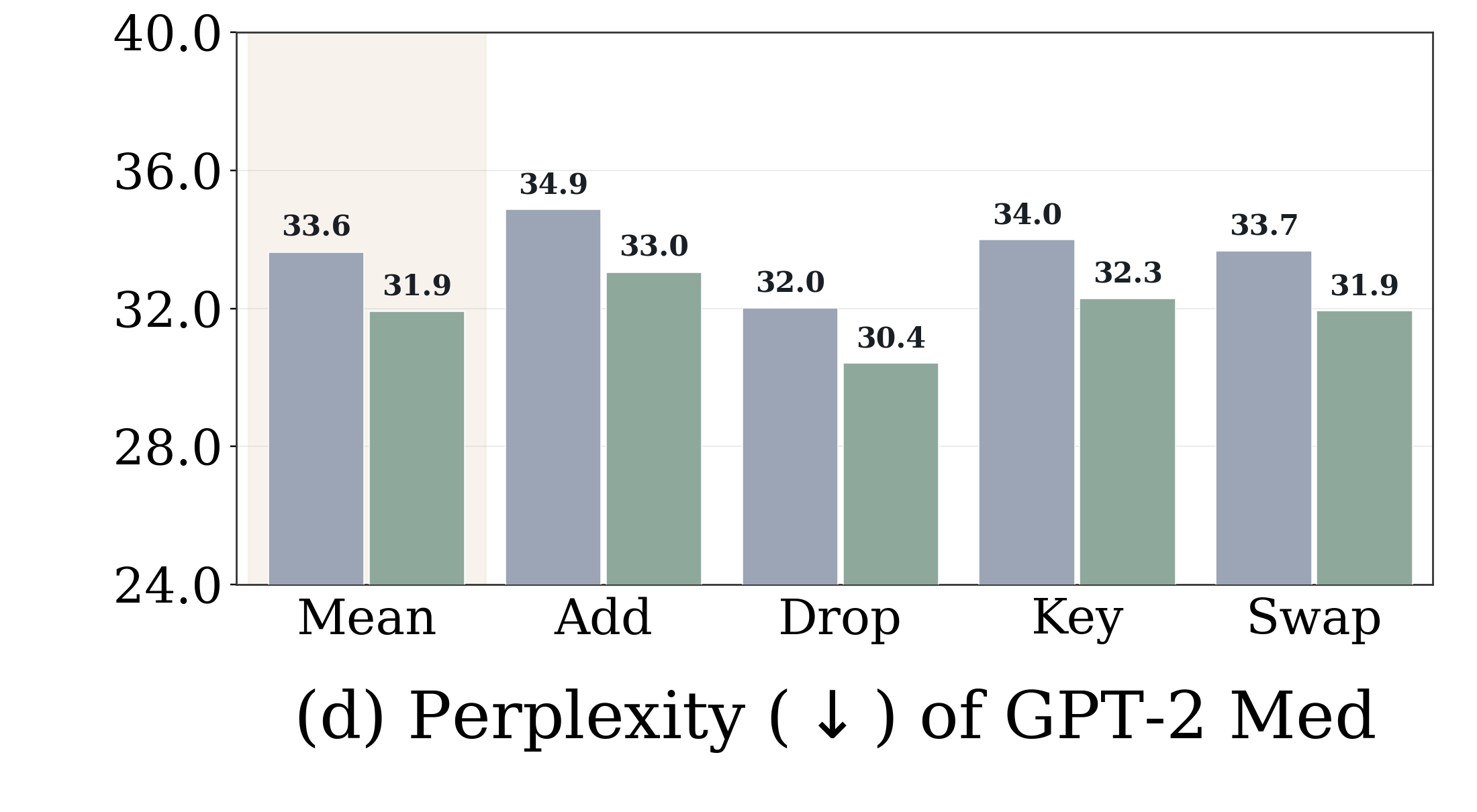}
    \caption{Robustness evaluations on vision and language tasks. (a) ResNet-18 and (b) ViT-S are evaluated on ImageNet-C, while (c) GPT-2 and (d) GPT-2 Medium are evaluated on FineWeb10B-C. Shaded bands denote $\pm$ one standard deviation over three random seeds. Muon achieves the best mean performance in all four panels, with consistent gains across transformer-based vision and language models. These suggest more robust learned features of Muon under corrupted inputs.}
    \label{fig:robustness_main}
\end{figure}

For the robustness evaluation, we first pretrain models on the clean data specified in the experimental setup, and then evaluate the pretrained models on corrupted data. In the vision experiments, pretrained models are evaluated on ImageNet-C, which applies $15$ common visual corruptions to ImageNet validation images at $5$ severity levels. These corruptions are grouped into four families: \texttt{Noise}, \texttt{Blur}, \texttt{Weather}, and \texttt{Digital}. We report results averaged over the 5 severity levels, both for each corruption family and as an overall mean across all 15 corruption types. In the language experiments, pretrained models are evaluated on a constructed corrupted split, denoted FineWeb10B-C. It is constructed using common corruption types studied in natural language processing~\citep{pruthi2019combating}. Specifically, it applies fixed character-level typo attacks to the clean validation text before tokenization, including character insertions, deletions, QWERTY-keyboard neighbor substitutions, and character swaps. We denote these four corruption types as \texttt{Add}, \texttt{Drop}, \texttt{Key}, and \texttt{Swap}, respectively, and report results for each type as well as their average. Details of the FineWeb10B-C construction are provided in Appendix~\ref{app:fineweb10c}.

\vspace{5pt}
\noindent \textbf{Muon improves robustness under input corruptions.} Figure~\ref{fig:robustness_main} reports the performance of models on corrupted data across all four settings. For ResNet-18 (panel (a)), Muon achieves the highest mean accuracy and performs best on \texttt{Noise} and \texttt{Weather}, while being slightly below SGD on \texttt{Blur} and \texttt{Digital}. For ViT-S (panel (b)), Muon achieves the highest mean accuracy and is best under every corruption family. For GPT-2 and GPT-2 Medium (panels (c) and (d)), Muon consistently achieves the lowest mean perplexity and the lowest perplexity under every corruption type. In addition, SGD outperforms Adam on the CNN architecture, whereas Adam outperforms SGD on the transformer architecture, consistent with the observations in \citet{zhang2024transformers}. We summarize the empirical findings as follows.

\begin{abox}
\label{obs:robust}
Muon achieves the  best overall performance on corrupted inputs across both vision and language tasks, indicating stronger \kw{robustness} of the learned features.
\end{abox}


\vspace{5pt}
\noindent\textbf{Layer-wise analysis of logit margins.} 
\begin{figure}[t]
    \centering
    \includegraphics[width=0.4\linewidth]{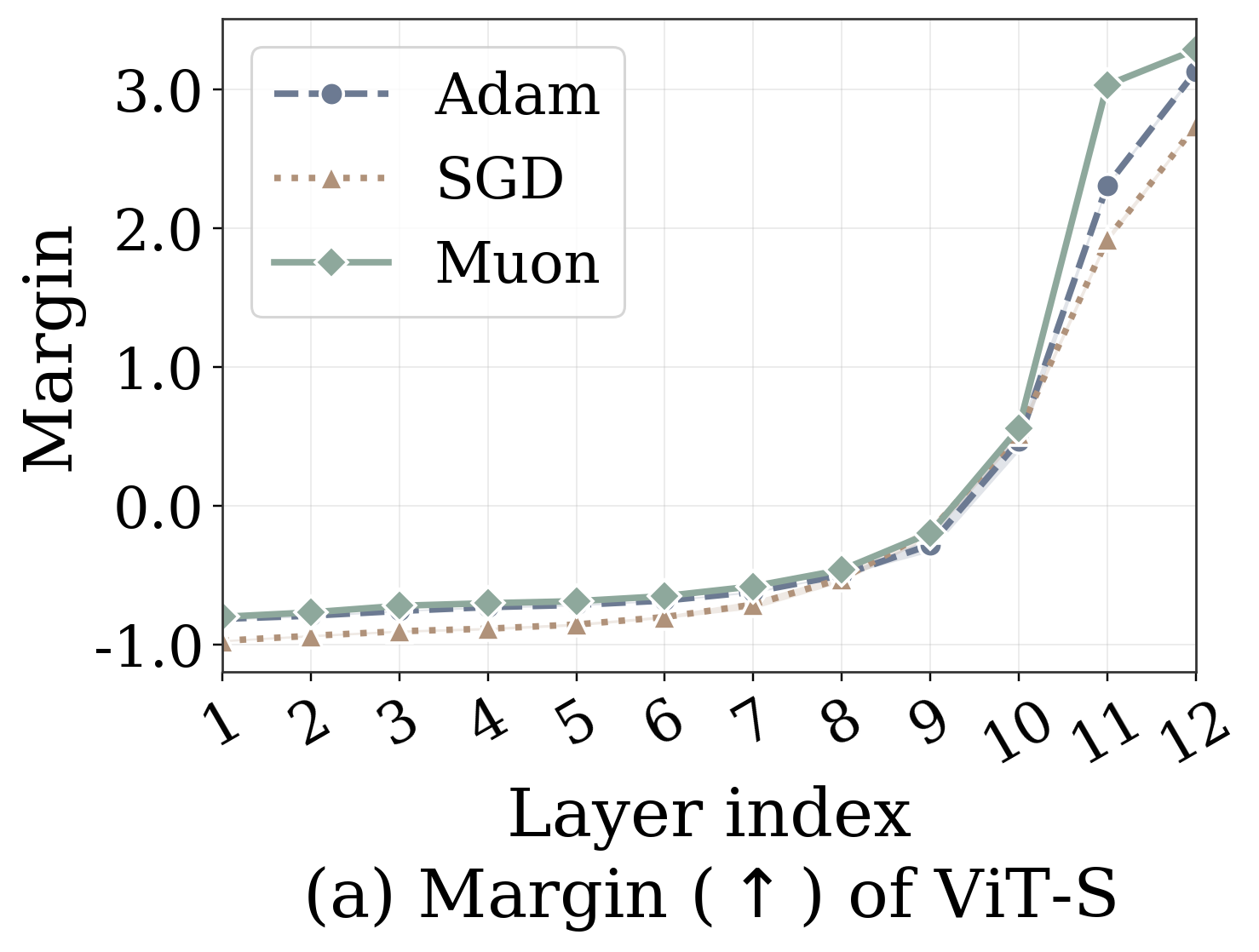}
    \hspace{0.04\linewidth}
    \includegraphics[width=0.4\linewidth]{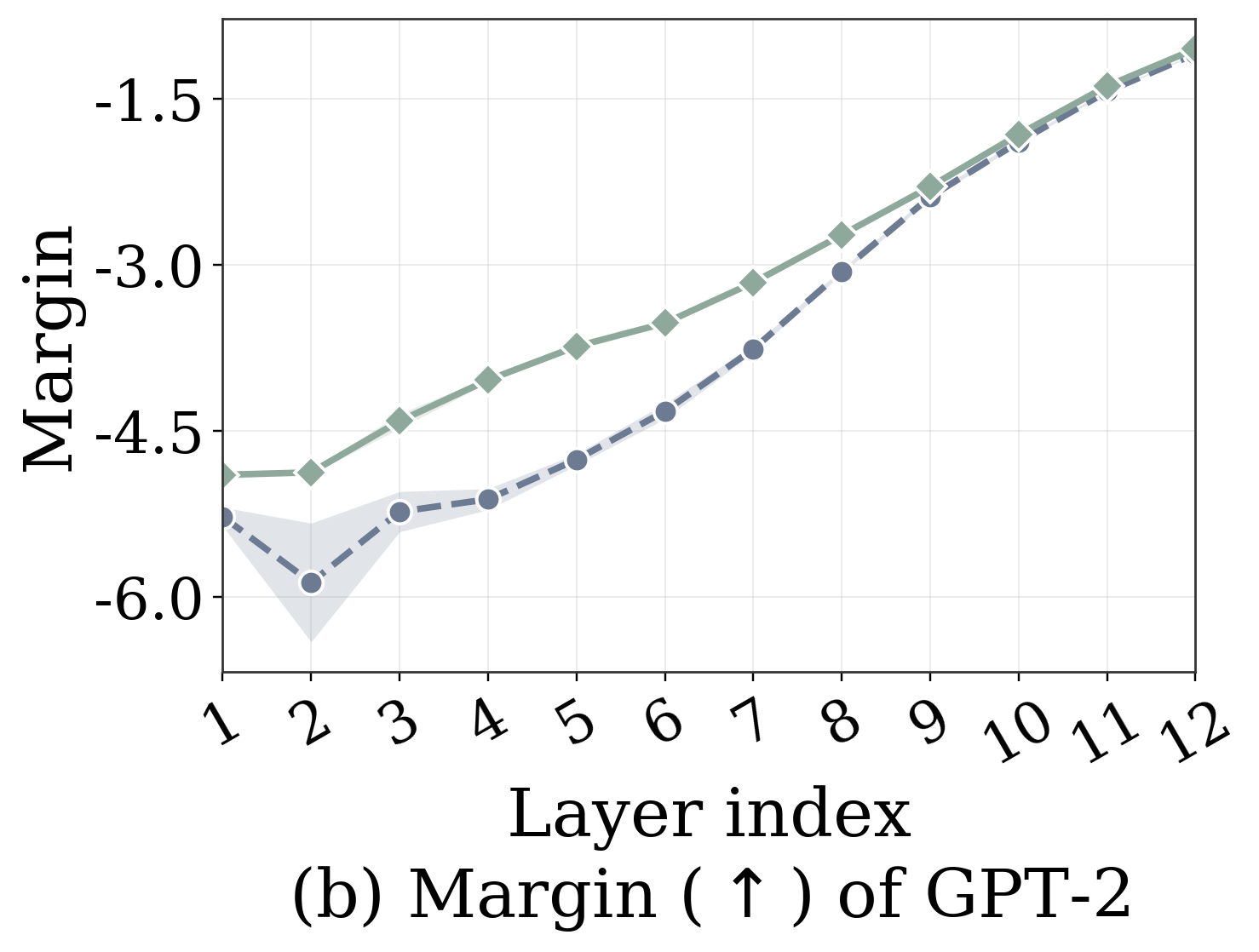}
    \caption{
Layer-wise classification margin for (a) ViT-S and (b) GPT-2. For each layer, we train a linear probe and report the mean sample-level margin (correct logit minus the largest distractor logit) on validation data. Shaded bands denote $\pm$ one
standard deviation over three seeds. Across both architectures, Muon achieves larger margins across most layers, suggesting stronger representation-level robustness.
}
\label{fig:margin}
\end{figure}
Observation~\ref{obs:robust} confirms the superior robustness of Muon over Adam and SGD through end-to-end performance. We next take a closer look at the hidden states across layers of pretrained models to understand how Muon's robustness emerges throughout the model. 
To connect hidden representations with robustness, we compute layer-wise \kw{logit margins}, which have been shown to be positively associated with robustness across a wide range of models, including support vector machines and neural networks~\citep{wei2019improved,dingmma,cheng2022cat}. Specifically, we adopt the tuned-lens framework of \citet{belrose2023eliciting} to train a probe that decodes the hidden state at each layer into a distribution over classes for vision tasks and over the vocabulary for language tasks. Details of probe training are deferred to Appendix~\ref{app:exp_details}. For the \(\ell\)-th layer, let \(z^{(\ell)}(x)\in\mathbb{R}^d\) denote the hidden state at the output of the \(\ell\)-th transformer block, which we use for probing, and let \(T^{(\ell)}\) denote the trained probe mapping \(z^{(\ell)}(x)\) to predicted logits. We define the margin of input $x$ with respect to label $y$ as
\begin{align}
    \gamma_\ell(x,y) = T^{(\ell)}_y(z^{(\ell)}(x)) - \max_{j \neq y} T^{(\ell)}_{j}(z^{(\ell)}(x)).
    \label{eq:layerwise_margin}
\end{align}
Here, the subscript \(i\) in \(T^{(\ell)}_i(\cdot)\) indexes the class, so \(T^{(\ell)}_i(z^{(\ell)}(x))\) is the predicted logit for class \(i\). In our experiments, $y$ denotes the ground-truth class for vision tasks and the ground-truth next-token index for language tasks. A larger margin means that the correct class or token is separated from the strongest competing prediction by a wider logit gap. Such a gap allows the prediction to tolerate larger logit perturbations from the noise before the top prediction changes, and is therefore associated with stronger robustness.


Figure~\ref{fig:margin} reports the average margin on corrupted evaluation data. We focus on transformer architectures, namely ViT-S and GPT-2, because the high dimensionality of CNN feature maps makes efficient probe training difficult. The results show that \kw{Muon attains the largest margin} across most layers of both ViT-S and GPT-2.\footnote{The margins of well-trained GPT models can remain negative even on clean evaluation data, because next-token prediction is performed over a very large vocabulary, which has size $50{,}257$ in our experiments. In this setting, multiple tokens may be plausible continuations, so the ground-truth next token does not need to receive the highest probed logit.} For ViT-S, Muon's margin advantage is amplified in the last two layers. At the final layer, Muon's margin is $3.29$, which is $5\%$ higher than Adam's $3.13$ and $20\%$ higher than SGD's $2.74$. For GPT-2, Muon also achieves consistently larger margins than Adam across most layers. Results for GPT-2 Medium are deferred to Appendix~\ref{app:24d_results}. These results support the following conclusion.

\begin{abox}
\label{obs:margin}
Muon yields larger \kw{logit margins} than Adam and SGD across most layers of both GPT and ViT-S, reflecting its representation-level robustness advantage.
\end{abox}

\subsection{Transferability Across Downstream Tasks}
\label{sec:transfer}
\begin{figure}[t]
    \centering

    \includegraphics[width=0.2685\linewidth]{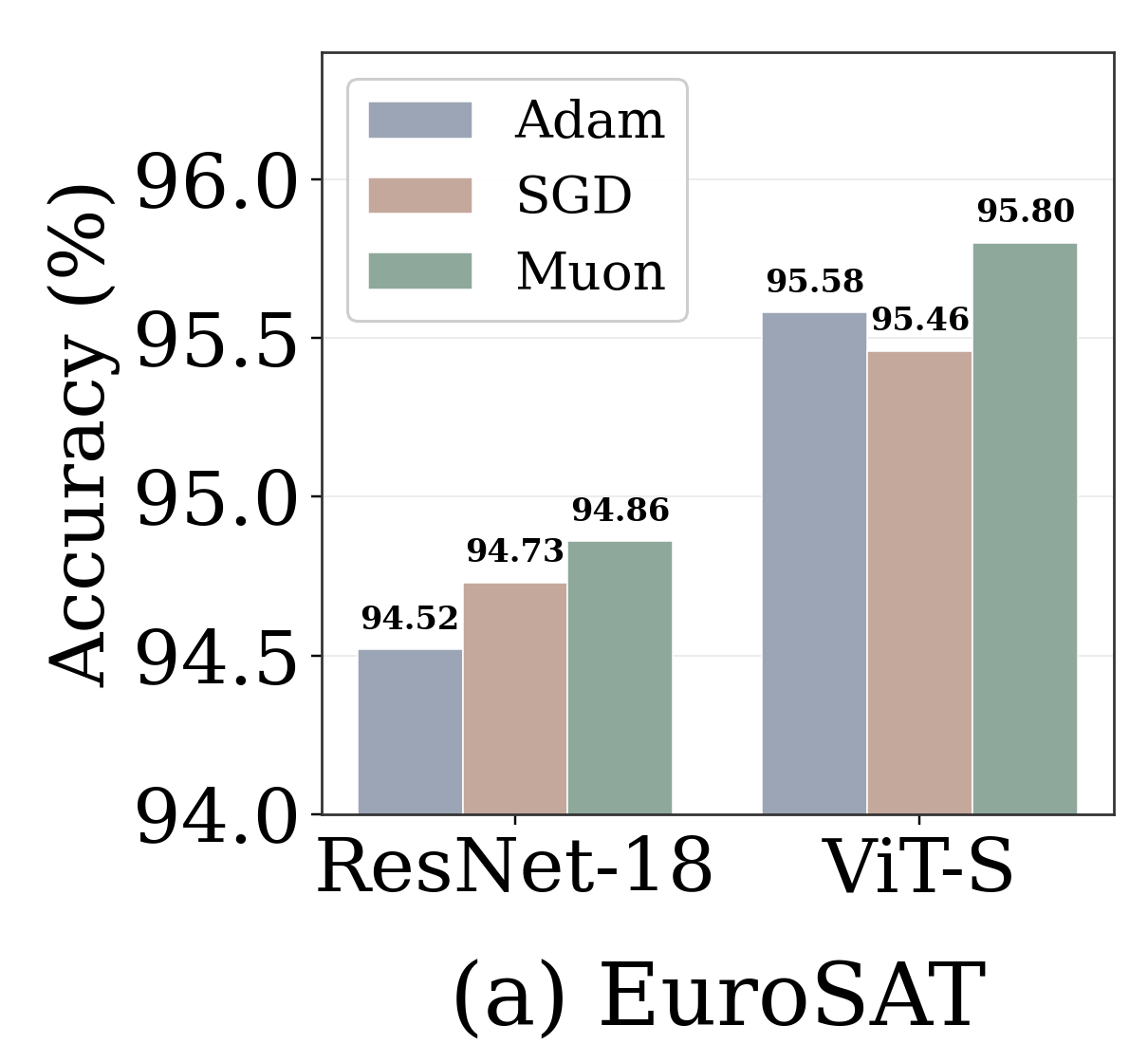}%
\includegraphics[width=0.2372\linewidth]{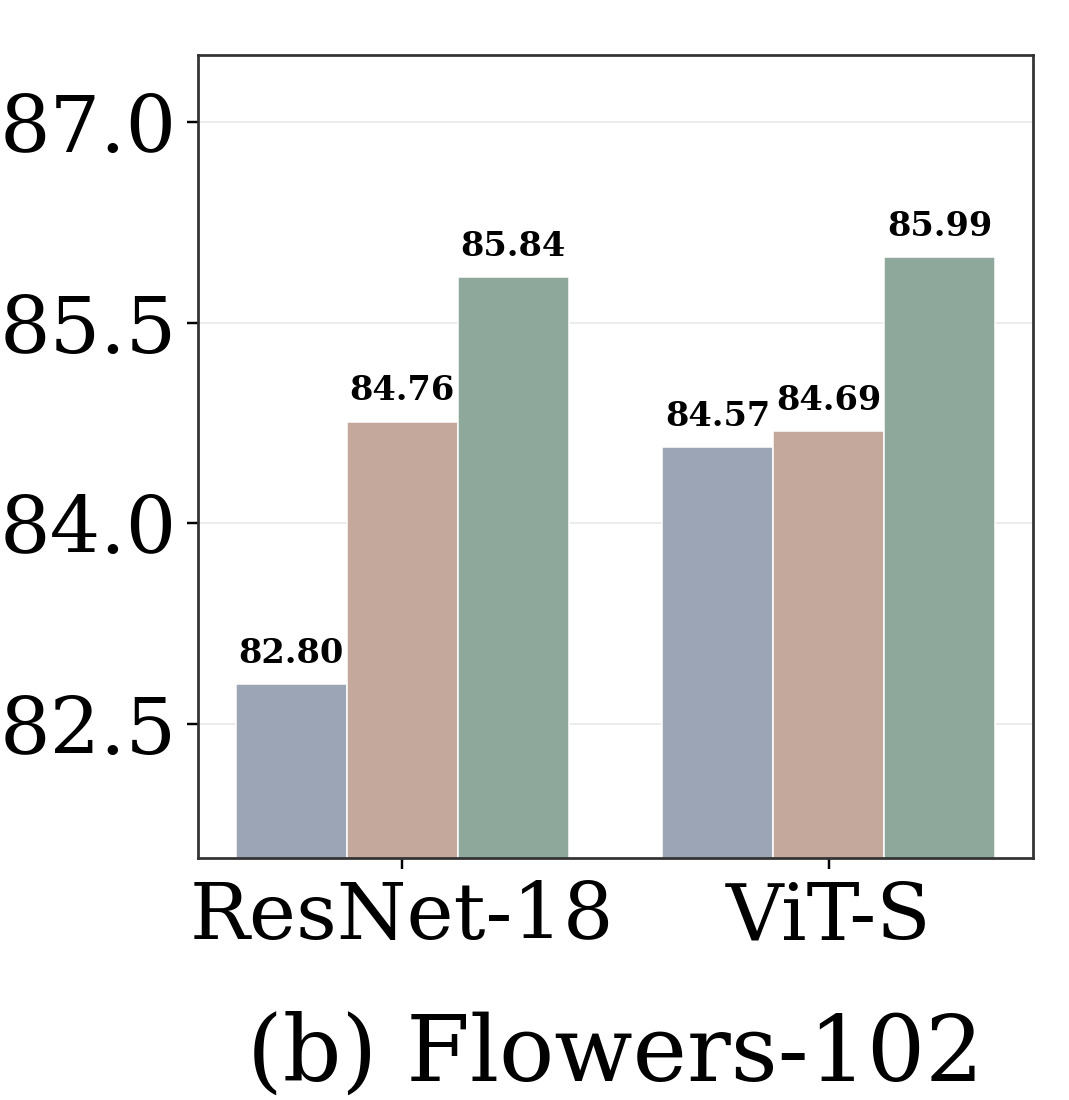}%
\includegraphics[width=0.2372\linewidth]{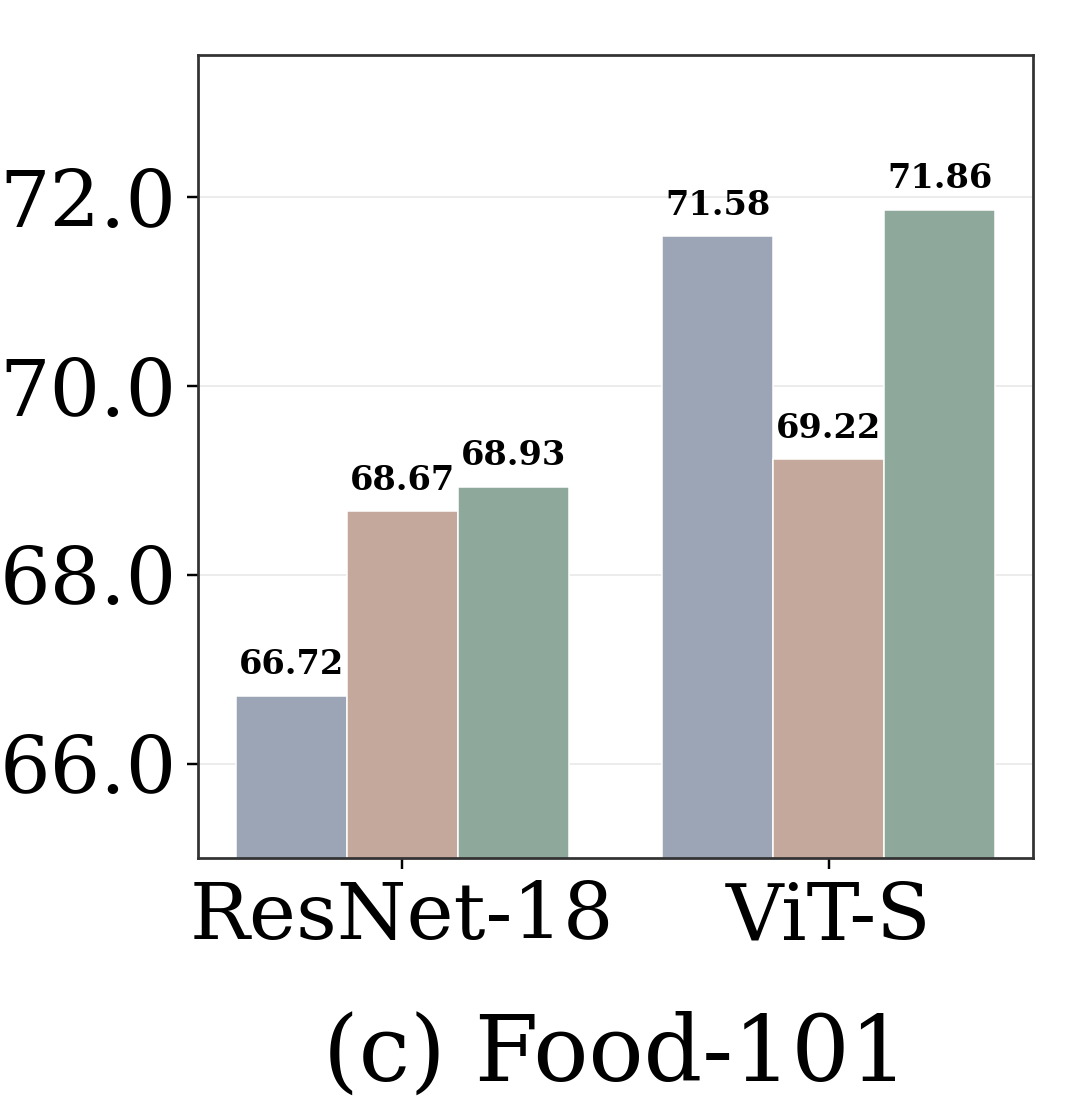}%
\includegraphics[width=0.2372\linewidth]{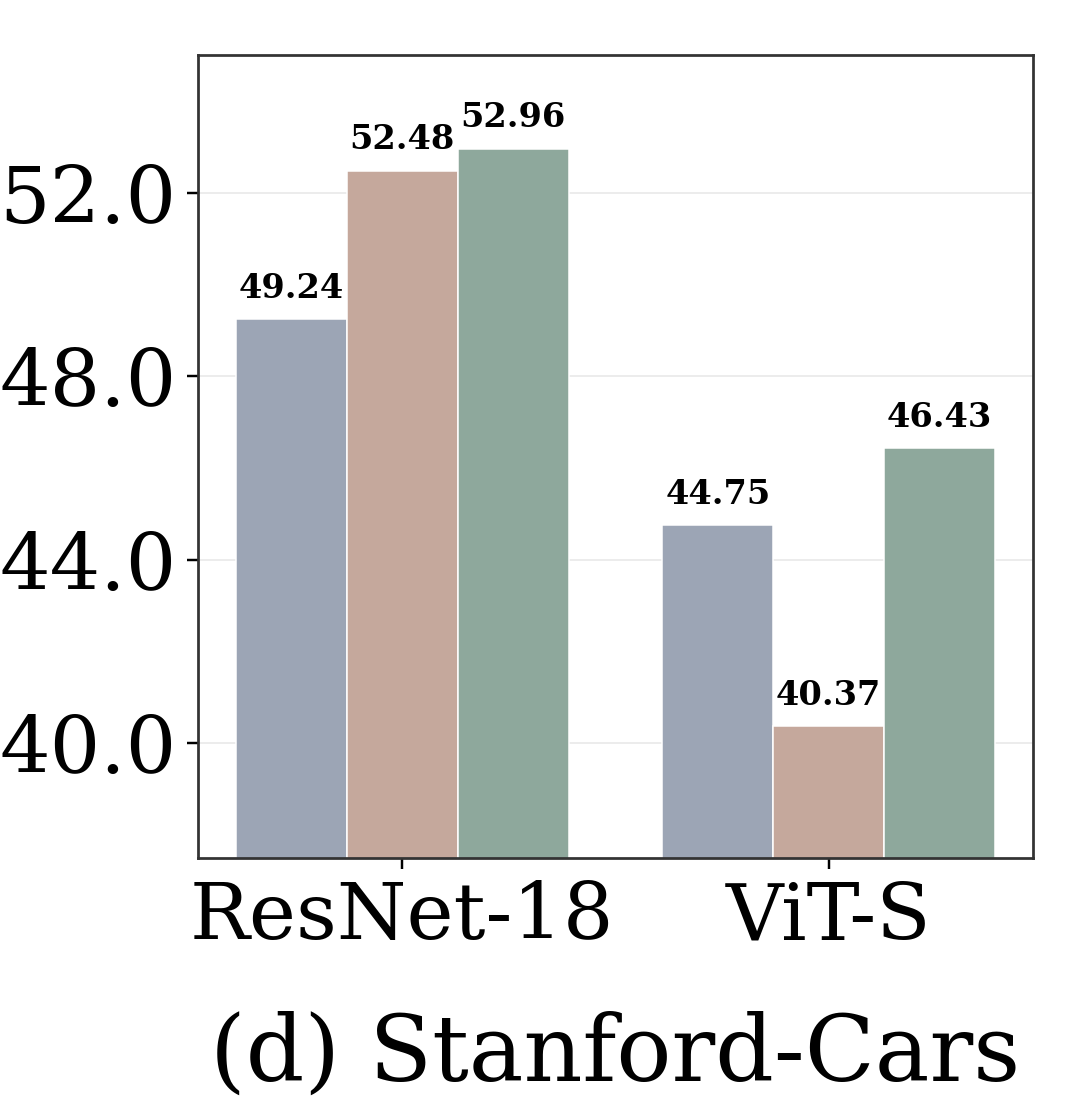}
    \caption{{Classification accuracy ($\uparrow$) across four downstream vision tasks.}
    Pretrained backbones are frozen; only the linear classifier is trained. Muon matches or surpasses Adam and SGD across all eight backbone--task combinations, indicating that its pretrained vision features transfer more effectively.}
    \label{fig:vision_transfer_acc}
\end{figure}

In this section, we evaluate the transferability of pretrained models by adapting them to downstream tasks. For vision experiments, we freeze the pretrained backbone, i.e., ResNet-18 or ViT-S, and train a linear classifier on the penultimate-layer hidden states for each downstream task, following standard protocols adopted in \citet{chen2020simple,he2022masked}. We consider a diverse set of downstream classification tasks: EuroSAT~\citep{helber2019eurosat}, Food-101~\citep{bossard2014food}, Oxford Flowers-102~\citep{nilsback2008automated}, and Stanford Cars~\citep{krause20133d}, covering satellite imagery, food categories, fine-grained flowers, and fine-grained vehicles. For language experiments, we perform supervised fine-tuning on the full parameters of the pretrained models using Stanford Alpaca~\citep{taori2023alpaca}, Databricks Dolly 15K~\citep{conover2023free}, and WizardLM Evol-Instruct~\citep{xu2023wizardlm}. To isolate the effect of the pretraining optimizer, we fine-tune both Muon- and Adam-pretrained models using the same Adam optimizer.

\vspace{5pt}
\noindent \textbf{Muon Learns More Transferable Features.} 
Figures~\ref{fig:vision_transfer_acc} and~\ref{fig:language_transfer_ppl} report downstream performance for vision and language tasks, respectively. In the vision experiments, Muon matches or surpasses Adam and SGD on all four benchmarks, namely EuroSAT, Flowers-102, Food-101, and Stanford Cars, across both backbones, ResNet-18 and ViT-S. Consistent with \citet{zhang2024transformers,kunstner2023noise,keskar2017improving}, our results show that SGD-pretrained CNNs transfer better than Adam-pretrained CNNs, whereas the opposite trend holds for transformers.  In the language experiments, Muon consistently yields lower perplexity than Adam across all three instruction-tuning datasets, Alpaca, Dolly, and WizardLM, and across different model sizes. We summarize this observation as follows. 

\begin{abox}
\label{obs:transfer}
The pretrained representations of Muon \kw{transfer more effectively} than those of Adam and SGD across vision tasks. 
Language models pre-trained by Muon outperform those by Adam in downstream tasks after  fine-tuning.
\end{abox}

\vspace{5pt}
\noindent
\textbf{Layer-wise feature spectrum analysis.} Observation~\ref{obs:transfer} confirms Muon's superior end-to-end transferability. Intuitively, this suggests that Muon learns richer and more diverse features. We next examine this hypothesis by analyzing the hidden states across layers. For the $\ell$-th layer, we stack the hidden states, i.e., the features, $z^{(\ell)}$, over all instances in the evaluation set to form a hidden-state matrix $Z^{(\ell)}$. We then study the richness of the features in $Z^{(\ell)}$ through spectral analysis. Let $\sigma(Z^{(\ell)})=(\sigma_1,\ldots,\sigma_{r_\ell})$ denote its singular values, where $\sigma_1\ge\sigma_2\ge\cdots\ge\sigma_{r_\ell}$ and $r_\ell$ is the number of nonzero singular values. We adopt two intuitive quantities to evaluate feature richness in $Z^{(\ell)}$: \kw{effective rank}~\citep{roy2007effective} and \kw{Top-$k$ energy fraction}~\citep{wang2025muon}. The effective rank is the exponential entropy of the distribution induced by the squared singular values, which is the effective rank applied to the Gram matrix: 
$$
\mathrm{eRank}(Z^{(\ell)})=\exp\bigg(-\sum_{i=1}^{r_\ell} q_i \log q_i\bigg), ~~~~\text{where}~~ q_i = {\sigma_i^2}/{\sum_{j=1}^{r_\ell}\sigma_j^2}.$$ It provides a continuous measure of how many singular directions are effectively used. The Top-$k$ energy fraction is the fraction of spectral energy captured by the largest $k$ singular values:
$$\mathrm{Top\text{-}k\,E}(Z^{(\ell)})=\sum_{i=1}^{k}\sigma_i^2/\sum_{j=1}^{r_\ell}\sigma_j^2.$$
 It measures how concentrated the representation energy is in the leading principal directions. Higher effective ranks and lower Top-$k$ energy indicate a more diverse and more uniformly distributed representation.
\begin{figure}[t]
    \centering
    \includegraphics[height=3.7cm]{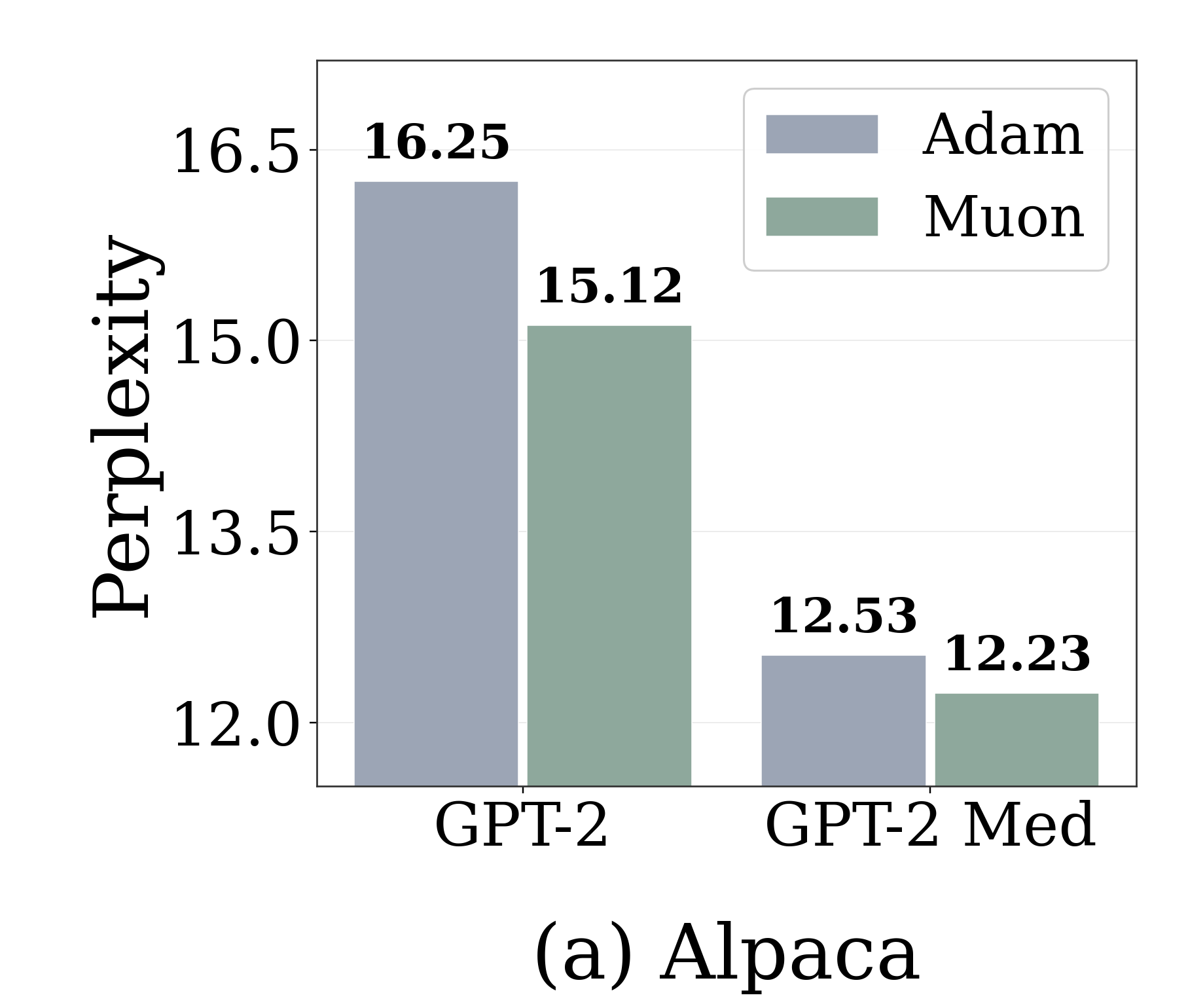}
    \hspace{0.02\linewidth}
\includegraphics[height=3.7cm]{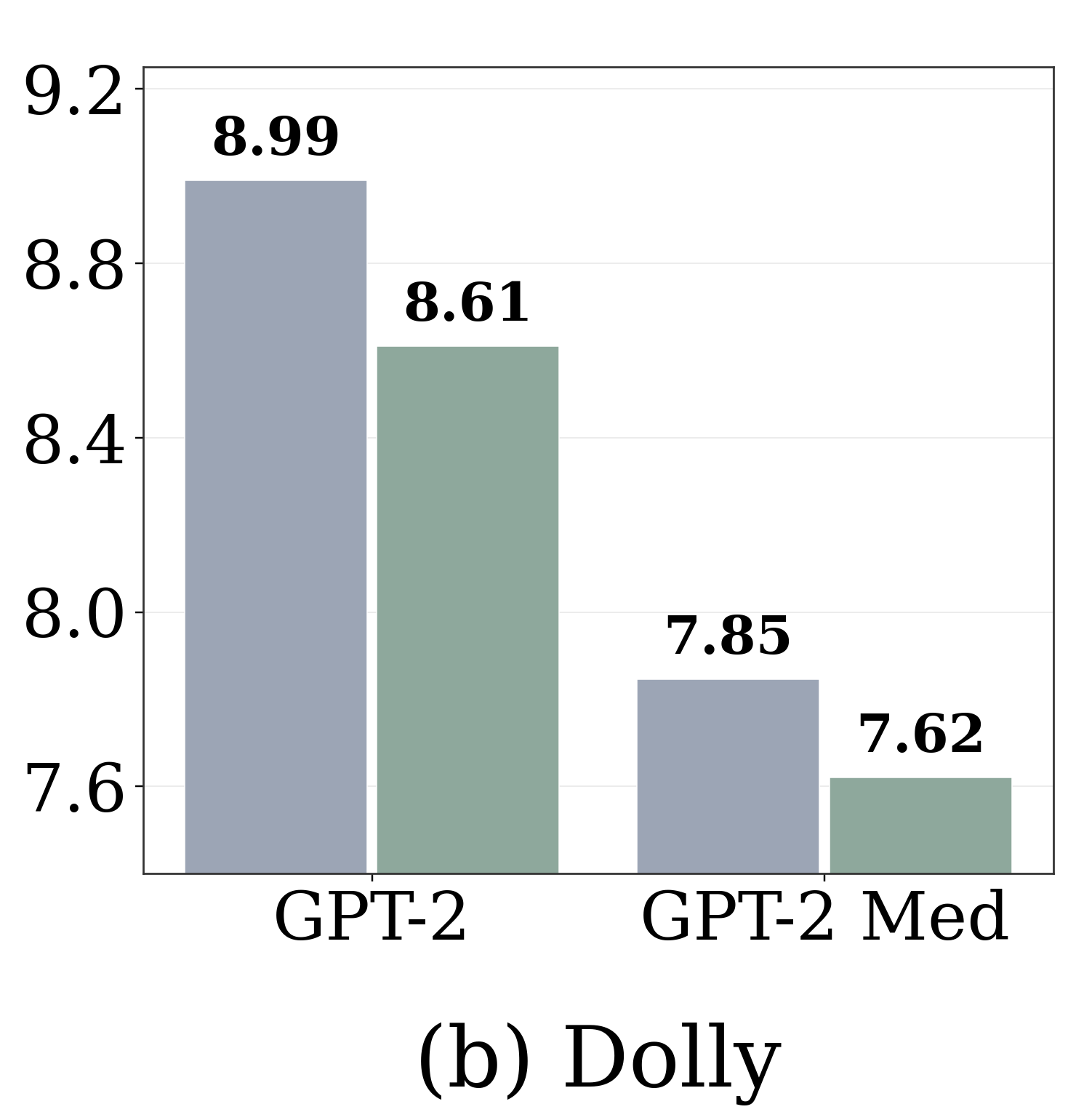}
\hspace{0.02\linewidth}
\includegraphics[height=3.7cm]{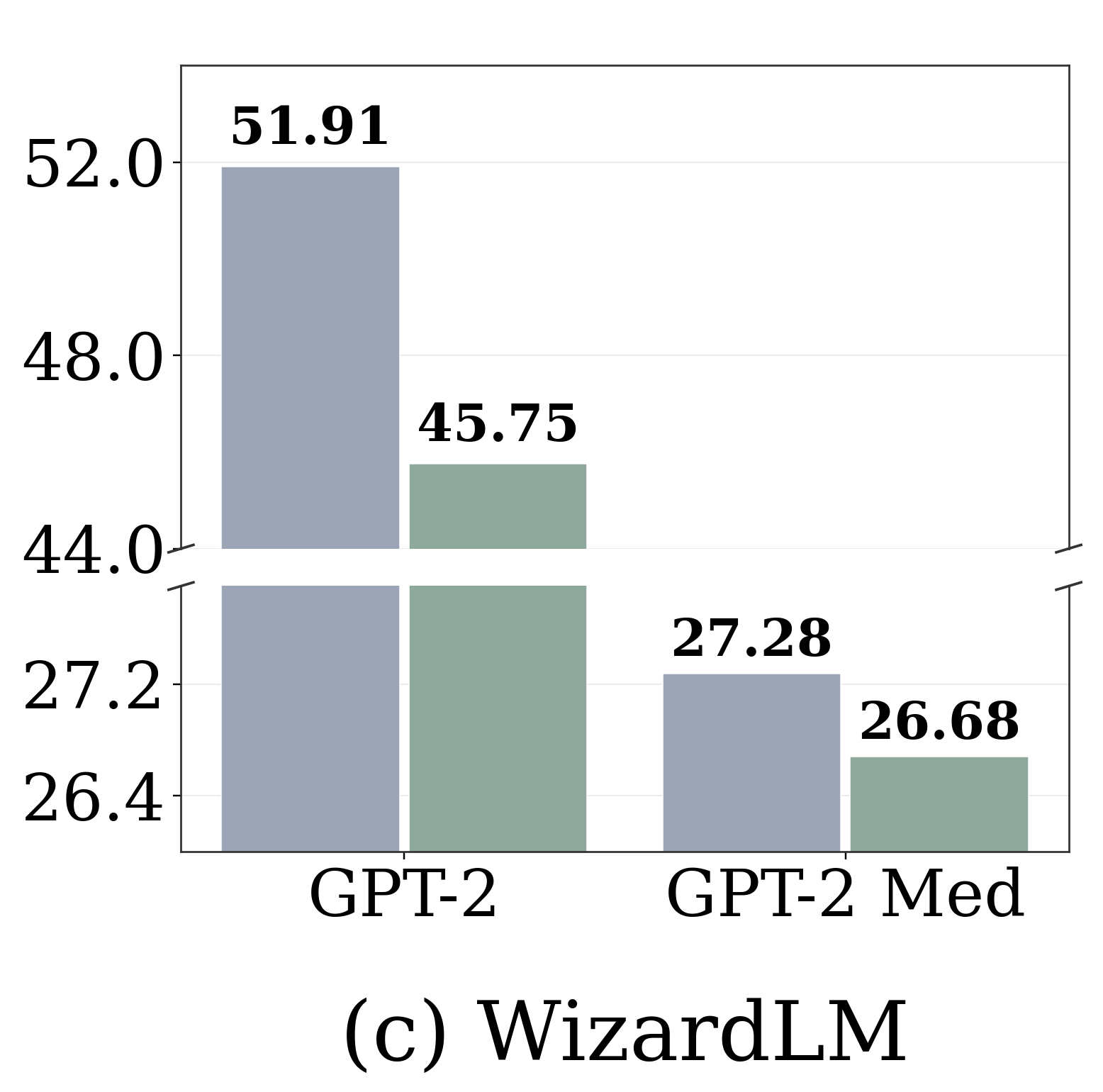}
    \caption{Instruction-tuning perplexity ($\downarrow$) on three downstream datasets. Each nanoGPT checkpoint is fully fine-tuned on each downstream dataset. Muon-pretrained models achieve lower perplexity than Adam-pretrained models across all benchmarks, indicating stronger transferability of the pretrained language representations.}
    \label{fig:language_transfer_ppl}
\end{figure}

As in Section~\ref{sec:robustness}, we focus on ViT-S and GPT-2, and report their layer-wise spectral metrics in Figure~\ref{fig:layerwise_erank}. In both architectures, Muon achieves a higher effective rank than Adam in the vast majority of layers while simultaneously exhibiting lower Top-10 energy, indicating that its hidden features are less dominated by a small number of principal directions. Additional results for GPT-2 Medium are reported in Appendix~\ref{app:24d_results}. For ViT-S, SGD exhibits a higher effective rank than Muon in some intermediate layers, namely layers 7--10, but this advantage does not persist into deeper layers, where Muon again achieves a more diverse spectrum.

\begin{wraptable}{r}{0.45\textwidth}
\vspace{-0.4em}
\centering
\footnotesize
\setlength{\tabcolsep}{3pt}
\renewcommand{\arraystretch}{0.92}
\caption{Depth-averaged spectral metrics across 12 layers. Bold: best.}
\vspace{-0.3em}
\label{tab:spectral_mean}
\resizebox{\linewidth}{!}{%
\begin{tabular}{lcccc}
\toprule
& \multicolumn{2}{c}{GPT-2} & \multicolumn{2}{c}{ViT-S} \\
\cmidrule(lr){2-3} \cmidrule(lr){4-5}
& eRank $\uparrow$ & Top-10E $\downarrow$ 
& eRank $\uparrow$ & Top-10E $\downarrow$ \\
\midrule
Adam & 11.12 & 0.863 & 1.86 & 0.964 \\
SGD  & ---   & ---   & 3.39 & 0.920 \\
Muon & \textbf{16.00} & \textbf{0.827} & \textbf{6.88} & \textbf{0.890} \\
\bottomrule
\end{tabular}%
}
\end{wraptable}
Averaging the spectral metrics across layers provides a clearer global picture of feature richness (Table~\ref{tab:spectral_mean}). In both architectures and under both metrics, Muon attains the most spectrally diverse representations. It achieves the highest mean effective rank and the lowest mean Top-10 energy. The gap is especially pronounced for ViT-S, where Muon's mean effective rank is more than three times that of Adam and roughly twice that of SGD. This consistency across vision and language models suggests that Muon's spectral advantage is not incidental to a particular architecture, but reflects a broader property of the features it learns. This leads to the following conclusion.

\begin{abox}
\label{obs:erank}
Muon learns more spectrally diverse representations than Adam and SGD across transformer layers, with higher \kw{effective rank} and lower \kw{Top-10 energy}.
\end{abox}

\begin{figure}[t]
\centering
\includegraphics[width=0.245\linewidth]{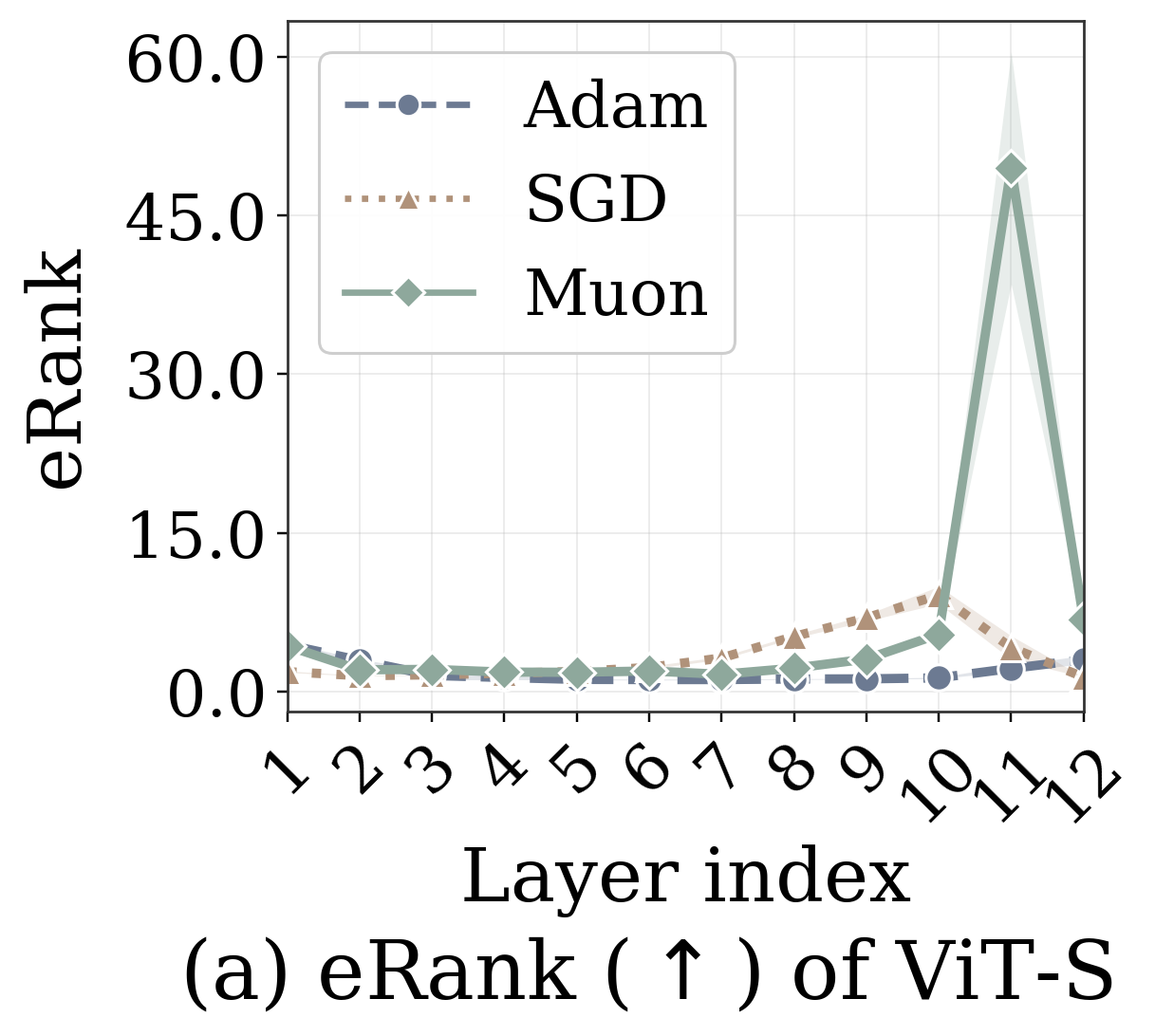}\hspace{-0.005\linewidth}%
\includegraphics[width=0.245\linewidth]{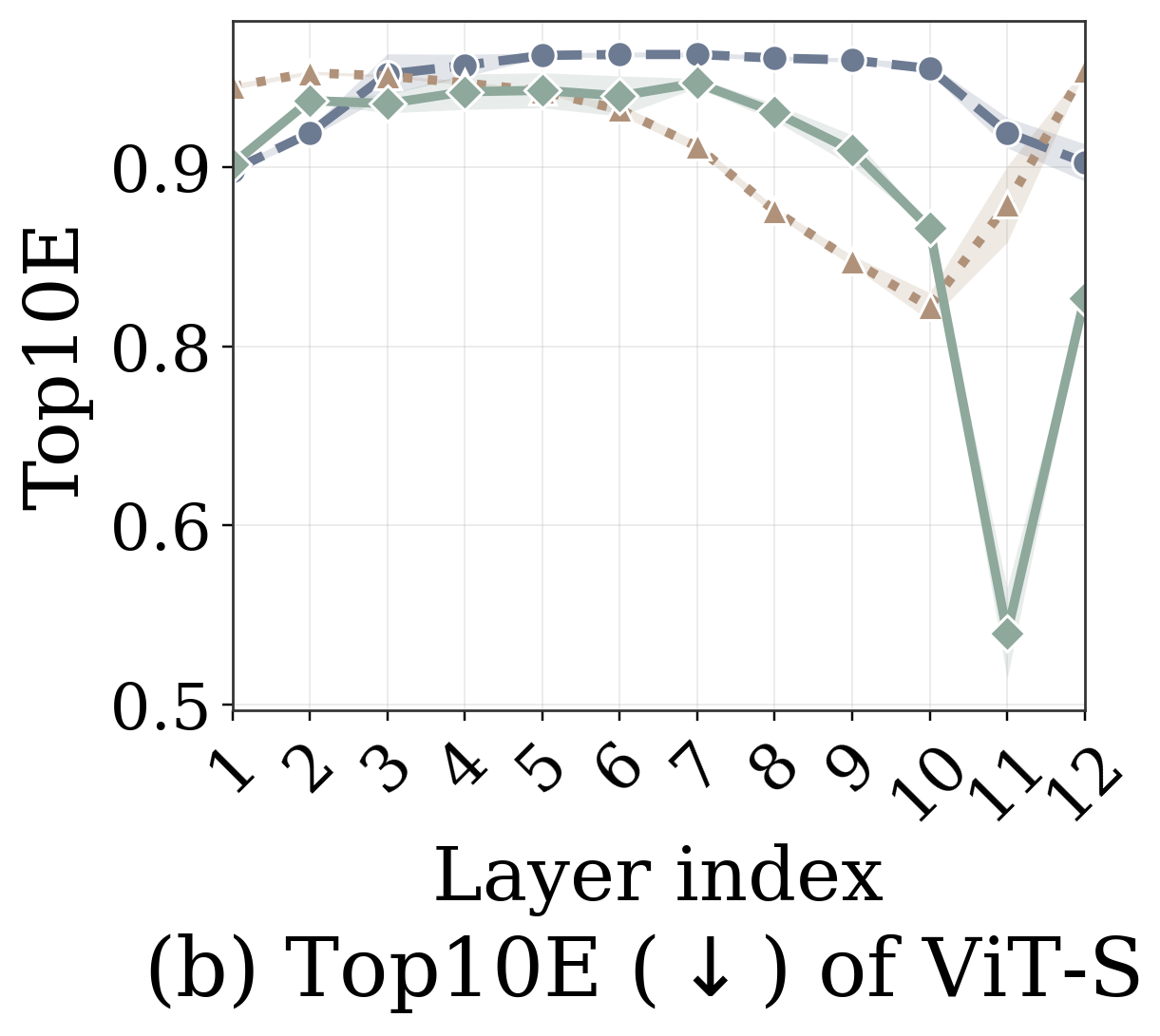}\hspace{-0.005\linewidth}%
\includegraphics[width=0.245\linewidth]{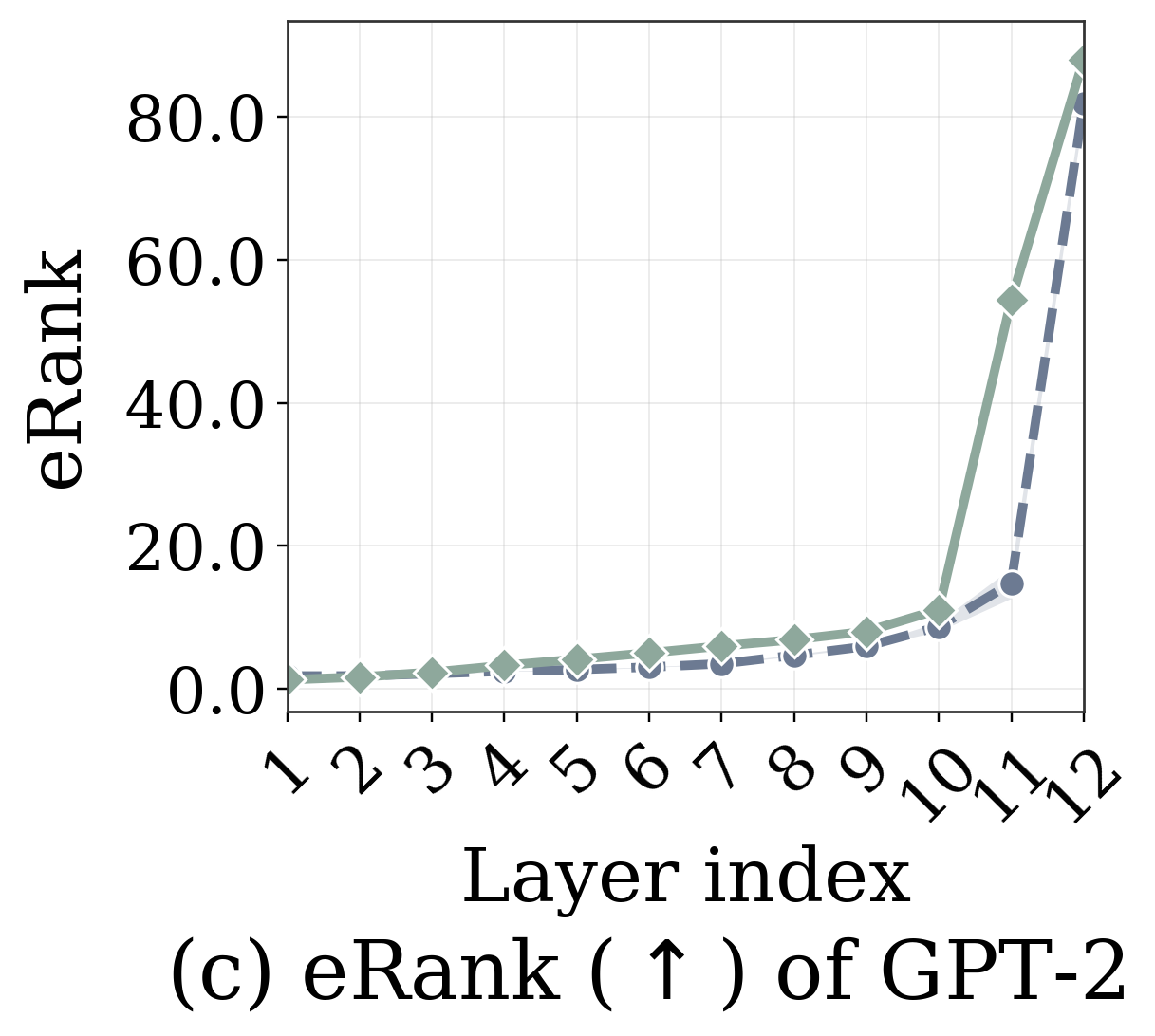}\hspace{-0.005\linewidth}%
\includegraphics[width=0.245\linewidth]{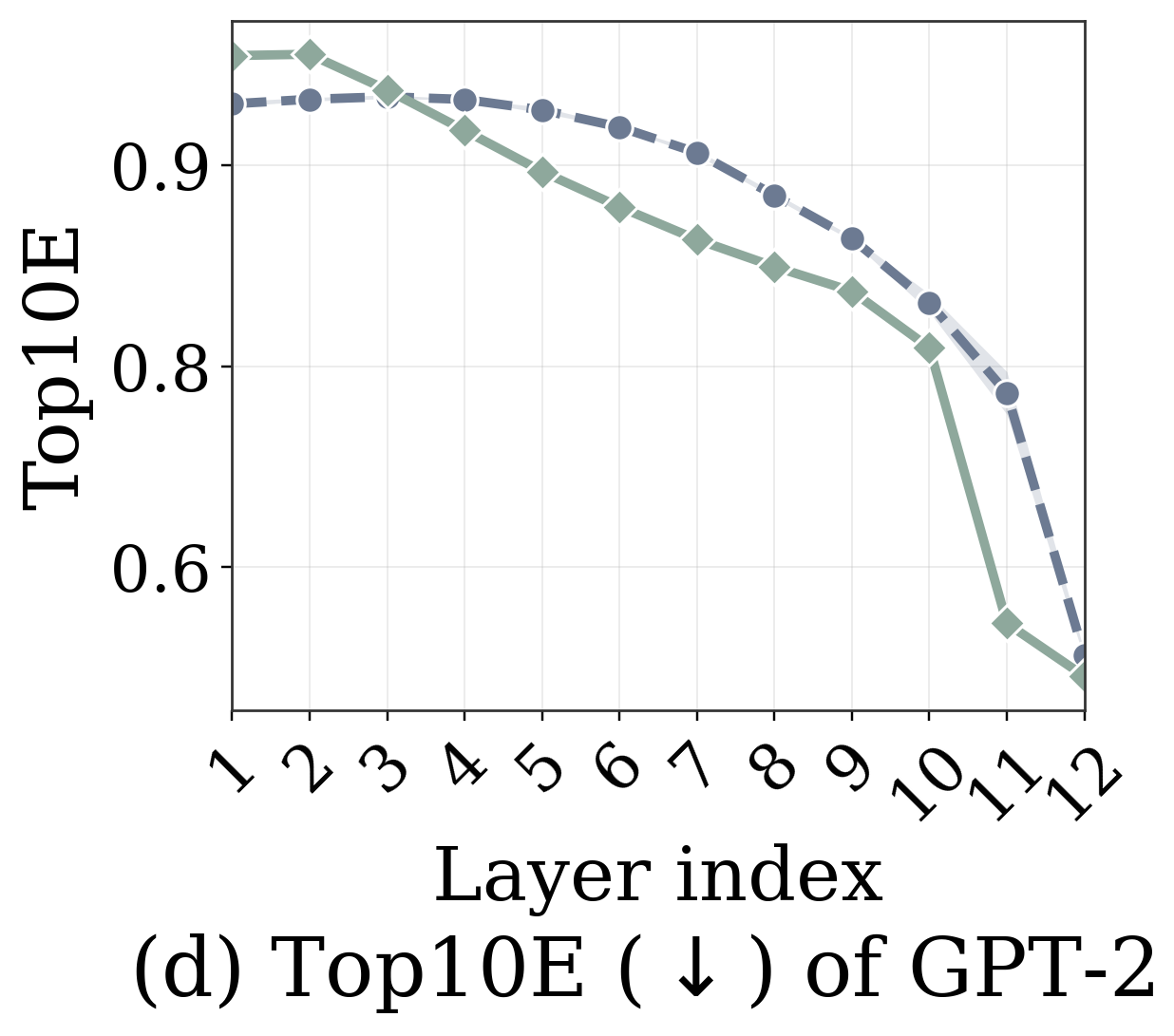}
\caption{
Layer-wise feature spectrum across transformer depth. Panels (a) and (b) report results for ViT-S, while panels (c) and (d) report results for GPT-2. Panels (a) and (c) show the effective rank of hidden representations, and panels (b) and (d) show the top-10 energy fraction. Shaded bands denote $\pm$ one standard deviation over three random seeds. Overall, Muon generally yields higher effective rank and lower Top-10 energy across layers, indicating more spectrally diverse hidden representations.}
\label{fig:layerwise_erank}
\end{figure}

%% file: main_paper/theory.tex
\section{A Case Study of One-Layer Models}
\label{sec:case_study}

Section~\ref{sec:experiments} empirically shows that the features learned by Muon are more robust and transferable than those learned by Adam and SGD, and this advantage is accompanied by larger margins and higher effective rank across layers of the network. In this section, we theoretically establish these properties in a stylized yet representative learning setting.

\subsection{A Classification Model with Multi-Component Features}
\label{subsec:model}

{\noindent \bf Data model.} We focus on a classification problem with class set $\calC=[C]$, where each class $c\in\calC$ is associated with a unique feature $x_c$. To capture the rich and complex features that arise in layer-wise representations of neural networks, we consider a fine-grained block decomposition across classes, together with a multi-component feature representation. This reflects the hierarchical structure common in real-world classification: in ImageNet, for instance, a block corresponds to a coarse superclass (e.g., dog), while the fine-grained classes within it are specific instances (e.g., golden retriever or beagle). Specifically, we decompose the class set as $\calC=[m]\times[n]$ with size $C=mn$, where $m$ denotes the number of blocks and $n$ denotes the number of fine-grained classes within each block. Each class can then be represented as $c=(g,r)$, where $g\in[m]$ denotes the block index and $r\in[n]$ denotes the class index within the block. We assume that the feature of each class $c=(g,r)$ has the following form.

 \begin{assumption}[Multi-component feature representation]
 \label{feature_assumption}
 For each class $c=(g,r)$, the input feature $x_c$  has two components, i.e., \(x_c=[a\cdot e_g,\; b \cdot e_{g,r}]^\top\in\mathbb R^{m+C},\, 0<a\le b,\) where $e_g \in \mathbb{R}^m$ and $e_{g,r}\in\mathbb{R}^C$ are one-hot vectors, and the coefficients $a,b$ control the strengths of the shared block-level and class-specific components, respectively.
 \end{assumption}

\input{figures/tikz/feature_structure.tex}
This assumption states that the features of each class consist of a block-level component and a class-specific component. The constraint $0<a\le b$ ensures that the class-specific component is at least as strong as the shared block-level component. The one-hot structure of $e_g$ and $e_{g,r}$ is imposed only for ease of computation and presentation. Our analysis of Muon still holds when $e_g$ and $e_{g,r}$ are replaced by orthonormal vectors. Under this assumption, the feature matrix that collects the features of all classes is
$F = [a\cdot R^\top,\, b\cdot I_C]^\top\in \mathbb{R}^{(m+C)\times C}$,
where $R\in\{0,1\}^{m\times C}$ is the coarse-membership matrix whose $c$-th column equals $e_g$ for $c=(g,r)$, and $I_C$ is the identity matrix. Figure~\ref{fig:feature_structure_tikz} illustrates an example of a feature matrix with $m=3$, $n=2$, and $C=6$.

\vspace{5pt}
{\noindent \bf Empirical support for Assumption \ref{feature_assumption}.} To examine this assumption in pretrained networks, we compute similarities among input-embedding vectors that represent different classes or token semantics. For ViT-S, each semantic group corresponds to an image class, and its representative vector is obtained by averaging the input embeddings of all images in that class. For GPT-2, we use the $1000$ most frequent tokens as semantic units, with each unit represented by its token embedding. For both ViT-S and GPT-2, we mean-center these representative vectors and measure their pairwise correlations using the absolute cosine similarity. If Assumption~\ref{feature_assumption} holds, the resulting similarity matrix should exhibit a block structure: classes that share the same block-level components $e_g$ should have high similarity, whereas classes from different blocks should have low similarity. Figure~\ref{fig:hierarchical_feature_empirical} plots the resulting similarity matrices for ViT-S and GPT-2. To make the block structure more visible, we apply hierarchical clustering to reorder the classes. More experimental details are deferred to Appendix~\ref{app:reorder}. The results show clear block patterns in the similarity maps of both ViT-S and GPT-2, providing empirical support for Assumption~\ref{feature_assumption}.


\vspace{5pt}
{\noindent \bf Classification model.} 
We adopt a linear classifier parameterized by $W \in \mathbb{R}^{C \times (m+C)}$ to classify the features $x_c$ for $c\in[C]$. Given $x_c$ as input, the classifier predicts its class according to the distribution $\soft(\Pi W x_c)$, where $\soft(\cdot)$ denotes the softmax operator and $\Pi=I_C-(1/C)\mathbf{1}\mathbf{1}^\top$ centers the logits so that their sum is zero. We apply $\Pi$ for ease of calculation, which does not affect the predicted distribution because softmax is invariant to constant shifts of the logits. When the class prior is uniform, the cross-entropy loss is
\begin{align}\label{eq:cross-entropy-loss}
\mathcal L(W)=-1/C\cdot\sum_{c\in\mathcal C}\log\bigl(\soft(\Pi W F)_{c,c}\bigr), \end{align}
where $\soft(\cdot)$ is applied column-wise.

\begin{figure}[t]
    \centering
    \begin{subfigure}[b]{0.32\textwidth}
        \centering
        \includegraphics[height=0.78\textwidth]{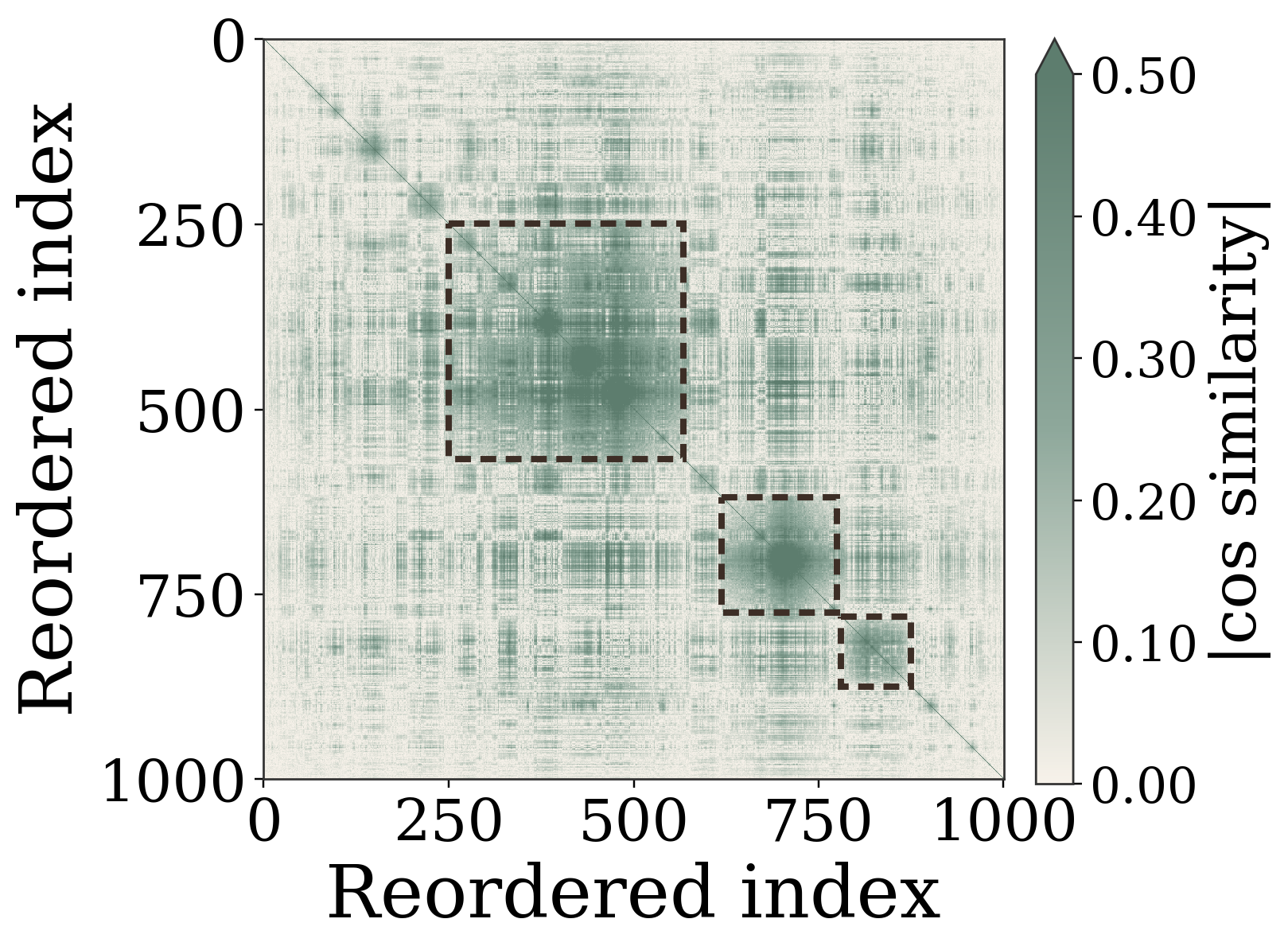}
        \caption{ViT-S}
        \label{fig:hierarchical_feature_empirical:vit}
    \end{subfigure}
    \hspace{3em}
    \begin{subfigure}[b]{0.32\textwidth}
        \centering
        \includegraphics[height=0.78\textwidth]{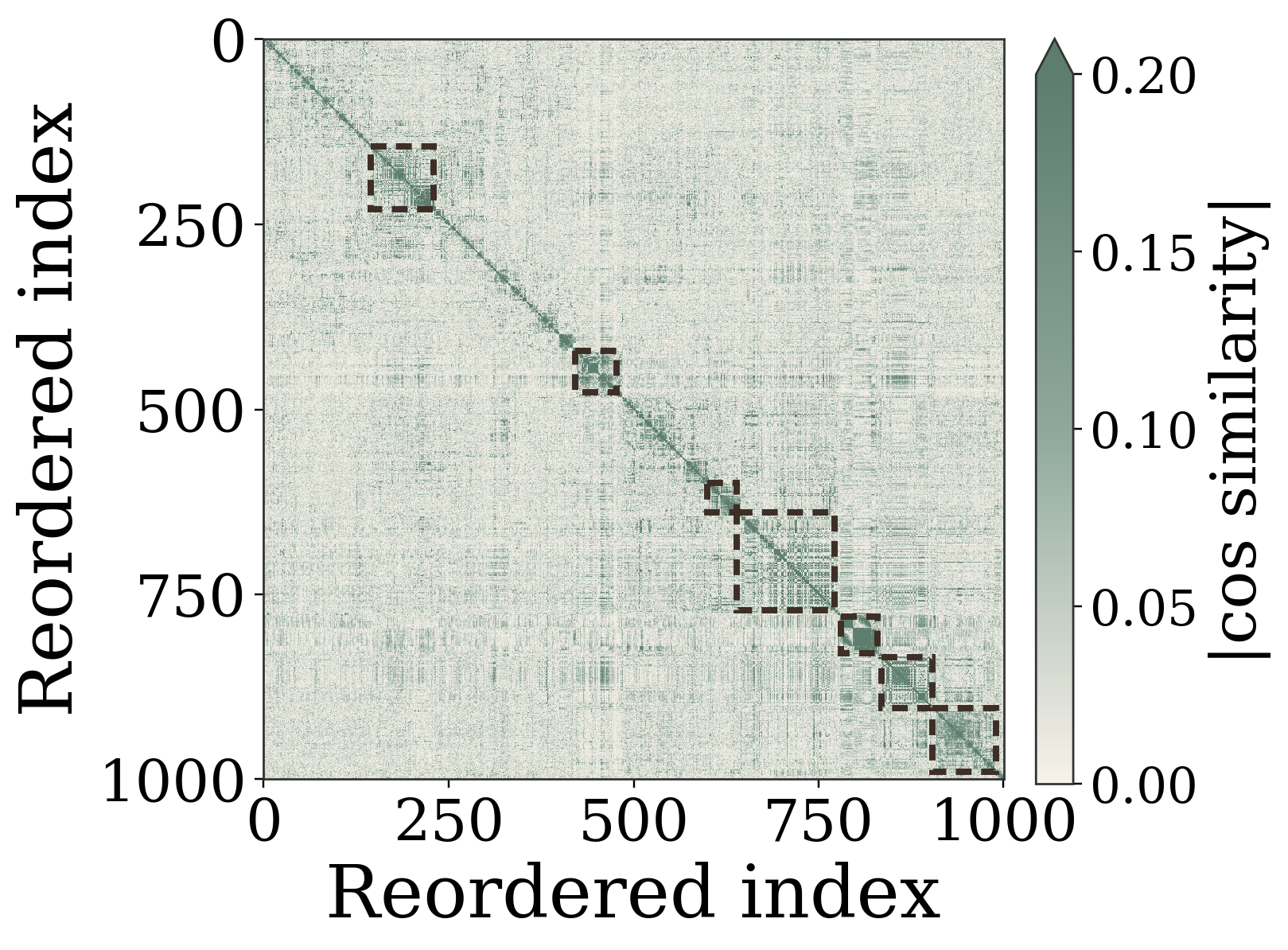}
        \caption{GPT-2}
        \label{fig:hierarchical_feature_empirical:gpt}
    \end{subfigure}
    \caption{%
        Pairwise absolute cosine similarity of centered input-embedding vectors, reordered by hierarchical clustering with optimal leaf ordering. The shared block structure across ViT-S and GPT-2 supports Assumption~\ref{feature_assumption}.
    }    \label{fig:hierarchical_feature_empirical}
\end{figure}

\vspace{5pt}
{\noindent \bf Training dynamics.}
To train the classifier, we initialize $W$ as the zero matrix and optimize it using three optimizers: GD, Adam, and Muon. To simplify the analysis and highlight the main intuition, we adopt infinitesimal step sizes and set momentum to zero, following \citet{zhang2024transformers,wang2025muon}. Concretely, the parameter flow dynamics induced by these optimizers are as follows.
\begin{itemize}[leftmargin=4em]
\item [(GD)] GD updates the parameters along the negative gradient flow:
$\dot W^{\mathsf{GD}}(t)=-\nabla_W\mathcal L(W^{\mathsf{GD}}(t)).$
\item [(Muon)] With momentum set to $0$, Muon updates the parameters using the spectrally normalized gradient:
$\dot W^{\mathsf{Muon}}(t)=-\spec(\nabla_W\mathcal L(W^{\mathsf{Muon}}(t)))$,
where $\spec(G)=UV^\top$ for $G=U\Sigma V^\top$, which normalizes all nonzero singular values $\Sigma$ of $G$ to $1$.

\item [(Adam)] With $\beta_1=\beta_2=0$, Adam reduces to a coordinate-wise normalized gradient flow:
$\dot W^{\mathsf{Adam}}(t)=-\sgn(\nabla_W\mathcal L(W^{\mathsf{Adam}}(t)))$,
where $\sgn(\cdot)$ denotes the entrywise sign operator.
\end{itemize}

To compare these optimizers fairly, we analyze the learned classifier at the first time its training loss reaches a given threshold $\epsilon$. For any optimizer $\mathsf{A}\in\{\mathsf{GD},\mathsf{Adam},\mathsf{Muon}\}$, define
$t_{\epsilon}^{\mathsf{A}}=\inf\{t\ge 0 \mid \mathcal L(W^{\mathsf{A}}(t))\leq\epsilon \}$. We denote the corresponding parameter by $W_{\epsilon}^{\mathsf{A}}=W^{\mathsf{A}}(t_{\epsilon}^{\mathsf{A}})$, which corresponds to the first parameter along the training dynamics that yields a loss value no more than $\epsilon$. 


\subsection{Theoretical results}
\label{subsec:theory}
To connect the model to the empirical observations in Section~\ref{sec:experiments}, we introduce two quantities: the \kw{classification margin}, which captures class separation, and the \kw{feature effective rank}, which captures the spectral diversity of representations. In the linear classifier, we define the representation of an input $x_c$ as the centered logits $\Pi W x_c$ induced by the learned weight $W$. Stacking these representations over all classes, we obtain the representation matrix $Z(W)=\Pi WF\in\mathbb{R}^{C\times C}$, whose $c$-th column is $\Pi W x_c$. This matrix serves as the natural analogue, in our analytical model, of the hidden-state representations probed in Section~\ref{sec:experiments}. The classification margin follows the layer-wise margin definition in  Eqn.~\eqref{eq:layerwise_margin}, with the dependence on the classifier $W$ made explicit. That is,  for each class $c\in\mathcal C$, we define the margin of it as
\begin{align*}
    \gamma(W;x_c,c)=(\Pi W x_c)_c-\max_{j\ne c}(\Pi W x_c)_j,
\end{align*}
where $(\Pi W x_c)_i$ denotes the $i$-th coordinate of $\Pi W x_c$. The average margin is defined as
$\Gamma(W)=1/C\cdot\sum_{c\in\mathcal C}\gamma(W;x_c,c)$.
For feature effective rank, we use $\mathrm{eRank}(Z(W))$, following the definition in Section~\ref{sec:transfer}. We first compare the margins induced by different optimizers. 

\begin{theorem}[Margin advantage of Muon]
\label{thm:main-margin}
Under Assumption~\ref{feature_assumption}, as $m,n\to\infty$ with $m=\Theta(n)$, we obtain the following results.

\begin{itemize}[leftmargin=2em, itemsep=2pt, topsep=2pt]
\item[1.] For every $\epsilon \in (0,\log C)$, at the first time each optimizer reaches loss at most $\epsilon$, Muon attains a strictly larger average margin than Adam:
\[
    \Gamma(W_\epsilon^{\mathsf{Muon}})>\Gamma(W_\epsilon^{\mathsf{Adam}}).
\]

\item[2.] Throughout most of the training, except when the target loss becomes extremely small, Muon attains a strictly larger margin than GD at the same loss threshold. Specifically, for
$\epsilon = \Omega\!\left(\exp(-c\sqrt n\log n)\right)$
with some constant $c>0$, we have 
\[
\Gamma(W_\epsilon^{\mathsf{Muon}})>\Gamma(W_\epsilon^{\mathsf{GD}}).
\]

\end{itemize}
\end{theorem}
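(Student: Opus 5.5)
The plan is to reduce all three flows to a two-dimensional dynamics using the permutation symmetry of the data model, and then compare trajectories on a common level set of the loss.

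\textbf{Symmetry reduction.} Write $W=[A,\,B]$ with $A\in\mathbb R^{C\times m}$ and $B\in\mathbb R^{C\times C}$. The loss~\eqref{eq:cross-entropy-loss} and the feature matrix $F$ are invariant under permuting blocks and permuting classes within a block. The zero initialization is invariant too, and each map $G\mapsto G$, $\spec(G)$, $\sgn(G)$ is equivariant under simultaneous row/column permutations. Hence each flow stays in the invariant subspace
\[
A=u\,R^\top+\text{(const)},\qquad B=v\,I_C+w\,R^\top R+\text{(const)}.
\]
For $c=(g,r)$, the centered logits $\Pi W x_c$ then take only three values: $p$ on the correct class, $q$ on the $n-1$ same-block classes, and $s$ on the $C-n$ other-block classes, subject to $p+(n-1)q+(C-n)s=0$. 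Set $x:=p-q=bv$ and $y:=q-s=au\cdot(\text{const})+bw$. While $x,y\ge0$, which holds along all three flows from zero initialization, we have
\[
\Gamma(W)=x,\qquad \mathcal L=\ell(x,y):=\log\!\bigl(1+(n-1)e^{-x}+(C-n)e^{-x-y}\bigr).
\]
At a fixed loss level $\ell(x,y)=\epsilon$, $x$ is a decreasing function of $y$. So a larger margin at matched loss is equivalent to a smaller value of $y$ when the trajectory first hits the level set. It therefore suffices to show that Muon's trajectory in the $(x,y)$-plane lies "below" those of Adam and GD, i.e.\ it has a smaller ratio $y/x$ at matched loss.

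\textbf{Computing the three flows.} The gradient is $-\tfrac1C\,\Pi(I_C-P)F^\top$, where $P=\soft(\Pi WF)$ has the same three-level structure with probabilities $\pi_p,\pi_q,\pi_s$. The matrix $\Pi(I_C-P)$ is diagonalized by three invariant subspaces: the constant vector, which is killed by $\Pi$; the block-level vectors $\mathrm{span}(R^\top)\ominus\mathbf 1$; and the within-block vectors. Its eigenvalues are $\lambda_1=1-\pi_p+\pi_q$ on the within-block subspace and $\lambda_2$, an explicit combination of $\pi_p,\pi_q,\pi_s$, on the block-level subspace. Multiplying by $F^\top$ gives singular values $b\lambda_1$ on the within-block part and $\sqrt{b^2+na^2}\,\lambda_2$ on the block-level part, with explicit singular vectors. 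This yields the three reduced flows:
\begin{itemize}
\item \emph{Muon.} $\spec$ sets both singular values to $1$, so $\dot x$ and $\dot y$ are constants determined only by $a,b,n$ and the singular-vector geometry. The trajectory is a straight ray of fixed slope $k_{\mathsf{Muon}}=y/x$, which involves $a/\sqrt{b^2+na^2}$-type factors.
\item \emph{Adam.} $\sgn$ is applied entrywise to $[a\,\Pi(I-P)R^\top,\ b\,\Pi(I-P)]$. Determining the sign pattern of the three distinct entry values of each block is the sign computation. It again gives constant velocities, hence a ray of slope $k_{\mathsf{Adam}}$. Because $\sgn$ ignores the factor $\sqrt n$ that spectral normalization divides out, I expect $k_{\mathsf{Adam}}$ to exceed $k_{\mathsf{Muon}}$ by a factor growing with $n$.
\item \emph{GD.} The velocities are $\dot x\propto b^2\lambda_1$ and $\dot y\propto(b^2+na^2)\lambda_2$. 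The block direction is amplified by roughly $n$, so $y/x$ starts near $\Theta(n)$ times Muon's slope.
\end{itemize}

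\textbf{Comparison, and the main obstacle.} Item 1 then follows because both Muon and Adam trajectories are rays and $k_{\mathsf{Muon}}<k_{\mathsf{Adam}}$. Since $\ell$ is monotone along rays, each ray meets the level set $\{\ell=\epsilon\}$ exactly once, for every $\epsilon\in(0,\log C)$, so the comparison holds at every loss level.

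Item 2 is the hard step, because GD's ratio $y/x$ is not constant. Once $y$ is large, $\pi_s$ becomes tiny, $\lambda_2$ decays like $e^{-x-y}$-terms, and GD subsequently grows mainly $x$. Its ratio $y/x$ therefore decreases over time and could eventually fall below $k_{\mathsf{Muon}}$. I would handle this by tracking the ODE in the regime $m=\Theta(n)\to\infty$. First, show that in an initial phase GD drives $y$ to order $\Theta(\sqrt n\log n)$ while $x$ stays $O(\log n)$. Second, show that afterwards $\dot y/\dot x$ is exponentially small, so catching up to Muon's slope requires $x\gtrsim c\sqrt n\log n$. At that point the loss is $\ell\approx(n-1)e^{-x}=\exp(-\Theta(\sqrt n\log n))$. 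This yields the threshold $\epsilon=\Omega(\exp(-c\sqrt n\log n))$. Making these two phase estimates rigorous, with explicit constants on the comparison of $y$ at matched loss, is the main technical work. I would first try bounding $dy/dx$ along GD by comparison with an explicitly solvable ODE in each phase.
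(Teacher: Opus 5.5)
Your item~1 is essentially the paper's argument: the symmetric reduction, straight rays for Muon and Adam, and a matched-loss margin that decreases in the slope (your $y/x$ equals $(\rho-1)/n$ for the paper's $\rho=v/u$). What you leave open there is routine. Every gradient entry has a fixed nonzero sign along the ray, $-\sgn(\nabla_W\mathcal L)=[\,2R^\top-\mathbf 1\mathbf 1^\top,\ 2I_C-J\,]$. So Adam has $y/x=a/b$, Muon has $y/x=(\sqrt{a^2n+b^2}-b)/(nb)$, and $an+b>\sqrt{a^2n+b^2}$ gives the ordering.

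The genuine gap is item~2. You defer exactly the part that constitutes the proof, and the phase picture you sketch is wrong. In your coordinates the GD orbit satisfies the autonomous scalar equation
\[
\frac{dy}{dx}=h(y):=\frac1n\cdot\frac{(1+m\delta)e^{-y}-1}{1+(m-1)e^{-y}},\qquad \delta=\frac{a^2n}{b^2}.
\]
The quantity whose velocity is proportional to $(b^2+na^2)\lambda_2$ is the correct-versus-other-block gap $x+ny$, not $y$. The function $h$ is decreasing, with $h(0)=a^2/b^2$ and $h(\log(1+m\delta))=0$. Hence, for all time, $y\le (a^2/b^2)x$ and $y\le\log(1+m\delta)=\Theta(\log n)$. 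So GD can never drive $y$ to $\Theta(\sqrt n\log n)$ while $x=O(\log n)$. Likewise, GD's initial slope exceeds Muon's slope $\theta/n$, with $\theta=\sqrt{1+\delta}-1$, by a factor $\Theta(\sqrt n)$, not $\Theta(n)$. A two-phase analysis built on your stated scales would not go through.

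The missing idea is a barrier argument, which needs no phase estimates at all. Using $1+\delta=(1+\theta)^2$, one checks that $h(y)>\theta/n$ if and only if $y<\log(1+m\theta)$. Since $y$ is nondecreasing, this gives $y>(\theta/n)x$ for every $0<x<U_\star:=\tfrac n\theta\log(1+m\theta)$. Indeed, integrate $h>\theta/n$ until $y$ reaches $\log(1+m\theta)$, and afterwards use $y\ge\log(1+m\theta)>(\theta/n)x$. Because $y\ge0$, the loss satisfies $\ell\le\log(1+mne^{-x})<mne^{-x}$. Therefore every level $\epsilon\ge\epsilon_\star:=mn(1+m\theta)^{-n/\theta}$ is first reached with $x<U_\star$, where GD's slope exceeds Muon's. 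Finally, $\epsilon_\star=\exp(-\Theta(\sqrt n\log n))$ when $m=\Theta(n)$. This is the paper's proof. There it is phrased as a first-hitting-time argument for $g=v-(1+\theta)u$; with the paper's $u=x$ and $v=x+ny$, this is $g=ny-\theta x$. On $\{g=0\}$ one has $\dot g\propto(1+m\theta)e^{-\theta u/n}-1>0$ whenever $u<U_\star$.
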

The proof is provided in Appendix~\ref{app:proof-margin}. The exponentially small lower bound on the target loss rules out an over-trained regime in which optimization is run for excessively long times~\citep{soudry2018implicit}. Our focus is instead on standard pretraining regimes, such as settings in Section~\ref{sec:experiments} where the Chinchilla ratio is below $10$. Our results establish that, \kw{at any matched loss threshold, Muon attains a strictly larger margin than Adam}. The same advantage holds over GD outside an exponentially small terminal-loss window. These results provide theoretical support for Observation~\ref{obs:margin} in Section~\ref{sec:robustness}, where Muon has a larger margin than Adam and SGD across most layers. The main intuition of our proof is that Muon's spectrally normalized gradient extracts more balanced information from orthogonal feature directions than Adam and GD. By contrast, Adam and GD tend to over-rely on either $e_g$ or $e_{g,r}$. Thus, Muon's class separation is supported by richer multi-component features, whereas that of Adam and GD is mainly driven by a single component of the representation. Quantitatively, this leads Muon to attain the largest margin.



\begin{theorem}[Effective-rank advantage of Muon]
\label{thm:erank-merged} 
Under Assumption~\ref{feature_assumption}, as $m,n\to\infty$ with $m=\Theta(n)$, we obtain the following results.

\begin{itemize}[leftmargin=2em, itemsep=2pt, topsep=2pt]
\item[1.]
For every $\epsilon\in(0,\log C)$, at the first time each optimizer reaches the loss of at most $\epsilon$, the representations learned by Muon attain a strictly higher effective rank than those learned by Adam at the same loss threshold; that is,
\[
    \mathrm{eRank}( Z(W_\epsilon^{\mathsf{Muon}}))
>
\mathrm{eRank}( Z(W_\epsilon^{\mathsf{Adam}})).
\]

\item[2.]
Throughout most of the training, except when the target loss becomes extremely small, the representations learned by Muon have a strictly higher effective rank than those learned by GD at the same loss threshold. Specifically, for 
$\epsilon = \Omega\!\left(\exp(-c\sqrt n\log n)\right)$
with some constant $c>0$, we have 
\[
    \mathrm{eRank}( Z(W_{\epsilon}^{\mathsf{Muon}}))
>
\mathrm{eRank}( Z(W_{\epsilon}^{\mathsf{GD}})).
\]
\end{itemize}
\end{theorem}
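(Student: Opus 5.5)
The plan is to reduce everything to a two-parameter description of the representation matrix $Z(W)=\Pi WF$, using the symmetry of the data model. The group of permutations that permute blocks and permute classes within each block fixes $F$ (after the matching permutation of rows of $F$) and fixes the loss. Since all three flows start from $W=0$ and are equivariant under this group, $W^{\mathsf A}(t)$ stays in the commutant for every $\mathsf A\in\{\mathsf{GD},\mathsf{Adam},\mathsf{Muon}\}$: $\nabla\mathcal L$, $\spec(\cdot)$ and $\sgn(\cdot)$ all commute with simultaneous orthogonal permutations.

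Consequently $Z(W^{\mathsf A}(t))$ lies in the span of $I_C$, $B:=R^\top R$ and $\mathbf 1\mathbf 1^\top$, and it is centered by $\Pi$. I would write it as
\begin{align*}
Z = p\,\bigl(I_C - B/n\bigr) + q\,\bigl(B/n - \mathbf 1\mathbf 1^\top/C\bigr),
\end{align*}
which is exactly the parametrization I expect the proof of Theorem~\ref{thm:main-margin} to produce. In this form $Z$ is symmetric with eigenvalue $p$ on the within-block contrast space, of dimension $C-m$, eigenvalue $q$ on the between-block contrast space, of dimension $m-1$, and eigenvalue $0$ on $\mathbf 1$. The loss, the margin and the effective rank are therefore all explicit functions of $(p,q)$. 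With $\rho=q/p$, the squared-singular-value distribution puts mass $s(\rho)=(m-1)\rho^2/\bigl((C-m)+(m-1)\rho^2\bigr)$ uniformly on $m-1$ directions and $1-s$ uniformly on $C-m$ directions. Hence
\begin{align*}
\log\mathrm{eRank}(Z) = h(s) + (1-s)\log(C-m) + s\log(m-1),
\end{align*}
where $h$ is the binary entropy.

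The first step is to show that $s\mapsto \log\mathrm{eRank}$ is strictly concave and maximized at $s^\ast=(m-1)/(C-1)$, i.e.\ at $\rho=1$. Consequently $\mathrm{eRank}$ is strictly increasing in $\rho$ on $(0,1]$ and strictly decreasing on $[1,\infty)$. This reduces both parts of the theorem to comparing the ratio $\rho$ at the matched-loss times $t_\epsilon^{\mathsf A}$: Muon should have $\rho$ on the correct side of the other optimizer's $\rho$ relative to $1$, or closer to $1$ in the $\mathrm{eRank}$ sense. I would reuse the closed-form or ODE description of $(p^{\mathsf A}(t),q^{\mathsf A}(t))$ from the margin proof in Appendix~\ref{app:proof-margin} rather than recompute it.

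For Muon, the spectral normalization makes the within-block and between-block directions of the update grow at rates set only by the geometry of $F$ (the weights $a$ and $b$) and not by gradient magnitudes. I therefore expect $\rho^{\mathsf{Muon}}(t)$ to be an explicit constant, or nearly so, of order one, determined by $a/b$ and $n$. For Adam, the sign flow moves every entry of $W$ at unit speed. The block-level columns of $W$ aggregate $n$ classes each, so the between-block eigenvalue $q$ is inflated by a factor of order $n$ relative to $p$, giving $\rho^{\mathsf{Adam}}\gg 1$ for every $\epsilon\in(0,\log C)$. Together with the unimodality and $\rho^{\mathsf{Muon}}$ sitting near $1$, this gives part 1 uniformly in $\epsilon$.

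For GD, $\rho^{\mathsf{GD}}(t)$ is time-varying. Early on it is governed by the ratio of the singular values of the gradient, roughly the squared strengths $n a^2+b^2$ versus $b^2$ on the two subspaces, which is far from $1$. Late in training the softmax saturates and the direction drifts toward the max-margin direction, so $\rho^{\mathsf{GD}}$ slowly approaches a Muon-like value.

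The main obstacle is the GD part: quantifying how long $\rho^{\mathsf{GD}}$ stays bounded away from $\rho^{\mathsf{Muon}}$. First I would solve the two-dimensional GD ODE asymptotically in $n$, in the regime where the block-level component is already saturated. I would then show that the gap in $\rho$ only closes after $p$ reaches order $\sqrt n\log n$, and translate this into a loss level of $\exp(-c\sqrt n\log n)$ via $\mathcal L\approx e^{-\Theta(p)}$. Above that threshold, the strict monotonicity of $\mathrm{eRank}$ in $\rho$ yields part 2. The asymptotics $m=\Theta(n)\to\infty$ are used to turn all the $(1+o(1))$ comparisons into strict inequalities.
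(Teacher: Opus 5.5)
Your reduction to $Z=uP_f+vP_c$ (your $(p,q)$) matches the paper's Step~1. Your effective-rank computation also matches its Step~2: a two-level spectrum whose entropy is strictly concave in the mass $s$, hence unimodal in $\rho$ with peak at $\rho=1$. Part~1 then follows from the closed forms $\rho^{\mathsf{Muon}}=\sqrt{a^2n+b^2}/b$ and $\rho^{\mathsf{Adam}}=1+an/b$, but your justification is off. With $\delta=a^2n/b^2$, $\rho^{\mathsf{Muon}}=\sqrt{1+\delta}$ is $\Theta(\sqrt n)$, neither of order one nor near $1$. What is actually used is $1<\sqrt{1+\delta}<1+an/b$ (square both sides), so both ratios lie on the decreasing branch and Muon's is the smaller.

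The genuine gap is Part~2. You state the target (the $\rho$-gap closes only once $u$ reaches order $\sqrt n\log n$) but give no mechanism, and your heuristic picture does not produce it. $\rho^{\mathsf{GD}}$ starts at $1+\delta=(\rho^{\mathsf{Muon}})^2$ and decreases toward $1$, the max-margin value, not toward a ``Muon-like'' value. It therefore crosses $\rho^{\mathsf{Muon}}$ at a finite time, after which GD has the larger effective rank; this is why the terminal window must be excluded. The $\sqrt n$ in the crossing scale comes precisely from $\theta:=\rho^{\mathsf{Muon}}-1=\Theta(\sqrt n)$. Were $\rho^{\mathsf{Muon}}$ of order one, as you expect, the same dynamics would cross only at $u\asymp n\log n$. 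So your scale is read off the theorem rather than derived, and ``solve the ODE asymptotically in the saturated regime'' leaves open how the crossing is located rigorously.

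The missing idea is a barrier (first-hitting-time) argument, which needs no asymptotic solution. Put $w=v-u$, $q=u+w/n$, $D=1+(n-1)e^{-u}+n(m-1)e^{-q}$ and $g=v-(1+\theta)u$. Writing $e^{-q}=e^{-u}e^{-w/n}$ in the GD system for $(u,v)$ and using $1+\delta=(1+\theta)^2$ gives
\[
\dot u=\frac{b^2e^{-u}}{mD}\bigl(1+(m-1)e^{-w/n}\bigr),\qquad
\dot g=\frac{b^2e^{-u}}{mD}\,(1+\theta)\bigl((1+m\theta)e^{-w/n}-1\bigr).
\]
On $\{g=0\}$ one has $w=\theta u$, so $\dot g>0$ there whenever $u<U_\star=(n/\theta)\log(1+m\theta)$. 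Since $g>0$ for small $t>0$, the first zero of $g$ requires $u\ge U_\star$. At that moment $\mathcal L\le\log(1+mne^{-u})<mne^{-U_\star}=mn(1+m\theta)^{-n/\theta}=:\epsilon_\star$. Monotonicity of the GD loss then yields $\rho^{\mathsf{GD}}>\rho^{\mathsf{Muon}}$ whenever $\mathcal L\ge\epsilon_\star$. This holds exactly rather than up to $(1+o(1))$ factors; only the last step, $\epsilon_\star=\exp(-\Theta(\sqrt n\log n))$, uses $m=\Theta(n)$.

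If you prefer your solve-the-ODE route, the observation that makes it tractable is that the factor $b^2e^{-u}/(mD)$ cancels in $dw/du=\bigl((1+m\delta)e^{-w/n}-1\bigr)/\bigl(1+(m-1)e^{-w/n}\bigr)$. So $w$ is an increasing concave function of $u$ with limit $n\log(1+m\delta)$. This shows that $\rho^{\mathsf{GD}}=1+w/u$ decreases monotonically to $1$ and places the crossing at $u=\Theta(\sqrt n\log n)$.
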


Similar to Theorem~\ref{thm:main-margin}, the comparison between Muon and GD focuses on target losses above an exponentially small lower bound. Our results show that, \kw{at any matched loss threshold, the representations learned by Muon attain strictly higher effective rank than those learned by Adam}. The same advantage holds over GD outside an exponentially small terminal-loss window. These results provide theoretical support for Observation~\ref{obs:erank} in Section~\ref{sec:transfer}, where Muon achieves a higher average effective rank than Adam and SGD across layers. The main intuition is that Muon's spectral normalization equalizes the gradient contribution across orthogonal directions, thereby reducing spectral imbalance and improving feature diversity as measured by effective rank. In contrast, GD follows the raw gradient and preserves its imbalanced amplitudes, while Adam applies coordinate-wise normalization that does not fully balance the representation in a spectral sense. Consequently, under matched-loss comparison, Muon attains the highest effective rank among the considered optimizers. We next provide a proof sketch that mathematically formalizes these intuitions.
\input{main_paper/proof_sketch}


%% file: figures/tikz/feature_structure.tex
\begin{figure}[t]
\centering
\begin{tikzpicture}[
    >=stealth,
    cell/.style={draw, minimum width=0.8cm, minimum height=0.8cm, font=\footnotesize},
    brace/.style={decorate, decoration={brace, amplitude=5pt, raise=2pt}},
    braceleft/.style={decorate, decoration={brace, amplitude=5pt, raise=2pt, mirror}},
]


\node[font=\small\bfseries] at (4.0, 5.2) {Feature matrix $F^\top \in \mathbb{R}^{6 \times 9}$};

\node[cell, fill=morandibluedeep!35] at (0.0, 4.2) {$a$};
\node[cell, fill=gray!8] at (0.8, 4.2) {$0$};
\node[cell, fill=gray!8] at (1.6, 4.2) {$0$};
\node[cell, fill=keycolor!40] at (4.1, 4.2) {$b$};
\node[cell, fill=gray!8] at (4.9, 4.2) {$0$};
\node[cell, fill=gray!8] at (5.7, 4.2) {$0$};
\node[cell, fill=gray!8] at (6.5, 4.2) {$0$};
\node[cell, fill=gray!8] at (7.3, 4.2) {$0$};
\node[cell, fill=gray!8] at (8.1, 4.2) {$0$};

\node[cell, fill=morandibluedeep!35] at (0.0, 3.4) {$a$};
\node[cell, fill=gray!8] at (0.8, 3.4) {$0$};
\node[cell, fill=gray!8] at (1.6, 3.4) {$0$};
\node[cell, fill=gray!8] at (4.1, 3.4) {$0$};
\node[cell, fill=keycolor!40] at (4.9, 3.4) {$b$};
\node[cell, fill=gray!8] at (5.7, 3.4) {$0$};
\node[cell, fill=gray!8] at (6.5, 3.4) {$0$};
\node[cell, fill=gray!8] at (7.3, 3.4) {$0$};
\node[cell, fill=gray!8] at (8.1, 3.4) {$0$};

\node[cell, fill=gray!8] at (0.0, 2.6) {$0$};
\node[cell, fill=morandibluedeep!35] at (0.8, 2.6) {$a$};
\node[cell, fill=gray!8] at (1.6, 2.6) {$0$};
\node[cell, fill=gray!8] at (4.1, 2.6) {$0$};
\node[cell, fill=gray!8] at (4.9, 2.6) {$0$};
\node[cell, fill=keycolor!40] at (5.7, 2.6) {$b$};
\node[cell, fill=gray!8] at (6.5, 2.6) {$0$};
\node[cell, fill=gray!8] at (7.3, 2.6) {$0$};
\node[cell, fill=gray!8] at (8.1, 2.6) {$0$};

\node[cell, fill=gray!8] at (0.0, 1.8) {$0$};
\node[cell, fill=morandibluedeep!35] at (0.8, 1.8) {$a$};
\node[cell, fill=gray!8] at (1.6, 1.8) {$0$};
\node[cell, fill=gray!8] at (4.1, 1.8) {$0$};
\node[cell, fill=gray!8] at (4.9, 1.8) {$0$};
\node[cell, fill=gray!8] at (5.7, 1.8) {$0$};
\node[cell, fill=keycolor!40] at (6.5, 1.8) {$b$};
\node[cell, fill=gray!8] at (7.3, 1.8) {$0$};
\node[cell, fill=gray!8] at (8.1, 1.8) {$0$};

\node[cell, fill=gray!8] at (0.0, 1.0) {$0$};
\node[cell, fill=gray!8] at (0.8, 1.0) {$0$};
\node[cell, fill=morandibluedeep!35] at (1.6, 1.0) {$a$};
\node[cell, fill=gray!8] at (4.1, 1.0) {$0$};
\node[cell, fill=gray!8] at (4.9, 1.0) {$0$};
\node[cell, fill=gray!8] at (5.7, 1.0) {$0$};
\node[cell, fill=gray!8] at (6.5, 1.0) {$0$};
\node[cell, fill=keycolor!40] at (7.3, 1.0) {$b$};
\node[cell, fill=gray!8] at (8.1, 1.0) {$0$};

\node[cell, fill=gray!8] at (0.0, 0.2) {$0$};
\node[cell, fill=gray!8] at (0.8, 0.2) {$0$};
\node[cell, fill=morandibluedeep!35] at (1.6, 0.2) {$a$};
\node[cell, fill=gray!8] at (4.1, 0.2) {$0$};
\node[cell, fill=gray!8] at (4.9, 0.2) {$0$};
\node[cell, fill=gray!8] at (5.7, 0.2) {$0$};
\node[cell, fill=gray!8] at (6.5, 0.2) {$0$};
\node[cell, fill=gray!8] at (7.3, 0.2) {$0$};
\node[cell, fill=keycolor!40] at (8.1, 0.2) {$b$};

\draw[thick, dashed, gray!60] (2.4, 4.6) -- (2.4, -0.2);

\draw[braceleft] (-0.7, 4.5) -- (-0.7, 3.1) node[midway, left=6pt, font=\footnotesize] {$a\, e_1$};
\draw[braceleft] (-0.7, 2.9) -- (-0.7, 1.5) node[midway, left=6pt, font=\footnotesize] {$a\, e_2$};
\draw[braceleft] (-0.7, 1.3) -- (-0.7, -0.1) node[midway, left=6pt, font=\footnotesize] {$a\, e_3$};

\node[font=\scriptsize, left] at (3.6, 4.2) {$b\, e_{(1,1)}$};
\node[font=\scriptsize, left] at (3.6, 3.4) {$b\, e_{(1,2)}$};
\node[font=\scriptsize, left] at (3.6, 2.6) {$b\, e_{(2,1)}$};
\node[font=\scriptsize, left] at (3.6, 1.8) {$b\, e_{(2,2)}$};
\node[font=\scriptsize, left] at (3.6, 1.0) {$b\, e_{(3,1)}$};
\node[font=\scriptsize, left] at (3.6, 0.2) {$b\, e_{(3,2)}$};

\node[font=\scriptsize, right] at (8.6, 4.2) {$(1,1)$};
\node[font=\scriptsize, right] at (8.6, 3.4) {$(1,2)$};
\node[font=\scriptsize, right] at (8.6, 2.6) {$(2,1)$};
\node[font=\scriptsize, right] at (8.6, 1.8) {$(2,2)$};
\node[font=\scriptsize, right] at (8.6, 1.0) {$(3,1)$};
\node[font=\scriptsize, right] at (8.6, 0.2) {$(3,2)$};

\draw[braceleft] (-0.4, -0.5) -- (2.0, -0.5) node[midway, below=6pt, font=\small] {$a \cdot R^T$ \footnotesize($m{=}3$)};
\draw[braceleft] (3.7, -0.5) -- (8.5, -0.5) node[midway, below=6pt, font=\small] {$b \cdot I_C$ \footnotesize($C{=}6$)};

\end{tikzpicture}

\vspace{0.8em}  

\caption{\textbf{Multi-component feature structure} (Assumption~\ref{feature_assumption}), shown as the transposed feature matrix $F^\top \in \mathbb{R}^{C \times (m+C)}$ for the case $m{=}3$, $n{=}2$, $C{=}m\cdot n{=}6$. Each row corresponds to one class $c=(g,r)$, where $g$ indexes its block and $r$ its position within the block. The feature of every class decomposes into two parts. The \emph{shared component} $a \cdot R^T$ (\textcolor{morandibluedeep}{blue}, first $m$ columns) is determined solely by the block index $g$: all classes in the same block activate the same coordinate $e_g$ with magnitude $a$, encoding their semantic relatedness. The \emph{unique component} $b \cdot I_C$ (\textcolor{keycolor}{red}, remaining $C$ columns) assigns each class its own coordinate $e_{(g,r)}$ with magnitude $b$, ensuring all classes remain individually distinguishable. 
}
\label{fig:feature_structure_tikz}
\end{figure}

%% file: main_paper/proof_sketch.tex
\subsection{Proof Sketch}
\label{subsec:proof_sketch}

We describe the main mechanism behind Theorems~\ref{thm:main-margin} and~\ref{thm:erank-merged}. The full proof is deferred to Appendix~\ref{sec:proofs}. The key is to analyze the representation $Z(W)=\Pi WF$ induced by the optimized weight matrix. Under Assumption~\ref{feature_assumption}, the coarse-to-fine block structure of the data is preserved by all three optimizer flows. Specifically, their representations remain in a two-dimensional subspace:
\begin{align}
    Z^\mathsf{A}(t)
    =
    u^\mathsf{A}(t)P_f+v^\mathsf{A}(t)P_c,
    \qquad
    \mathsf{A}\in\{\mathsf{GD},\mathsf{Adam},\mathsf{Muon}\}.
    \label{eq:reduce}
\end{align}
Here, the within-block projection matrix $P_f\in\bbR^{C\times C}$ separates classes within the same block, while the cross-block projection matrix $P_c\in\bbR^{C\times C}$ separates different blocks. Their explicit expressions are deferred to Appendix~\ref{sec:proofs}. Eqn.~\eqref{eq:reduce} reduces each optimizer trajectory to the two coefficients $(u^\mathsf{A},v^\mathsf{A})$. The relevant imbalance between the two complementary directions is captured by the \textbf{representation imbalance ratio}
\begin{equation}
    \rho^\mathsf{A}(t)=\frac{v^\mathsf{A}(t)}{u^\mathsf{A}(t)}.
\end{equation}
This ratio serves as the key parameter for comparing the matched-loss margin and effective rank of the resulting representations across optimizers.

\vspace{3pt}
\noindent\textbf{Step 1: Compare the optimizer-induced representation imbalance ratios.}
To analyze the ratios $\rho^\mathsf{A}$ along the training process, we derive the representation dynamics induced by each optimizer along the two complementary directions $P_f$ and $P_c$. The key distinction across optimizers lies in how they normalize the gradient components along these two directions. Muon normalizes the gradient at the singular-value level, so the resulting imbalance is determined only by the feature-induced amplification $F$ in $Z(W)$. By Proposition~\ref{prop:muon-centered-logit}, the representation direction is fixed throughout training, with
\begin{equation}
\rho_{\mathsf{Muon}}=\frac{s}{b},
\qquad
s=\sqrt{a^2n+b^2}>b.
\end{equation}
Adam, in contrast, normalizes gradient coordinates rather than singular values. This entrywise normalization leads to a different fixed representation direction, and Proposition~\ref{prop:signgd-centered-logit} gives
\begin{equation}
\rho_{\mathsf{Adam}}=1+\frac{an}{b}>\rho_{\mathsf{Muon}}.
\end{equation}
Thus, for Muon and Adam, the representation imbalance ratios remain fixed throughout training due to their respective normalization schemes. GD behaves differently: it follows the raw gradient spectrum and therefore has a time-varying ratio $\rho_{\mathsf{GD}}(t)$. The dynamics of this ratio are characterized by the two-dimensional ODE for $u^{\mathsf{GD}}$ and $v^{\mathsf{GD}}$ in Proposition~\ref{prop:gd-centered-logit}. Proposition~\ref{lem:gd-ratio-threshold} further shows that
\begin{equation}
\rho_{\mathsf{GD}}(t)>\rho_{\mathsf{Muon}}
\end{equation}
whenever the loss is above the terminal threshold $\epsilon_\star$, which covers the matched-loss regime stated in the theorems.

\noindent\textbf{Step 2: Convert the representation imbalance ratio ordering into margin ordering.}
We next translate the ratio comparison into a margin comparison at the same loss threshold. Proposition~\ref{lem:fixed-ratio-margin} shows that the matched-loss margin is strictly decreasing in the representation imbalance ratio $\rho$. This monotonicity can be understood as follows: a larger $\rho$ places more weight on the between-block direction $P_c$, allowing the same loss threshold to be reached with a smaller within-block separation, and hence a smaller matched-loss margin. Since Step~1 shows that Muon has the smallest $\rho$ in the relevant regime, Muon attains the largest matched-loss margin.

\vspace{3pt}
\noindent\textbf{Step 3: Convert the representation imbalance ratio ordering into effective rank ordering.}
The same ratio also determines the effective rank. Since effective rank is computed from normalized singular values, it is invariant to the overall scale of the representation and depends only on the representation imbalance ratio $\rho=v/u$. A direct calculation shows that it is strictly decreasing in $\rho$ for $\rho>1$; see Appendix~\ref{sec:proof_of_thmerank}. Thus, the same ratio ordering that proves the margin comparison also implies that Muon attains the highest effective rank at matched loss.

Combining these steps, Muon has the smallest imbalance ratio because its spectral normalization removes gradient anisotropy at the singular-value level. Since both the matched-loss margin and the effective rank decrease as $\rho$ increases, this yields Muon's claimed advantages over Adam for all $\epsilon\in(0,\log C)$, and over GD outside the exponentially small terminal-loss window.

%% file: main_paper/conclusion.tex
\section{Conclusion}
\label{sec:conclusion}
Our work demystifies the advantages of Muon over Adam and SGD from the perspective of feature learning. Concretely, we evaluate the quality of learned features through two practical and important properties: robustness and transferability. For robustness, our experiments show that features learned by Muon are more robust to input corruptions than those learned by Adam and SGD. This robustness advantage is further supported by larger logit margins revealed by layer-wise probes. For transferability, we evaluate how effectively learned features transfer to downstream tasks by training linear classifiers on pretrained hidden states or by conducting supervised fine-tuning of the full pretrained models. The results show that Muon-learned features transfer more effectively than those learned by Adam and SGD. This transferability advantage is further supported by the diversity of hidden states, as measured by effective rank. Finally, we theoretically prove the larger logit margins and higher effective rank of Muon-learned features in a representative classification problem.\looseness=-1

One limitation of our work is that we mainly focus on \acp{llm} and vision classifiers. The performance and mechanism of Muon on other popular model families, such as diffusion models, are not studied in this paper. We leave this direction for future work.





%% file: appendix/experimental_details.tex
\section{Additional Experimental Details}
\label{app:exp_details}

\subsection{Common protocol}

Across all experiments, we match model architecture, data pipeline, training budget, and evaluation protocol across optimizers. Unless otherwise noted, every result is averaged over three random seeds. We tune the learning rate and scheduler hyperparameters for each optimizer under the same budget, following standard hyperparameter selection and training recipes rather than redesigning the full pipeline around a single optimizer \citep{chen2023symbolic}. This matched-budget protocol is intended to isolate differences in learned representations rather than differences in engineering effort.

\subsection{Vision setup}
We study two representative visual backbones: ResNet-18 and ViT-S/16, both trained on ImageNet-1K. Our implementations are adapted from the codebases of \citet{wang2024improving} and \citet{zhang2024transformers}.  ResNet-18 is trained for 100 epochs with a cosine schedule and 5 warmup epochs; ViT-S/16 is trained for 300 epochs with a cosine schedule, 30 warmup epochs, and gradient-norm clipping at $1.0$. Weight decay is fixed to $10^{-4}$ for ResNet-18; for ViT-S/16 it is swept jointly with the learning rate. For each optimizer we sweep the learning rate over $\{1,2,5\}\times\{10^{-4},\ldots,10^{0}\}$ on ResNet-18; on ViT-S/16 we sweep both the learning rate and weight decay over $\{1,3\}\times\{10^{-4},\ldots,10^{0}\}$. The best configuration is selected by ImageNet-1K validation accuracy, and the three reported seeds are retrained at that configuration. In our experiments, Adam refers to its decoupled-weight-decay variant AdamW, and SGD likewise uses decoupled weight decay. Muon is not applied to 1D parameters, embedding layers, or classification heads; these parameters are instead optimized by SGD in ResNet-18 and by AdamW in ViT-S/16. Pretraining experiments on vision models are conducted on 4 NVIDIA A40 GPUs. A single ResNet-18 pretraining run takes approximately 10 hours, ViT-S/16 takes approximately 40 hours.

After pretraining, we evaluate robustness on ImageNet-C and evaluate transferability by training a linear classifier on EuroSAT, Flowers102, Food101, and Stanford-Cars. These transfer datasets cover substantial variation in domain shift and granularity: EuroSAT probes cross-domain transfer to satellite imagery, while Flowers102 and Stanford-Cars emphasize fine-grained recognition. For each downstream dataset we freeze the backbone and train a linear classifier for 100 epochs with SGD (momentum $0.9$, no weight decay) and batch size $256$, holding out a test set for final evaluation and a validation subset for hyperparameter selection. For each checkpoint we run a coarse learning-rate search over $\{10^{-2}, 10^{-1}, 1\}$ followed by a fine refinement at ratios $\{10^{-1/4}, 1, 10^{1/4}\}$ around the best coarse value, and report test accuracy at the configuration with the highest validation accuracy. These experiments are conducted on a single NVIDIA A40 GPU and take less than 10 hours to complete.

\subsection{Language setup}
Our implementations are adapted from the code base of {\url{https://github.com/karpathy/nanoGPT}}. We use nanoGPT-style causal language models pretrained on a FineWeb10B subset with a WSD learning-rate schedule and weight decay fixed to \(0\). For each optimizer, we sweep the learning rate over \(\{1,2,5\}\times\{10^{-5},\ldots,10^{0}\}\) and select the run with the lowest validation perplexity on FineWeb10B. When using Muon, 1D parameters, embeddings, and the LM head are optimized by Adam. For GPT-2, we report results averaged over three random seeds. For GPT-2 Medium, we report results from a single seed due to computational resource constraints. Pretraining experiments on language models are conducted on 4 NVIDIA A40 GPUs. GPT-2 pretraining takes approximately 10 hours, and GPT-2 Medium takes approximately 48 hours.

We evaluate robustness on FineWeb10B-C, where the evaluation text is perturbed by typo-style surface-form corruptions, and evaluate fine-tuning behavior by supervised fine-tuning on Alpaca, Databricks-Dolly-15k, and Evol-Instruct-70k, all formatted with the standard Alpaca prompt template. We run full-parameter fine-tuning with Adam (weight decay $0$) for two epochs at sequence length $512$, with held-out validation and test subsets obtained by random shuffling. Per-device batch size is $4$ with gradient accumulation steps $16$. Cross-entropy is computed on response tokens only, with prompt tokens masked out. The learning rate is selected by lowest validation perplexity from $\{1,2,5\}\times\{10^{-6},10^{-5}\}$, a range consistent with standard SFT recipes \citep{taori2023alpaca, wang2023far}. We report test perplexity at the selected learning rate, averaged over three pretraining seeds. All fine-tuning experiments are conducted on 4 NVIDIA A40 GPUs, and each experiment takes less than 10 hours to complete.

\subsection{Construction of FineWeb10B-C}
\label{app:fineweb10c}
We construct FineWeb10B-C from the clean FineWeb10B validation split. The corruption is applied to raw text before tokenization, and the resulting corrupted files are fixed once generated. All optimizers are evaluated on exactly the same corrupted examples.

Following \citet{pruthi2019combating}, we use four character-level typo operations. \texttt{Swap} exchanges two adjacent internal characters in a word. \texttt{Drop} removes one internal character. \texttt{Add} inserts a random lowercase character at an internal position. \texttt{Keyboard} replaces one character with a neighboring key on a QWERTY keyboard. For each corruption type, we consider five severity levels with increasing perturbation intensity. The full construction code will be released with the paper.

We report model performance separately for each corruption type, averaging perplexity over the five severity levels. We also report the overall average across all corruption types and severity levels. This protocol produces a fixed typo-corrupted benchmark for evaluating pretraining robustness, rather than an adaptive attack that is re-optimized for each model.

\subsection{Layer-wise classification margin}
\label{app:margin}
For each pretrained checkpoint, we compute the layer-wise margin using a probe that mirrors the model's own pre-output normalization followed by a linear map to logits. At depth~$\ell$, we take the output of the $\ell$-th transformer block as the hidden state $z^{(\ell)}(x)$, and apply the native pooling operator for ViT-S when applicable. The probe at layer $\ell$ is defined as $T^{(\ell)}(z) = W^{(\ell)}\,\mathrm{Norm}(z) + b^{(\ell)}$, where $\mathrm{Norm}(\cdot)$ is the same normalization operator the model applies before its output head, kept frozen at its pretrained configuration, and $W^{(\ell)} \in \mathbb{R}^{C \times d}$, $b^{(\ell)} \in \mathbb{R}^{C}$ are the only trainable parameters of the probe. On a subset $\mathcal{D}$ of the validation split, we collect pairs $\big(z^{(\ell)}(x), s(x)\big)$, where $s(x)$ denotes the model's final logits, and fit $W^{(\ell)}, b^{(\ell)}$ by least-squares regression. For the final block, we use the model's own output head directly. The per-layer logits are given by $T^{(\ell)}(z^{(\ell)}(x))$, and the sample-level margin $\gamma_\ell(x,y)$ is defined as in Eq.~\ref{eq:layerwise_margin}. We report the mean clean margin on $\mathcal{D}$; means and standard deviations in Figure~\ref{fig:margin} are computed over three random seeds, each run end-to-end from pretraining through margin evaluation.

\subsection{Layer-wise feature spectrum analysis}
Here, we introduce how to construct features across modalities. Throughout, depth $\ell$ refers to the position immediately after the $\ell$-th transformer block. For language models, we use the hidden state at each token position as a feature vector, so that $Z^{(\ell)}$ stacks all token-level hidden states across the validation set. For ViT models, we instead use class-center features: for each class, we average the pooled representations of all validation samples belonging to that class, and $Z^{(\ell)}$ stacks these class-center features. In both cases, this construction is performed independently at every depth $\ell$.

\subsection{Reordering procedure.}
\label{app:reorder}
Given a symmetric similarity matrix $A \in [0, 1]^{n \times n}$, we form distances $d_{ij} = 1 - A_{ij}$, run agglomerative hierarchical clustering with average linkage to obtain a dendrogram, and select the leaf ordering that minimizes the sum of distances between adjacent leaves~\citep{bar2001fast}. The resulting permutation $P$ is then applied to both rows and columns to produce $PAP^{\top}$. We implement the process using SciPy~\citep{virtanen2020scipy}.

\section{Additional Results}
\label{app:results_more}

Detailed per-corruption robustness and per-task transfer numbers are reported in Table~\ref{tab:robustness_combined} and Table~\ref{tab:transfer_combined}, respectively. To complement the aggregated numbers reported in the main text, we additionally include standard deviations across three random seeds for every entry, except for the GPT-2 Medium experiments which were run with a single seed due to computational cost. The overall trends remain stable across runs: Muon exhibits superior robustness under distribution shift and a consistent advantage on downstream transfer tasks.

\begin{table}[htbp]
\centering
\caption{Robustness of optimizers across image and language pretraining. For ResNet-18 and ViT-S we report Top-1 accuracy (\%, $\uparrow$) on ImageNet-C across the four standard corruption categories (Noise, Blur, Weather, Digital), with the rightmost column averaging over all 15 corruption types. For GPT-2 and GPT-medium we report perplexity ($\downarrow$) on FineWeb-C across four text-corruption types (Add, Drop, Key, Swap), with the rightmost column averaging over the four. Values are mean $\pm$ std over 3 seeds, except GPT-medium which is reported from a single run. The best optimizer per column is in \textbf{bold}.}
\label{tab:robustness_combined}
\small
\setlength{\tabcolsep}{4pt}
\resizebox{\textwidth}{!}{
\begin{tabular}{llccccc}
\toprule
\multicolumn{7}{c}{\textit{ImageNet-C accuracy ($\uparrow$)}} \\
\midrule
Model & Optimizer & Noise & Blur & Weather & Digital & Mean (15) \\
\midrule
\multirow{3}{*}{ResNet-18}
 & Adam & $19.75 \pm 0.13$ & $24.05 \pm 0.13$ & $33.51 \pm 0.29$ & $34.46 \pm 0.11$ & $28.49 \pm 0.09$ \\
 & SGD  & $23.15 \pm 0.20$ & $\mathbf{27.19 \pm 0.17}$ & $37.15 \pm 0.20$ & $\mathbf{37.75 \pm 0.33}$ & $31.85 \pm 0.13$ \\
 & Muon & $\mathbf{24.51 \pm 0.45}$ & $27.05 \pm 0.27$ & $\mathbf{37.65 \pm 0.21}$ & $37.56 \pm 0.11$ & $\mathbf{32.17 \pm 0.22}$ \\
\midrule
\multirow{3}{*}{ViT-S}
 & Adam & $54.52 \pm 0.69$ & $42.35 \pm 0.40$ & $61.19 \pm 0.17$ & $58.54 \pm 0.21$ & $54.12 \pm 0.26$ \\
 & SGD  & $47.78 \pm 0.60$ & $39.09 \pm 0.17$ & $55.91 \pm 0.05$ & $53.75 \pm 0.24$ & $49.22 \pm 0.09$ \\
 & Muon & $\mathbf{55.91 \pm 0.77}$ & $\mathbf{43.97 \pm 0.36}$ & $\mathbf{63.37 \pm 0.40}$ & $\mathbf{60.15 \pm 0.03}$ & $\mathbf{55.84 \pm 0.19}$ \\
\midrule
\multicolumn{7}{c}{\textit{FineWeb-C perplexity ($\downarrow$)}} \\
\midrule
Model & Optimizer & Add & Drop & Key & Swap & Mean (4) \\
\midrule
\multirow{2}{*}{GPT-2}
 & Adam & $41.05 \pm 0.21$ & $37.97 \pm 0.46$ & $40.02 \pm 0.26$ & $39.77 \pm 0.26$ & $39.70 \pm 0.23$ \\
 & Muon & $\mathbf{39.15 \pm 0.10}$ & $\mathbf{36.26 \pm 0.47}$ & $\mathbf{38.21 \pm 0.13}$ & $\mathbf{37.92 \pm 0.13}$ & $\mathbf{37.89 \pm 0.16}$ \\
\midrule
\multirow{2}{*}{GPT-medium$^{*}$}
 & Adam & $34.87$ & $32.01$ & $34.00$ & $33.67$ & $33.64$ \\
 & Muon & $\mathbf{33.05}$ & $\mathbf{30.42}$ & $\mathbf{32.28}$ & $\mathbf{31.94}$ & $\mathbf{31.92}$ \\
\bottomrule
\end{tabular}
}

\vspace{2pt}
{\footnotesize $^{*}$ Single run, std not reported.}
\end{table}

\begin{table*}[htbp]
\centering
\caption{Linear-probe transfer for vision and instruction-tuning adaptation for language. For ResNet-18 and ViT-S, we report linear-probe accuracy (\%, $\uparrow$) on four downstream image classification benchmarks. For GPT-2 and GPT-medium, we report fine-tuning test perplexity ($\downarrow$) on three instruction datasets. Vision results and GPT-2 results are mean $\pm$ std over 3 seeds; GPT-medium is reported from a single run. The best optimizer per column is in \textbf{bold}.}
\label{tab:transfer_combined}
\small
\setlength{\tabcolsep}{4pt}
\resizebox{\textwidth}{!}{
\begin{tabular}{llcccc}
\toprule
\multicolumn{6}{c}{\textit{Vision: linear-probe accuracy ($\uparrow$)}} \\
\midrule
Backbone & Optimizer & EuroSAT & Flowers102 & Food101 & Stanford-Cars \\
\midrule
\multirow{3}{*}{ResNet-18}
 & Adam & $94.52 \pm 0.13$ & $82.80 \pm 1.13$ & $66.72 \pm 0.48$ & $49.24 \pm 1.21$ \\
 & SGD  & $94.73 \pm 0.25$ & $84.76 \pm 0.52$ & $68.67 \pm 0.21$ & $52.48 \pm 0.28$ \\
 & Muon & $\mathbf{94.86 \pm 0.20}$ & $\mathbf{85.84 \pm 0.39}$ & $\mathbf{68.93 \pm 0.16}$ & $\mathbf{52.96 \pm 0.13}$ \\
\midrule
\multirow{3}{*}{ViT-S}
 & Adam & $95.58 \pm 0.32$ & $84.57 \pm 0.48$ & $71.58 \pm 0.12$ & $44.75 \pm 2.00$ \\
 & SGD  & $95.46 \pm 0.62$ & $84.69 \pm 0.43$ & $69.22 \pm 0.16$ & $40.37 \pm 1.54$ \\
 & Muon & $\mathbf{95.80 \pm 0.32}$ & $\mathbf{85.99 \pm 0.21}$ & $\mathbf{71.86 \pm 0.24}$ & $\mathbf{46.43 \pm 0.98}$ \\
\midrule
\multicolumn{6}{c}{\textit{Language: instruction-tuning perplexity ($\downarrow$)}} \\
\midrule
Backbone & Optimizer & Alpaca & Dolly & WizardLM & \\
\midrule
\multirow{2}{*}{GPT-2}
 & Adam & $16.25 \pm 0.77$ & $8.99 \pm 0.93$ & $51.91 \pm 6.85$ & \\
 & Muon & $\mathbf{15.12 \pm 0.58}$ & $\mathbf{8.61 \pm 0.78}$ & $\mathbf{45.75 \pm 5.57}$ & \\
\midrule
\multirow{2}{*}{GPT-medium$^{*}$}
 & Adam & $12.53$ & $7.85$ & $27.28$ & \\
 & Muon & $\mathbf{12.23}$ & $\mathbf{7.62}$ & $\mathbf{26.68}$ & \\
\bottomrule
\multicolumn{6}{l}{\footnotesize $^{*}$ Single run, std not reported.} \\
\end{tabular}
}
\end{table*}

\paragraph{Layer-wise analysis on GPT-2 Medium.}
\label{app:24d_results}
As shown in Figure~\ref{fig:24d_hidden} and aggregated in Table~\ref{tab:24d_hidden_avg}, Muon attains higher margin, higher eRank, and lower Top10E than Adam across the 24 transformer layers of GPT-2 Medium, consistent with the trends reported in Section~\ref{sec:experiments}. These results are consistent with Muon learning hidden representations that are both more robust and spectrally more diverse than those produced by Adam.
\begin{figure}[t]
    \centering
    \includegraphics[width=0.32\linewidth]{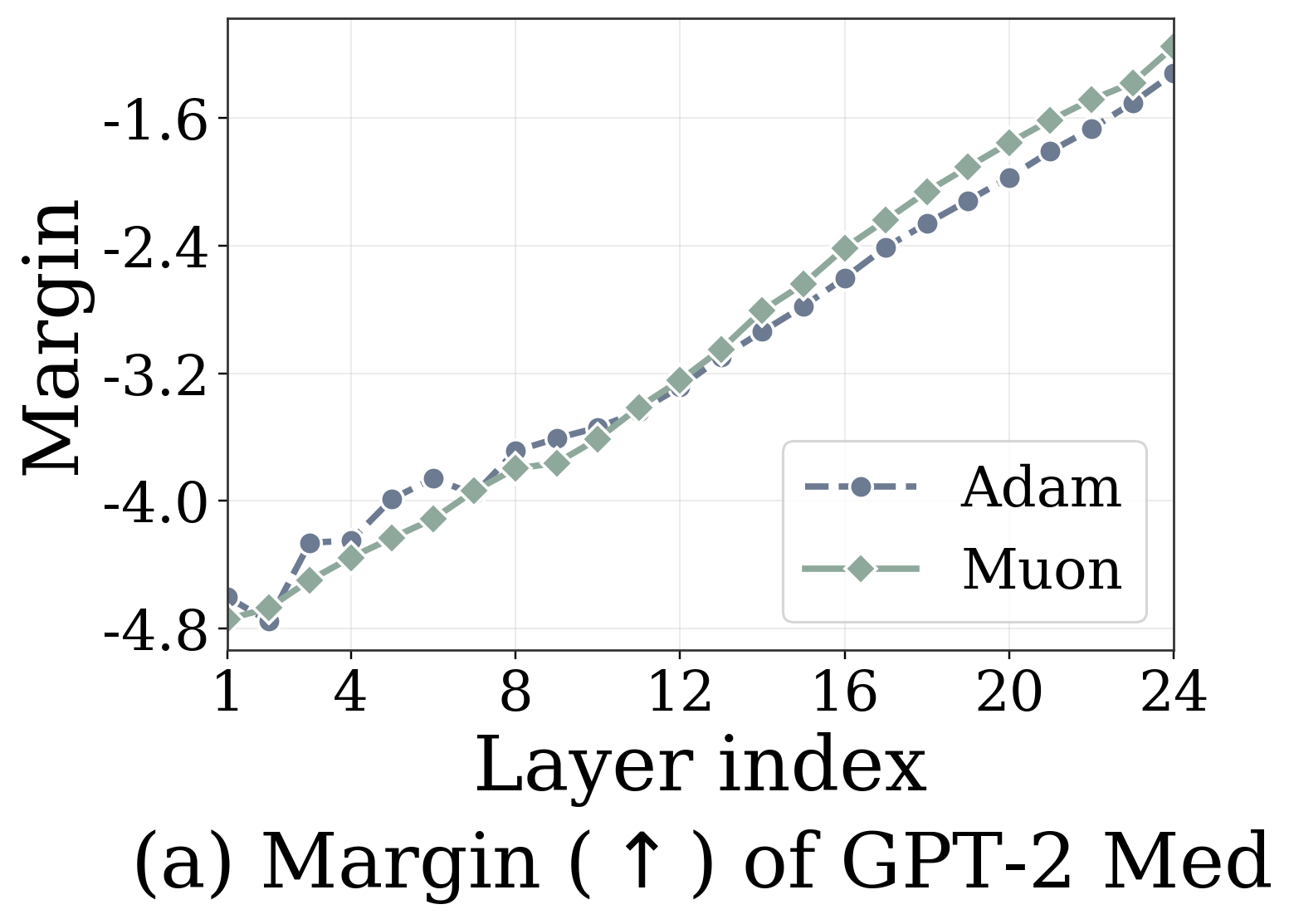}
    \hspace{0.005\linewidth}
\includegraphics[width=0.32\linewidth]{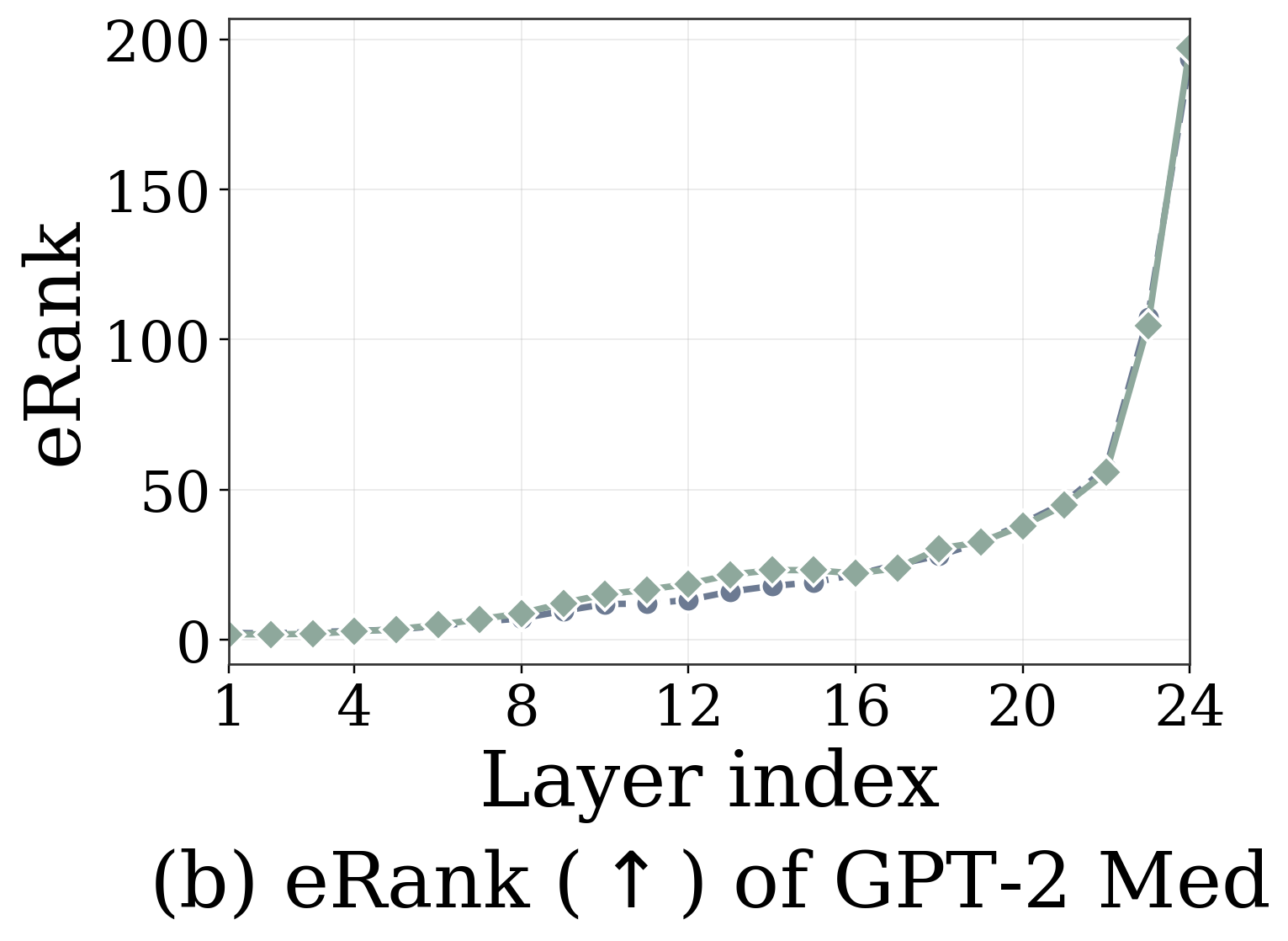}
\hspace{0.005\linewidth}
\includegraphics[width=0.32\linewidth]{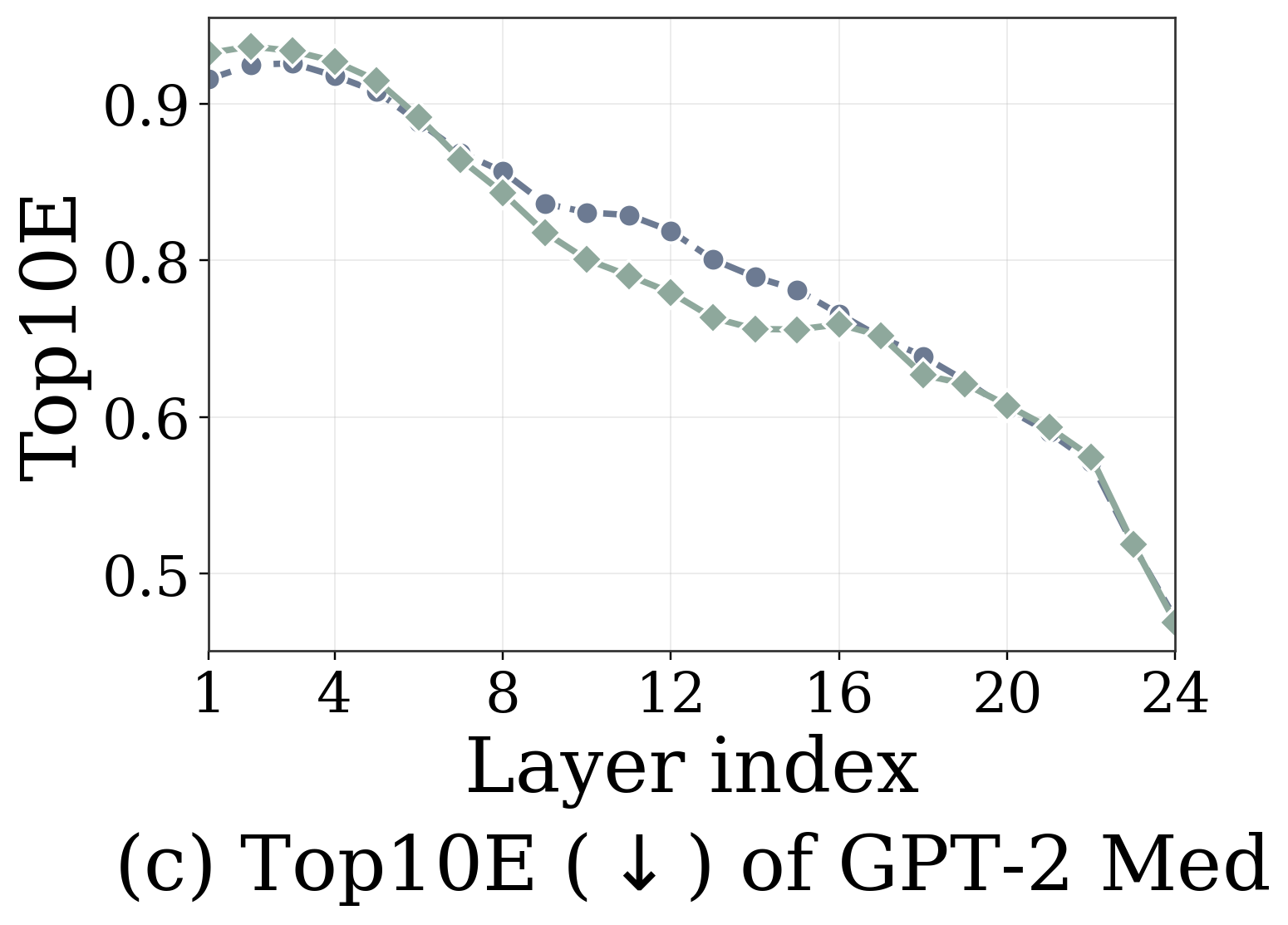}
    \caption{Layer-wise analysis on GPT-2 Medium. (a) Classification margin. (b) Effective rank and (c) Top-10 energy fraction of hidden representations. Overall, Muon yields higher margin, higher eRank, and lower Top10E than Adam.}
    \label{fig:24d_hidden}
\end{figure}

\begin{table}[h]
\centering
\caption{Layer-wise averages of margin, eRank, and Top10E on GPT-2 Medium trained with Adam and Muon. Best per column in bold.}
\label{tab:24d_hidden_avg}
\begin{tabular}{lccc}
\toprule
Optimizer & Margin ($\uparrow$) & eRank ($\uparrow$) & Top10E ($\downarrow$) \\
\midrule
Adam & $-3.0717$ & $28.34$ & $0.7453$ \\
Muon & $\mathbf{-3.0370}$ & $\mathbf{29.67}$ & $\mathbf{0.7330}$ \\
\bottomrule
\end{tabular}
\end{table}

%% file: appendix/Notations.tex
\section{Notations}

\paragraph{Basic notation.}
For a positive integer $k$, let $[k]=\{1,\dots,k\}$. For $d\in\mathbb N$, let $\mathbf{1}_d\in\mathbb{R}^d$ denote the all-ones vector, $I_d\in\mathbb{R}^{d\times d}$ the identity matrix, and $J_d=\mathbf{1}_d\mathbf{1}_d^\top\in\mathbb{R}^{d\times d}$ the all-ones matrix. When the dimension is clear from context, we may omit the subscript and write simply $\mathbf{1}$, $I$, and $J$. For matrices of the same size, $\langle A,B\rangle_F=\mathrm{tr}(A^\top B)$ denotes the Frobenius inner product, while $\|A\|_F$ and $\|A\|_{\mathrm{op}}$ denote the Frobenius norm and operator norm, respectively. We use $\otimes$ for the Kronecker product and $[A,B]$ for horizontal concatenation.

\paragraph{Hierarchical labels and features.}
Fix integers $m,n\ge 2$ and let $C=mn$. The label set is $\mathcal C=[m]\times[n]$. For each class $c=(g,r)\in\mathcal C$, define
$x_{c}=[a\cdot e_g,\; b\cdot  e_{g,r}]^\top \in \mathbb R^{m+C},\, 0<a\le b.$
The population is uniform on $\mathcal C$. Let $R\in\{0,1\}^{m\times C}$ be the coarse-membership matrix whose column indexed by $c=(g,r)$ is $e_g$, and define $F = [aR^\top,\; bI_C]^\top \in \mathbb R^{(m+C)\times C}.$
For a linear classifier \(W\in\mathbb{R}^{C\times(m+C)}\), let
\(Z(W)=\Pi WF\in\mathbb{R}^{C\times C}\) denote the centered representation, where the centering matrix $\Pi$ is defined in the next paragraph. The population cross-entropy is defined as
\(
\mathcal L(W)
=
-{1}/{C}\sum_{c\in\mathcal C}\log\bigl(\soft(Z(W))_{c,c}\bigr)
\), where $\soft(\cdot)$ denotes the column-wise softmax.
We will also write \(\mathcal L(Z)\) to denote the same loss evaluated
directly at a logit matrix \(Z\in\mathbb{R}^{C\times C}\).

\paragraph{Centering and hierarchical projectors.}
We define the following matrix
\begin{align*}
    J=J_C,\qquad \Pi=I_C-\frac{1}{C}J,\qquad
S=R^\top R = I_m\otimes J_n,\qquad
P_f=I_C-\frac{1}{n}S,\qquad
P_c=\frac{1}{n}S-\frac{1}{C}J.
\end{align*}
Then $\Pi=P_f+P_c$. We write $
r_f=\operatorname{rank}(P_f)=m(n-1),\,  r_c=\operatorname{rank}(P_c)=m-1,\, \delta = a^2 n / b^2$
and $s=\sqrt{a^2n+b^2}.$

\paragraph{Canonical plane.}
Define the canonical directions 
$Q_f = [\,0,\ P_f\,] \in \mathbb{R}^{C\times(m+C)}$ and 
$Q_c = {1}/{s}\,[\,aP_cR^\top,\ bP_c\,] \in \mathbb{R}^{C\times(m+C)}$, 
and let $\mathcal{W}_{\mathrm{cano}} = \mathrm{span}\{Q_f, Q_c\}$ denote the canonical plane in weight space.

\paragraph{Symmetric logits and effective coordinates.}
A logit matrix \(M\) is called symmetric if it lies in \(\mathcal M = \bigl\{uP_f+vP_c+\zeta J/C : u,v,\zeta\in\mathbb R\bigr\}.\) Since \(\Pi P_f = P_f\), \(\Pi P_c = P_c\), and \(\Pi J = 0\) by Lemma~\ref{lem:projectors-full}, the centered part \(\Pi M = uP_f + vP_c\) depends only on \(u\) and \(v\). We write \(u(M), v(M)\) for these coefficients, so that \(\Pi M = u(M)\,P_f + v(M)\,P_c\). For a checkpoint \(W\), we write \(u(W) = u(WF)\) and \(v(W) = v(WF)\).

\paragraph{Effective rank.}
For a nonzero matrix $M$ with singular values $\sigma_1,\dots,\sigma_r>0$, define
\[
p_i=\frac{\sigma_i^2}{\sum_{j=1}^r \sigma_j^2},
\qquad
\mathrm{eRank}(M)=
\exp\!\Bigl(-\sum_{i=1}^r p_i\log p_i\Bigr).
\]

\paragraph{Margin.} For each class \(c\in\mathcal C\), \(\gamma(W;x_c,c)=(W x_c)_c-\max_{j\ne c}(W x_c)_j.\) We define the average margin as \(\Gamma(W)=1/C\sum_{c\in\mathcal C}\gamma(W;x_c,c).\)

\paragraph{Optimization dynamics.}
We recall the three continuous-time dynamics analyzed in this paper; see Section~\ref{sec:case_study} for the full setup. All flows are initialized at \(W^\mathsf{A}(0)=0\) for \(\mathsf{A}\in\{\mathsf{GD},\mathsf{Muon},\mathsf{Adam}\}\), with momentum set to zero:
\begin{itemize}[leftmargin=2em]
    \item[(GD)] 
    \(\dot W^{\mathsf{GD}}(t)=-\nabla_W\mathcal L(W^{\mathsf{GD}}(t))\).
    \item[(Muon)] 
    \(\dot W^{\mathsf{Muon}}(t)=-\spec(\nabla_W\mathcal L(W^{\mathsf{Muon}}(t)))\),
    where \(\spec(G)=UV^\top\) for SVD \(G=U\Sigma V^\top\).
    \item[(Adam)]
    \(\dot W^{\mathsf{Adam}}(t)=-\sgn(\nabla_W\mathcal L(W^{\mathsf{Adam}}(t)))\),
    where \(\sgn(\cdot)\) is the entrywise sign operator.
\end{itemize}

\paragraph{Exact-loss checkpoints.}
For any method $\mathsf{A}\in\{\mathsf{GD},\mathsf{Muon},\mathsf{Adam}\}$, write $Z^\mathsf{A}(t)=Z(W^\mathsf{A}(t))=\Pi W^\mathsf{A}(t)F$ for the centered logit matrix at time $t$. Given $\epsilon\in(0,\log C)$, we write $W_\epsilon^\mathsf{A}$ for the exact-loss checkpoint satisfying $\mathcal L_{ }(W_\epsilon^\mathsf{A})=\epsilon.$ When discrete-time iterates are needed, we use $W_t^\mathsf{A}$ for the corresponding iterates and interpret $W_\epsilon^\mathsf{A}$ via the standard first-crossing interpolation.

%% file: appendix/proof_thm.tex
\section{Proofs of Theorems}
\label{sec:proofs}

\subsection{Proof of Theorem~\ref{thm:main-margin}}
\label{app:proof-margin}

\begin{proof}

We first establish that the matched-loss times used in the theorem are well defined. The proof of the following proposition is deferred to Appendix~\ref{proof_gd-matched-loss-time}.
\begin{proposition}[Matched-loss time along GD, Adam and Muon]
\label{lem:gd-matched-loss-time}
Assume \(a>0\), \(b>0\), and \(m,n\ge2\). Along the zero-initialized GD, Adam and Muon flows, the losses
are strictly decreasing from \(\log C\)
to \(0\). Hence, for every \(\epsilon\in(0,\log C)\), there exist
unique times \(t_\epsilon^{\mathsf{GD}}>0\), \(t_\epsilon^{\mathsf{Adam}}>0\) and \(t_\epsilon^{\mathsf{Muon}}>0\) such that
\[
\mathcal L(W^{\mathsf{GD}}(t_\epsilon^{\mathsf{GD}}))=\epsilon, \qquad
\mathcal L(W^{\mathsf{Adam}}(t_\epsilon^{\mathsf{Adam}}))=\epsilon ,
\qquad
\mathcal L(W^{\mathsf{Muon}}(t_\epsilon^{\mathsf{Muon}}))=\epsilon.
\]
\end{proposition}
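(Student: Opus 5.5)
The plan is to reduce each flow to the two coordinates $(u^{\mathsf A}(t),v^{\mathsf A}(t))$ and then study the loss as a function of $(u,v)$.

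\textbf{Step 1: the trajectory stays in the two-dimensional family.} Starting from $W=0$, I would show that each trajectory has the form $W^{\mathsf A}(t)=\alpha(t)Q_f+\beta(t)Q_c$, or an analogous symmetric form for Adam. The justification is a symmetry argument. The loss is invariant under permutations of blocks and of classes within blocks, acting simultaneously on rows of $W$ and on the feature coordinates. The GD, $\spec$ and $\sgn$ maps are all equivariant under these permutations. The zero initialization is invariant. Uniqueness of the ODE solution then gives an invariant trajectory, and invariant logit matrices lie in $\mathcal M$. This yields $Z^{\mathsf A}(t)=u^{\mathsf A}(t)P_f+v^{\mathsf A}(t)P_c$.

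\textbf{Step 2: the loss in these coordinates.} For $Z=uP_f+vP_c$, the entries take three values. The diagonal entry is $u(1-1/n)+v(1/n-1/C)$. An entry in the same block but off the diagonal is $-u/n+v(1/n-1/C)$. An entry in a different block is $-v/C$. Hence
\[
\mathcal L(u,v)=\log\bigl(1+(n-1)e^{-u}+n(m-1)e^{-(u/n+(1-1/n)v)}\bigr),
\]
since the off-diagonal gaps are $u$ and $u/n+v(1-1/n)$ respectively. This function equals $\log C$ at the origin and is strictly decreasing in both $u$ and $v$. It tends to $0$ as $u,v\to\infty$ along any direction with positive components.

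\textbf{Step 3: the dynamics of $(u,v)$.}
\begin{itemize}
\item For Muon and Adam, the direction of $\dot Z$ is fixed, and Propositions~\ref{prop:muon-centered-logit} and \ref{prop:signgd-centered-logit} give $u,v$ growing linearly with positive slopes (ratio $\rho>0$). In both cases the loss is strictly decreasing and tends to $0$.
\item For GD, I would use $\frac{d}{dt}\mathcal L=-\|\nabla_W\mathcal L\|_F^2$.
\end{itemize}
Strict decrease for GD follows once the gradient is nonzero. That holds because $\mathcal L$ is convex in $W$ with infimum $0$ but no minimizer, so there are no critical points. The GD limit $\mathcal L\to0$ is the main obstacle, because the gradient-flow loss could plateau at a positive value. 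To rule this out, I would use the reduced ODE of Proposition~\ref{prop:gd-centered-logit}. Its components are $\dot u=-c_1\partial_u\mathcal L>0$ and $\dot v=-c_2\partial_v\mathcal L>0$ with positive constants coming from $F^\top F$ on the two eigenspaces. As long as $\mathcal L\ge\epsilon_0>0$, the partial derivatives $-\partial_u\mathcal L$ and $-\partial_v\mathcal L$ are bounded below by a positive constant depending on $\epsilon_0$. This holds because the softmax mass off the diagonal is at least $1-e^{-\epsilon_0}$. Consequently $u,v\to\infty$, which contradicts $\mathcal L\ge\epsilon_0$.

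\textbf{Step 4: the crossing times.} Continuity and strict monotonicity of $t\mapsto\mathcal L(W^{\mathsf A}(t))$ from $\log C$ to $0$, together with the intermediate value theorem, give a unique $t_\epsilon^{\mathsf A}>0$ for each $\epsilon\in(0,\log C)$.
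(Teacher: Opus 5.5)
Your argument is essentially the paper's. You reduce to $(u,v)$ via Propositions~\ref{prop:gd-centered-logit}, \ref{prop:muon-centered-logit} and \ref{prop:signgd-centered-logit}, read off linear, strictly loss-decreasing trajectories for Muon and Adam, use $\frac{d}{dt}\mathcal L=-\|\nabla_W\mathcal L\|_F^2$ for GD, force $u^{\mathsf{GD}}\to\infty$ through a lower bound on $\dot u$, and finish with the intermediate value theorem. The paper gets $u\to\infty$ directly from $\frac{d}{dt}e^{u}\ge b^2/(mC)$ rather than by contradiction. Your convexity/no-minimizer argument for a nonvanishing gradient is a fine substitute for reading it off from $\alpha,\beta<0$.

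Three slips should be fixed; none is fatal.

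First, your own entry values give the other-block gap as $z_{\mathrm{corr}}-z_{\mathrm{other}}=\bigl((n-1)u+v\bigr)/n$, not $u/n+(1-1/n)v$. The qualitative properties you use are unaffected.

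Second, on $\{\mathcal L\ge\epsilon_0\}$ only the $u$-derivative is bounded away from zero, namely $-\partial_u\mathcal L\ge\frac{n-1}{n}\bigl(1-e^{-\mathcal L}\bigr)$. By contrast, $-\partial_v\mathcal L=(m-1)e^{-q}/D\to0$ along $u=0$, $v\to\infty$, where $\mathcal L\to\log n$. So your claim $v\to\infty$ is unsupported. It is also unnecessary: $u\to\infty$ together with $v\ge0$ (from $\dot v>0$) gives $q\to\infty$ and $D\to1$, hence $\mathcal L\to0$.

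Third, Step~1 is redundant once you cite the propositions, and on its own it proves less than stated. Permutation symmetry only places $W$ in the five-dimensional space of invariant weights; Adam's trajectory $tN$ lies there but not in $\operatorname{span}\{Q_f,Q_c\}$. Appealing to ODE uniqueness for the discontinuous $\spec$ and $\sgn$ fields also needs its own justification. Invariance does still give $WF\in\mathcal M$, which is all the proposition requires.
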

This proposition guarantees that the exact-loss checkpoints $W_\epsilon^{\mathsf{GD}}$, $W_\epsilon^{\mathsf{Adam}}$, and $W_\epsilon^{\mathsf{Muon}}$ are well defined for every $\epsilon \in (0, \log C)$, so that the matched-loss margin comparison in the theorem makes sense. With this in hand, we prove the theorem in four steps. The main idea is to reduce the margin comparison to a comparison of one scalar ratio.

\begin{itemize}
    \item \textbf{Step 1: Reduction of optimizer dynamics to canonical coordinates.} We show that the raw outputs of GD, Muon, and Adam all lie in the symmetric class $\mathcal{M}$, i.e., $W^\mathsf{A}(t)F=u(t)P_f+v(t)P_c+\zeta(t)J/C$ for some $u(t), v(t), \zeta(t)$ (with $\zeta(t)=0$ for GD and Muon). For Muon and Adam, this gives explicit trajectories. For GD, it gives a two-dimensional ODE for $u(t)$ and $v(t)$.

    \item \textbf{Step 2: Monotonicity of the margin in the representation imbalance ratio \(\rho=v/u\).} We analyze the margin in $\mathcal{M}$.
    At a fixed loss level, the margin along a fixed-ratio ray depends only
    on \(\rho\), and is strictly decreasing in \(\rho\) on \([1,\infty)\).

    \item \textbf{Step 3: Ratio comparisons.} We compare the ratios induced by the three
    optimizers. Muon has a smaller ratio than Adam, which proves
    part~\textnormal{(i)}. For GD, we show that above an explicit
    terminal-loss threshold \(\epsilon_\star\), GD has a larger ratio than
    Muon. By Step 2, this implies that Muon has larger margin.

    \item \textbf{Step 4: Converting the explicit threshold into the asymptotic window.} We estimate the scale of \(\epsilon_\star\).
    This shows that the asymptotic loss windows in the theorem lie above
    \(\epsilon_\star\), so the explicit-threshold comparison from Step 3
    applies and proves part~\textnormal{(ii)}.
\end{itemize}

\medskip
\noindent\textbf{Step 1: Reduction of optimizer dynamics to canonical coordinates.}

To compare the margins of Muon, Adam, and GD, we first need a precise description of their trajectories in logit space. The key observation is that all three optimizers admit a three-dimensional reduction. We summarize the required structural facts below.

\begin{proposition}[GD logit dynamics and matched-loss parametrization]
\label{prop:gd-centered-logit}
Assume \(a>0\), \(b>0\), and \(m,n\ge2\). Let
\(
s=\sqrt{a^2n+b^2}.
\)
Along the zero-initialized GD flow, the raw output and centered representation satisfy
\[
    W^{\mathsf{GD}}(t)F= Z^{\mathsf{GD}}(t)
    =
    u^{\mathsf{GD}}(t)P_f+v^{\mathsf{GD}}(t)P_c,
\]
where \(u^{\mathsf{GD}}(t)\) and \(v^{\mathsf{GD}}(t)\) solve
\[
\begin{cases}
\dot u^{\mathsf{GD}}(t)
=
\displaystyle
\frac{b^2}{m}\,
\frac{
e^{-u^{\mathsf{GD}}(t)}
+
(m-1)e^{-q^{\mathsf{GD}}(t)}
}{
D^{\mathsf{GD}}(t)
},
\\[1.2em]
\dot v^{\mathsf{GD}}(t)
=
\displaystyle
s^2
\frac{
e^{-q^{\mathsf{GD}}(t)}
}{
D^{\mathsf{GD}}(t)
}
\end{cases}
\]
with $
u^{\mathsf{GD}}(0)=v^{\mathsf{GD}}(0)=0$ and
\[
q^{\mathsf{GD}}(t)
=
\frac{(n-1)u^{\mathsf{GD}}(t)+v^{\mathsf{GD}}(t)}{n},
\qquad
D^{\mathsf{GD}}(t)
=
1+(n-1)e^{-u^{\mathsf{GD}}(t)}
+n(m-1)e^{-q^{\mathsf{GD}}(t)}.
\]
\end{proposition}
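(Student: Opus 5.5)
The plan is to prove the statement with an invariant-subspace (ansatz) argument. I will compute the GD vector field on the two-parameter family $\{uP_f+vP_c\}$, check that it is tangent to that family, and then invoke uniqueness of solutions to the GD flow. The only structural facts needed are those in the notation section: $P_f,P_c$ are complementary orthogonal projectors with $\Pi=P_f+P_c$, $\Pi J=0$ (Lemma~\ref{lem:projectors-full}), and $S=I_m\otimes J_n$.

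\textbf{Step 1: the GD vector field in logit space.} Write $Z=\Pi WF$ and let $\mathsf P(Z)$ denote the column-wise softmax. The chain rule gives
\[
\nabla_W\mathcal L=\tfrac1C\,\Pi(\mathsf P(Z)-I_C)F^\top .
\]
Since this gradient is left-multiplied by $\Pi$ and $W(0)=0$, we have $\Pi W(t)=W(t)$ for all $t$. Hence $W^{\mathsf{GD}}(t)F=Z^{\mathsf{GD}}(t)$, which is the first claimed identity. Multiplying the flow on the right by $F$ gives
\[
\dot Z=-\tfrac1C\,\Pi(\mathsf P(Z)-I)F^\top F,
\qquad F^\top F=a^2S+b^2I_C .
\]
On the range of $P_f$, $F^\top F$ acts as $b^2$ (because $SP_f=0$). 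On the range of $P_c$ it acts as $a^2n+b^2=s^2$ (because $SP_c=nP_c$). Both $P_f$ and $P_c$ are polynomials in $S$ and $J$, so all of these matrices commute.

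\textbf{Step 2: softmax on the ansatz.} Set $Z=uP_f+vP_c$ and read off column $c$.
\begin{itemize}
\item The diagonal entry minus a same-block off-diagonal entry equals $u$.
\item The diagonal entry minus an other-block entry equals $u(1-\tfrac1n)+\tfrac vn=q$.
\end{itemize}
Therefore, after normalizing by the diagonal, the softmax column has weight $1/D$ on the diagonal, $e^{-u}/D$ on each of the $n-1$ same-block classes, and $e^{-q}/D$ on each of the $n(m-1)$ other-block classes, where $D=1+(n-1)e^{-u}+n(m-1)e^{-q}$. In particular $\mathsf P(Z)\in\mathrm{span}\{I,S,J\}$, so $\mathsf P(Z)-I$ is simultaneously diagonalized by $P_f$, $P_c$ and $J/C$.

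The eigenvalue of $\mathsf P-I$ on the range of $P_f$ (within-block zero-sum vectors) is
\[
\frac{1-e^{-u}}{D}-1=-\frac{n\,(e^{-u}+(m-1)e^{-q})}{D}.
\]
The eigenvalue on the range of $P_c$ (block-constant zero-sum vectors) is
\[
\frac{1+(n-1)e^{-u}-ne^{-q}}{D}-1=-\frac{C\,e^{-q}}{D}.
\]
Combining these with Step~1 gives
\[
\dot Z=\frac{b^2}{m}\,\frac{e^{-u}+(m-1)e^{-q}}{D}\,P_f+s^2\,\frac{e^{-q}}{D}\,P_c .
\]
This has the form $\dot uP_f+\dot vP_c$ with exactly the stated right-hand sides.

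\textbf{Step 3: closing the argument.} The stated planar ODE has a smooth right-hand side, so it has a unique local solution $(u,v)$. Using it, I would define the lift
\[
\widetilde W(t)=\frac{u(t)}{b^2}\,P_fF^\top+\frac{v(t)}{s^2}\,P_cF^\top .
\]
Since $P_fF^\top F=b^2P_f$ and $P_cF^\top F=s^2P_c$, this lift satisfies $\widetilde WF=uP_f+vP_c$. It also solves the GD flow, because $\nabla_W\mathcal L=\tfrac1C\Pi(\mathsf P-I)F^\top$ with $\Pi(\mathsf P-I)$ lying in $\mathrm{span}\{P_f,P_c\}$. By uniqueness for the locally Lipschitz GD flow, $\widetilde W=W^{\mathsf{GD}}$.

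Global existence follows because the right-hand sides are bounded: $\dot u\le b^2/m$ and $\dot v\le s^2$. The main obstacle is the bookkeeping in Step~2, namely getting the $P_f$ and $P_c$ eigenvalues of the softmax matrix right, including the $1/C$ centering terms. Writing $\mathsf P=\alpha I+\beta S+\gamma J$ explicitly keeps this step routine.
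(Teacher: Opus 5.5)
Your proof is correct and is essentially the paper's argument. The paper shows that the plane $\operatorname{span}\{Q_f,Q_c\}$ is invariant under the gradient (Lemma~\ref{lem:W-reduction}, where the softmax coefficients $\alpha,\beta$ from Lemma~\ref{lem:loss-reduction} are exactly your two eigenvalues) and then reads off $\dot u=-b^2\alpha/C$ and $\dot v=-s^2\beta/C$; your lift $\widetilde W$ is precisely that plane's parametrization $\widetilde W=\frac{u}{b}Q_f+\frac{v}{s}Q_c$, since $P_fF^\top=bQ_f$ and $P_cF^\top=sQ_c$, and your version differs only in routing the computation through the action of $F^\top F$ on the ranges of $P_f,P_c$ and in making explicit the uniqueness and global-existence steps that the paper leaves implicit.
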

This proposition shows that the GD trajectory in logit space stays in the canonical span of \(P_f\) and \(P_c\), and is governed by a closed two-dimensional ODE.

\begin{proposition}[Muon logit trajectory]
\label{prop:muon-centered-logit}
Assume \(a>0\), \(b>0\), and \(m,n\ge2\). Let
\(
s=\sqrt{a^2n+b^2}.
\)
Along the zero-initialized idealized Muon flow, the raw output and centered representation satisfy
\[
    W^{\mathsf{Muon}}(t)F= Z^{\mathsf{Muon}}(t)
    =
    u^{\mathsf{Muon}}(t)P_f+v^{\mathsf{Muon}}(t)P_c,
\]
where
\(
    u^{\mathsf{Muon}}(t)=bt,
    \,
    v^{\mathsf{Muon}}(t)=st .
\)
\end{proposition}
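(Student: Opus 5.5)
We will guess the trajectory $W^{\mathsf{Muon}}(t)=t\,(Q_f+Q_c)$, where $Q_f,Q_c$ are the canonical directions from the Notations section. We then verify that this curve solves the Muon flow and that it produces the stated logits.

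\textbf{Step 1 (gradient structure).} Write $Z=\Pi WF$. The chain rule gives
\[
\nabla_W\mathcal L(W)=\Pi\, G(Z)\, F^\top,
\qquad
G(Z)=\tfrac1C\bigl(\soft(Z)-I_C\bigr).
\]
Suppose $Z=uP_f+vP_c$. The class-permutation symmetry (permuting within blocks and permuting blocks) preserves $Z$. Hence $\soft(Z)$ lies in the commutant algebra $\mathrm{span}\{I_C,S,J\}=\mathrm{span}\{P_f,P_c,J\}$. We can therefore write $\Pi G(Z)=\alpha P_f+\beta P_c$ for scalars $\alpha,\beta$, using $\Pi J=0$, $\Pi P_f=P_f$ and $\Pi P_c=P_c$. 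Explicitly, $\alpha$ and $\beta$ are the eigenvalues of $G$ on $\mathrm{range}(P_f)$ and $\mathrm{range}(P_c)$.

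Next we compute $P_fF^\top$ and $P_cF^\top$. Since $RR^\top=nI_m$, we have $SR^\top=nR^\top$, and so $P_fR^\top=0$. This gives
\[
P_fF^\top=[0,\,bP_f]=b\,Q_f,
\qquad
P_cF^\top=[aP_cR^\top,\,bP_c]=s\,Q_c .
\]
Hence
\[
\nabla_W\mathcal L=\alpha b\,Q_f+\beta s\,Q_c .
\]

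\textbf{Step 2 (spectral normalization).} We check three identities:
\[
Q_fQ_f^\top=P_f,
\qquad
Q_cQ_c^\top=\tfrac{1}{s^2}\bigl(a^2P_cSP_c+b^2P_c\bigr)=\tfrac{a^2n+b^2}{s^2}P_c=P_c,
\qquad
Q_fQ_c^\top=\tfrac{b}{s}P_fP_c=0 .
\]
So $Q_f$ and $Q_c$ are partial isometries with mutually orthogonal left spaces, and by the analogous computation also orthogonal right spaces. Consequently $\alpha bQ_f+\beta sQ_c$ has an SVD built from those of $Q_f$ and $Q_c$, with singular values $|\alpha| b$ and $|\beta| s$. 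If $\alpha<0$ and $\beta<0$, then
\[
\spec(\nabla_W\mathcal L)=-(Q_f+Q_c).
\]

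The sign claim is the main point to check. The column $c$ of $G$ is $\tfrac1C(\soft(Z)_{\cdot c}-e_c)$. Off-diagonal entries are positive, and the diagonal entry is $p_{cc}-1<0$. For $u,v\ge0$ the within-class probability dominates, which yields $\alpha=\tfrac1C(\text{diag}-\text{within-block off-diagonal})<0$. A similar comparison of the block-level entries gives $\beta<0$. I would confirm this directly by evaluating the eigenvalues of $G$ from the three distinct softmax entries. Since $u,v\ge0$ holds along the candidate path, including $t=0$ where $Z=0$, the signs hold throughout.

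\textbf{Step 3 (verification and logits).} The candidate $W(t)=t(Q_f+Q_c)$ has $\dot W=Q_f+Q_c=-\spec(\nabla_W\mathcal L(W(t)))$, so it solves the flow. Its logits are
\[
W(t)F=t\,(Q_fF+Q_cF),
\qquad
Q_fF=bP_f,
\qquad
Q_cF=\tfrac1s\bigl(a^2P_cS+b^2P_c\bigr)=sP_c .
\]
Thus $W(t)F=btP_f+stP_c$. This combination is already centered, so $Z^{\mathsf{Muon}}(t)=W(t)F$ with $u=bt$ and $v=st$.

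The delicate point is uniqueness, because $\spec$ is discontinuous where the gradient changes rank. I would handle it by working within the idealized flow. The gradient keeps constant rank $r_f+r_c$ along the trajectory, since $\alpha,\beta\neq0$. On the open set of such gradients, $\spec$ is smooth, hence locally Lipschitz. Standard ODE uniqueness then identifies this solution as the Muon trajectory.
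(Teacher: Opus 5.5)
Your proposal is correct and follows essentially the same route as the paper. On the canonical plane the gradient is a negative combination of the orthogonal partial isometries $Q_f,Q_c$, so $\spec(\nabla_W\mathcal L)=-(Q_f+Q_c)$, which gives $W^{\mathsf{Muon}}(t)=t(Q_f+Q_c)$ and $W^{\mathsf{Muon}}(t)F=t(bP_f+sP_c)$.

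Your extra uniqueness remark, which the paper omits, needs one more fact. Rank-$(C-1)$ matrices are not open in $\mathbb R^{C\times(m+C)}$. However, every gradient satisfies $\mathbf 1^\top\nabla_W\mathcal L=0$, so its rank is at most $C-1$ everywhere. Together with lower semicontinuity of rank, this makes $W\mapsto\spec(\nabla_W\mathcal L(W))$ smooth on a neighborhood of the trajectory.
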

This proposition gives the corresponding canonical-coordinate description for the Muon logits. In this case, both coordinates are linear functions of time.

\begin{proposition}[Adam logit trajectory]
\label{prop:signgd-centered-logit}
Assume \(a>0\), \(b>0\), and \(m,n\ge2\). Along the zero-initialized
Adam flow, the raw output satisfies
\[
    W^{\mathsf{Adam}}(t)F
    =
    u^{\mathsf{Adam}}(t)P_f
    +
    v^{\mathsf{Adam}}(t)P_c
    +
    \frac{\zeta^{\mathsf{Adam}}(t)}{C}\cdot J,
\]
where
\(
    u^{\mathsf{Adam}}(t)=2bt,
    \,
    v^{\mathsf{Adam}}(t)=2(an+b)t,
\)
and
\(
    \zeta^{\mathsf{Adam}}(t)
    =
    t\bigl(an(2-m)+b(2-mn)\bigr).
\)
Equivalently, the centered representation  satisfies
\[
     Z^{\mathsf{Adam}}(t)
    =
    u^{\mathsf{Adam}}(t)P_f
    +
    v^{\mathsf{Adam}}(t)P_c.
\]
\end{proposition}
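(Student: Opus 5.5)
The plan is to guess the trajectory $W^{\mathsf{Adam}}(t)=t\,\Sigma$ for a fixed sign matrix $\Sigma$, and then verify that the entrywise sign of the gradient stays equal to $-\Sigma$ along this ray for all $t\ge0$. Since every gradient entry will turn out to be nonzero along the ray, $\sgn$ is locally constant (hence smooth) in a neighbourhood of the ray. The ray is then the unique solution of the sign flow started at $0$, and the formula for $W^{\mathsf{Adam}}(t)F$ follows by direct multiplication.

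\textbf{Gradient formula.} Writing $P(W)=\soft(\Pi WF)$ column-wise, the chain rule through the centered logits gives
\[
\nabla_W\mathcal L(W)=\tfrac1C\,\Pi\,(P(W)-I_C)\,F^\top=\tfrac1C\,\Pi\,(P(W)-I_C)\,[aR^\top,\ bI_C].
\]
Suppose $WF$ lies in $\mathcal M$. Then $\Pi WF=uP_f+vP_c$, and the softmax matrix has the same block symmetry: each column has an entry $p_d$ on the diagonal, $p_s$ on the other $n-1$ same-block entries, and $p_o$ on the $n(m-1)$ cross-block entries. These are all strictly positive, and $p_d+(n-1)p_s+n(m-1)p_o=1$.

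\textbf{Sign pattern.} Since the columns of $P-I$ sum to zero, $\Pi(P-I)=P-I$.
\begin{itemize}
\item The $bI_C$ block of $-\nabla_W\mathcal L$ has entries $(1-p_d)/C>0$ on the diagonal and $-p_s/C<0$ or $-p_o/C<0$ off it. Its sign is therefore $2I_C-J_C$.
\item In the $aR^\top$ block, entry $(c,g)$ of $(P-I)R^\top$ sums column $c$ over block $g$. It equals $p_d-1+(n-1)p_s=-n(m-1)p_o<0$ when $g$ is the block of $c$, and $np_o>0$ otherwise. So $-\sgn$ of this block is $2R^\top-J_{C\times m}$.
\end{itemize}
At $t=0$ we have $P=J/C$, so the same pattern holds. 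Thus $\Sigma=[\,2R^\top-J_{C\times m},\ 2I_C-J_C\,]$ is constant as long as $WF\in\mathcal M$.

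\textbf{Closing the loop.} With $W=t\Sigma$, use $J_{C\times m}R=J_C$, $R^\top R=S$, $S/n=P_c+J/C$ and $I_C=P_f+P_c+J/C$:
\[
t\Sigma F=t\bigl(2aS-aJ+2bI-bJ\bigr)=2bt\,P_f+2(an+b)t\,P_c+t\bigl(2an+2b-aC-bC\bigr)\tfrac{J}{C}.
\]
This lies in $\mathcal M$, which closes the loop of the ansatz. It also gives $u=2bt$, $v=2(an+b)t$ and $\zeta=t\bigl(an(2-m)+b(2-mn)\bigr)$. Applying $\Pi$ kills $J$ (Lemma~\ref{lem:projectors-full}), which gives the centered representation.

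\textbf{Main obstacle.} The delicate point is well-posedness, because the sign flow is discontinuous in general. I would handle it by noting that all gradient entries are bounded away from zero on a neighbourhood of the compact segment $\{t\Sigma: t\in[0,T]\}$, by continuity of $P$ and strict positivity of $p_s$, $p_o$ and $1-p_d$. On that neighbourhood the vector field is constant, so any solution starting at $0$ coincides with $t\Sigma$ up to time $T$, and $T$ is arbitrary.
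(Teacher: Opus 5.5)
Your proof is correct and matches the paper's argument step for step. Both use the ansatz $W=tN$ with $N=[\,2R^\top-\mathbf{1}_C\mathbf{1}_m^\top,\ 2I_C-J\,]$, check the signs through the three-level softmax pattern (coarse-block sums $-n(m-1)p_o<0$ and $np_o>0$), and apply the projector identities to read off $u$, $v$, $\zeta$. Your added observation is a genuine improvement: since $-\sgn(\nabla_W\mathcal L)$ is constant on an open tube around the ray, $t\Sigma$ is the unique solution of the discontinuous sign flow, a well-posedness step the paper leaves implicit. The factor $b$ missing from your diagonal entry $(1-p_d)/C$ is harmless, since only signs matter.
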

This proposition gives the analogous description for the Adam logits. The only new feature is the additional \(J\)-component in the raw output, which disappears after centering and leaves a two-dimensional canonical trajectory. The proofs of these propositions are deferred to Appendix~\ref{proof_prop:gd-centered-logit}, \ref{proof_prop:muon-centered-logit} and \ref{proof_prop:signgd-centered-logit}.

\medskip
\noindent\textbf{Step 2: Monotonicity of the margin in the representation imbalance ratio \(\rho\).}

The next ingredient is that, at a fixed loss level, the matched-loss margin in $\mathcal{M}$ is controlled by the ratio $\rho=v/u$.

\begin{proposition}[Matched-loss margin as a function of the ratio]
\label{lem:fixed-ratio-margin}
Fix \(\epsilon\in(0,\log C)\). For each fixed \(\rho\ge1\), consider logits of the form
\(
M=rP_f+\rho rP_c+\zeta J/C .
\)
Among such logits, there is a unique scale \(r>0\) for which the corresponding logits reach loss \(\epsilon\). Moreover, any such logits reaching loss \(\epsilon\) have the same average margin.
We denote this common margin by \(r_\epsilon(\rho).\) Then the mapping \(r_\epsilon(\rho)\) is strictly decreasing in \(\rho\) on \([1,\infty)\).
\end{proposition}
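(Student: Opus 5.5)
The plan is to compute the loss and the margin in closed form on the symmetric class $\mathcal M$, and then read off the monotonicity with the implicit function theorem.

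First I would write out one column of $M=uP_f+vP_c+\zeta J/C$. Fix $c=(g,r)$. The entries of $P_f$ in column $c$ are:
\begin{itemize}
\item $1-1/n$ on the diagonal;
\item $-1/n$ at the other classes of block $g$;
\item $0$ elsewhere.
\end{itemize}
The entries of $P_c$ in column $c$ are:
\begin{itemize}
\item $1/n-1/C$ at every class of block $g$, including $c$ itself;
\item $-1/C$ at every class outside block $g$.
\end{itemize}
Hence, relative to the correct logit $M_{c,c}$, we get the following gaps:
\begin{itemize}
\item each of the $n-1$ same-block competitors is lower by exactly $u$;
\item each of the $n(m-1)$ other-block competitors is lower by $q=\frac{(n-1)u+v}{n}=u+\frac{v-u}{n}$.
\end{itemize}
The term $\zeta J/C$ shifts every entry of a column by the same constant. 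It therefore drops out of both the column-wise softmax and all logit differences, so the raw logits and the centered logits have the same margins. Consequently every column gives the same value, and
\[
\mathcal L(M)=\log\bigl(1+(n-1)e^{-u}+n(m-1)e^{-q}\bigr),
\]
which matches $\log D^{\mathsf{GD}}$ in Proposition~\ref{prop:gd-centered-logit}. Each class margin is $\min(u,q)$.

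Second, I would specialize to the ray $u=r$, $v=\rho r$ with $\rho\ge1$ and $r>0$. Then $q=r\bigl(1+(\rho-1)/n\bigr)\ge r$, so $\min(u,q)=r$ and the average margin is simply $\Gamma=r$. The loss becomes
\[
\ell(r,\rho)=\log\Bigl(1+(n-1)e^{-r}+n(m-1)e^{-r(1+(\rho-1)/n)}\Bigr).
\]
This is continuous and strictly decreasing in $r$: both exponents have positive coefficients. It equals $\log C$ at $r=0$ and tends to $0$ as $r\to\infty$. So for each $\epsilon\in(0,\log C)$ there is a unique $r_\epsilon(\rho)>0$ with $\ell=\epsilon$. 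Because $\zeta$ is irrelevant, every logit on that ray reaching loss $\epsilon$ has margin exactly $r_\epsilon(\rho)$, which is the common margin the statement refers to.

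Third, for the monotonicity, note that $\ell$ is smooth with $\partial_r\ell<0$. Also $\partial_\rho\ell<0$ for $r>0$, since $\rho$ enters only through $e^{-r(\rho-1)/n}$ with coefficient $n(m-1)>0$. The implicit function theorem applied to $\ell(r_\epsilon(\rho),\rho)=\epsilon$ then gives
\[
r_\epsilon'(\rho)=-\frac{\partial_\rho\ell}{\partial_r\ell}<0 .
\]
One can also argue without derivatives. If $\rho_1<\rho_2$, then $\ell(r_\epsilon(\rho_1),\rho_2)<\epsilon$. Strict monotonicity in $r$ then forces $r_\epsilon(\rho_2)<r_\epsilon(\rho_1)$.

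The only delicate point is the margin identification $\min(u,q)=u$. This is exactly where the restriction $\rho\ge1$ is used: it gives $v\ge u$ and hence $q\ge u$. I also need to check carefully that the entry bookkeeping of $P_f$ and $P_c$, including the $-1/C$ terms, cancels in the gaps. Everything else is routine.
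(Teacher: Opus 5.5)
Your proposal is correct and follows essentially the same route as the paper. Both compute the loss $\log\bigl(1+(n-1)e^{-r}+n(m-1)e^{-r(1+(\rho-1)/n)}\bigr)$ and the margin $\min\{u,q\}=r$ for $\rho\ge1$, both get uniqueness of the scale from strict monotonicity in $r$ between $\log C$ and $0$, and your derivative-free comparison for $\rho_1<\rho_2$ is exactly the paper's Step~3 (the implicit-function-theorem version is just an equivalent alternative).
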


The proof is deferred to Appendix~\ref{proof_fixed-ratio-margin}. This proposition is the bridge from margins to ratios: at matched loss, the smaller ratio \(\rho=v/u\) gives the larger margin.

\medskip
\noindent\textbf{Step 3: Ratio comparisons.}

We now use the reduction from Step 2 to compare the margin induced by three optimizers through their ratios \(\rho=v/u\). First consider Muon and Adam. By Step 1, we have
\[
    \rho_{\mathsf{Muon}}
    =
    \frac{v^{\mathsf{Muon}}(t)}
         {u^{\mathsf{Muon}}(t)}
    =
    \sqrt{1+\frac{a^2n}{b^2}},\qquad
    \rho_{\mathsf{Adam}}
    =
    \frac{v^{\mathsf{Adam}}(t)}
         {u^{\mathsf{Adam}}(t)}
    =
    1+\frac{an}{b}.
\]
Direct calculation gives
\(
    \rho_{\mathsf{Adam}}>\rho_{\mathsf{Muon}}>1
\). By Step 2, the margin is strictly
decreasing in \(\rho\) on \((1,\infty)\). Hence for every \(\epsilon\in(0,\log C)\), we have
\[
\Gamma(W_\epsilon^{\mathsf{Muon}})
=
r_\epsilon(\rho_{\mathsf{Muon}})
>
r_\epsilon(\rho_{\mathsf{Adam}})
=
\Gamma(W_\epsilon^{\mathsf{Adam}}).
\]
This proves part \textnormal{(i)}.

We next compare the margin of Muon with that of GD. Let
\(
\theta
=
\sqrt{1+{a^2n}/{b^2}}-1,
\,
\rho_M
=
1+\theta
=
\rho_{\mathsf{Muon}},
\,
\epsilon_\star
=
mn(1+m\theta)^{-n/\theta}.
\) By Lemma~\ref{lem:eps-star-scale}, we have \(0<\epsilon_\star<\log C .\) By Proposition~\ref{lem:gd-matched-loss-time}, for any fixed $\epsilon \in [\epsilon_\star, \log C)$, there exists a unique $t_\epsilon^{\mathsf{GD}}$ with $\mathcal{L}(W^{\mathsf{GD}}(t_\epsilon^{\mathsf{GD}})) = \epsilon$. The following proposition compares the ratio at this time with $\rho_{\mathsf{Muon}}$.


\begin{proposition}[GD is more coarse-biased than Muon above the explicit threshold]
\label{lem:gd-ratio-threshold}
Assume \(0<a\le b\). If
\(
\mathcal L(W^{\mathsf{GD}}(t))\ge \epsilon_\star,
\)
then we have
\(
{v^{\mathsf{GD}}(t)}/{u^{\mathsf{GD}}(t)}
>
\rho_{M} .
\)
Equivalently, for every \(\epsilon\in[\epsilon_\star,\log C)\),
\(
{
v^{\mathsf{GD}}(t_\epsilon^{\mathsf{GD}})
}/{
u^{\mathsf{GD}}(t_\epsilon^{\mathsf{GD}})
}
>
\rho_{M} .
\)
\end{proposition}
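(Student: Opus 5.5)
The plan is a phase-plane argument on the two-dimensional GD system of Proposition~\ref{prop:gd-centered-logit}. Write \(w=v^{\mathsf{GD}}-u^{\mathsf{GD}}\). Since \(q^{\mathsf{GD}}-u^{\mathsf{GD}}=w/n\), dividing the two ODEs gives the slope
\[
\frac{\dot v^{\mathsf{GD}}}{\dot u^{\mathsf{GD}}}
=\frac{m s^2/b^2}{\,m-1+e^{w/n}\,}
=\frac{m(1+\delta)}{m-1+e^{w/n}},
\]
which depends on the state only through \(w\). Note also that \(\dot u^{\mathsf{GD}}>0\).

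Using \(1+\delta=\rho_M^2\), we have \(m(1+\delta)/\rho_M-(m-1)=1+m\theta\). Hence the slope exceeds \(\rho_M\) if and only if
\[
w<w_\star:=n\log(1+m\theta).
\]
We track \(h(t)=v^{\mathsf{GD}}-\rho_M u^{\mathsf{GD}}=w-\theta u^{\mathsf{GD}}\). Its derivative is \(\dot h=\dot u^{\mathsf{GD}}\,(\text{slope}-\rho_M)\), and \(h>0\) is exactly the claim \(v^{\mathsf{GD}}/u^{\mathsf{GD}}>\rho_M\).

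The second ingredient is a loss bound that keeps \(u^{\mathsf{GD}}\) small before the threshold. First, \(w\ge0\) for all time: at \(w=0\) the slope is \(1+\delta>1\), so \(w\) cannot cross below \(0\). Consequently \(q^{\mathsf{GD}}\ge u^{\mathsf{GD}}\).

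The per-class loss equals \(\log D^{\mathsf{GD}}\), since the correct logit dominates by \(u\) within the block and by \(q\) across blocks. Therefore
\[
\mathcal L\le\log\bigl(1+(C-1)e^{-u^{\mathsf{GD}}}\bigr)<C e^{-u^{\mathsf{GD}}}.
\]
If \(u^{\mathsf{GD}}\ge w_\star/\theta\), then \(e^{-u^{\mathsf{GD}}}\le(1+m\theta)^{-n/\theta}\), so \(\mathcal L<\epsilon_\star\). Contrapositively, \(\mathcal L(W^{\mathsf{GD}}(t))\ge\epsilon_\star\) forces \(u^{\mathsf{GD}}(t)<w_\star/\theta\). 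By Proposition~\ref{lem:gd-matched-loss-time} the loss is monotone, so this bound holds at every earlier time \(t'\le t\) as well.

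Now fix \(t\) with \(\mathcal L(W^{\mathsf{GD}}(t))\ge\epsilon_\star\). We have \(h(0)=0\) and, near \(0\), \(\dot h=\dot u^{\mathsf{GD}}(1+\delta-\rho_M)>0\), so \(h>0\) on some interval \((0,\tau)\). Suppose \(h\) returns to \(0\) at a first time \(t_0\le t\). At \(t_0\) we have \(w=\theta u^{\mathsf{GD}}<\theta\cdot w_\star/\theta=w_\star\), so the slope exceeds \(\rho_M\) and \(\dot h(t_0)>0\). This contradicts \(h\) decreasing to \(0\) from positive values. Hence \(h(t)>0\), i.e., \(v^{\mathsf{GD}}(t)/u^{\mathsf{GD}}(t)>\rho_M\).

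The matched-loss form of the statement follows by applying this at \(t=t^{\mathsf{GD}}_\epsilon\) for \(\epsilon\in[\epsilon_\star,\log C)\). The main point to verify carefully is the loss formula \(\mathcal L=\log D^{\mathsf{GD}}\) from the structure of \(u P_f+vP_c\), together with the strict inequalities at the endpoint \(\mathcal L=\epsilon_\star\). The strictness comes from \(\log(1+x)<x\), so the case of equality at the threshold is also covered.
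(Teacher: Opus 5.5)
Your proposal is correct and follows essentially the same route as the paper's proof. Both track \(g=v-(1+\theta)u\), show it is positive for small \(t>0\), and prove that at a first zero one needs \(u\ge U_\star=(n/\theta)\log(1+m\theta)\); that forces \(\mathcal L<mn\,e^{-u}\le\epsilon_\star\) via \(\log(1+x)<x\). The only cosmetic differences are that you get the boundary sign of \(\dot g\) from the slope \(\dot v/\dot u\), and you prove \(w\ge 0\) globally, whereas the paper only uses \(q\ge u\) at the hitting time, where \(w=\theta u\).
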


The proof is deferred to Appendix~\ref{Proof_lem:gd-ratio-threshold}. Since $\epsilon \geq \epsilon_\star$, Proposition~\ref{lem:gd-ratio-threshold} gives
\[\rho_{\mathsf{GD}}(\epsilon)=\frac{v^{\mathsf{GD}}(t_\epsilon^{\mathsf{GD}})}{u^{\mathsf{GD}}(t_\epsilon^{\mathsf{GD}})}>\rho_M=\rho_{\mathsf{Muon}} .\]
Since the margin is strictly decreasing in $\rho$ by Step 2, we have 
\[
\Gamma(W_\epsilon^{\mathsf{Muon}})=r_\epsilon(\rho_{\mathsf{Muon}})>r_\epsilon\big(\rho_{\mathsf{GD}}(\epsilon)\big)=\Gamma(W_\epsilon^{\mathsf{GD}}).
\]
Hence Muon has strictly larger margin than GD for every $\epsilon\in[\epsilon_\star,\log C)$.

\medskip
\noindent\textbf{Step 4: Converting the explicit threshold into the asymptotic window.}

It remains to show that the theorem's asymptotic loss windows lie above
\(\epsilon_\star\). By Lemma~\ref{lem:eps-star-scale}, for every
\(0<c<c_0\),
\(\epsilon_\star=o\bigl(mn(1+m\sqrt n)^{-c\sqrt n}\bigr)\). Hence, if
\(\epsilon=\Omega\bigl(mn(1+m\sqrt n)^{-c\sqrt n}\bigr)\), then
\(\epsilon\ge\epsilon_\star\) for all sufficiently large \(m,n\). The
explicit-window comparison from Step 3 applies.

Finally, when \(m=\Theta(n)\), Lemma~\ref{lem:eps-star-scale} gives
\(\epsilon_\star=o\bigl(\exp(-c'\sqrt n\log n)\bigr)\) for every
\(0<c'<c_0'\). Therefore, if
\(\epsilon=\Omega\bigl(\exp(-c'\sqrt n\log n)\bigr)\), then
\(\epsilon\ge\epsilon_\star\) for all sufficiently large \(n\), and the same explicit-window comparison proves the claim. This proves part \textnormal{(ii)}.\\
Thus, we conclude the proof of Theorem~\ref{thm:main-margin}. 
\end{proof}

\subsection{Proof of Theorem \ref{thm:erank-merged}}
\label{sec:proof_of_thmerank}
\begin{proof}
We first verify that the matched-loss times in the theorem are well defined. By Proposition~\ref{lem:gd-matched-loss-time}, along the GD, Adam, and Muon flows, the losses are strictly decreasing from $\log C$ to $0$. Hence $t_\epsilon^{\mathsf{GD}}$, $t_\epsilon^{\mathsf{Adam}}$, and $t_\epsilon^{\mathsf{Muon}}$ are well defined for every $\epsilon \in (0, \log C)$.

We then prove the theorem in four steps. The proof follows the same structure as Theorem~\ref{thm:main-margin}; the only difference is Step 2, where we replace the margin monotonicity with an effective-rank monotonicity.

\begin{itemize}
    \item \textbf{Step 1: Reduction of optimizer dynamics to canonical coordinates.} We show that the centered representations of GD, Muon,
    and Adam all stay in the same two-dimensional canonical space,
    \(Z(t)=u(t)P_f+v(t)P_c\). For Muon and Adam, this gives
    explicit trajectories. For GD, it gives a two-dimensional ODE for
    \(u(t)\) and \(v(t)\).

    \item \textbf{Step 2: Monotonicity of effective rank in the representation imbalance ratio \(\rho=v/u\).} We compute the singular values of \(uP_f+vP_c\).
    This shows that the effective rank depends only on the ratio
    \(\rho\), and is strictly decreasing in \(\rho\) whenever
    \(\rho>1\).

    \item \textbf{Step 3: Ratio comparisons.} We compare the ratios induced by the three
    optimizers. Muon has a smaller ratio than Adam, which proves
    part~\textnormal{(i)}. For GD, we show that above an explicit
    terminal-loss threshold \(\epsilon_\star\), GD has a larger ratio than
    Muon. By Step 2, this implies that Muon has larger effective rank.

    \item \textbf{Step 4: Converting the explicit threshold into the asymptotic window.} We estimate the scale of \(\epsilon_\star\).
    This shows that the asymptotic loss windows in the theorem lie above
    \(\epsilon_\star\), so the explicit-threshold comparison from Step 3
    applies and proves part~\textnormal{(ii)}.
\end{itemize}

\medskip
\noindent\textbf{Step 1: Reduction of optimizer dynamics to canonical coordinates.}

By Propositions~\ref{prop:gd-centered-logit},
\ref{prop:muon-centered-logit}, and \ref{prop:signgd-centered-logit}, the representations of the three optimizers have the following forms. For Adam, Proposition~\ref{prop:signgd-centered-logit} gives
\[
Z^{\mathsf{Adam}}(t)
=
u^{\mathsf{Adam}}(t)P_f
+
v^{\mathsf{Adam}}(t)P_c,
\]
where
\(
u^{\mathsf{Adam}}(t)=2bt
\)
and
\(
v^{\mathsf{Adam}}(t)=2(an+b)t
\). Similarly, Propositions~\ref{prop:gd-centered-logit} and
\ref{prop:muon-centered-logit} give
\[
Z^{\mathsf{GD}}(t)
=
u^{\mathsf{GD}}(t)P_f+v^{\mathsf{GD}}(t)P_c,
\qquad
Z^{\mathsf{Muon}}(t)
=
u^{\mathsf{Muon}}(t)P_f+v^{\mathsf{Muon}}(t)P_c
\]
where \(
u^{\mathsf{Muon}}(t)=bt,\,
v^{\mathsf{Muon}}(t)=st
,\,s=\sqrt{a^2n+b^2}\). 
For GD, Proposition~\ref{prop:gd-centered-logit} gives the corresponding canonical coordinates \(u^{\mathsf{GD}}(t),v^{\mathsf{GD}}(t)\) through the two-dimensional ODE stated there.

\medskip
\noindent\textbf{Step 2: Monotonicity of effective rank in the representation imbalance ratio \(\rho\).}

By Step 1, for each optimizer
\(\mathsf{A}\in\{\mathsf{Muon},\mathsf{Adam},\mathsf{GD}\}\), the centered representation lies
in the two-dimensional canonical space
\[
     Z^{\mathsf{A}}(t)
    =
    u^{\mathsf{A}}(t)P_f+v^{\mathsf{A}}(t)P_c.
\]
Thus, to compare the effective ranks of different optimizers, it suffices
to understand the effective rank of matrices of the form
\(
    M=uP_f+vP_c,
    \,
    u>0,\, v>0.
\)
Let
\(
    \rho={v}/{u}.
\)
Because \(P_f\) and \(P_c\) are orthogonal projections with ranks
\(
    r_f=\operatorname{rank}(P_f)=m(n-1),
    \,
    r_c=\operatorname{rank}(P_c)=m-1,
\)
the matrix \(M\) acts by multiplication by \(u\) on \(\operatorname{Im}(P_f)\),
by multiplication by \(v\) on \(\operatorname{Im}(P_c)\), and vanishes on the
orthogonal complement. Hence the nonzero singular values of \(M\) are \(u\) with multiplicity $r_f$, $v$ with multiplicity  $r_c$. Therefore, normalizing the squared singular values by their total mass
$r_f u^2 + r_c v^2 = u^2(r_f + r_c\rho^2)$,
we obtain
\[
\mathrm{eRank}(M)
=
\exp\!\left(
\log(r_f+r_c\rho^2)
-
\frac{2r_c\rho^2}{r_f+r_c\rho^2}\log \rho
\right).
\]
We denote this quantity by \(E(\rho)\). Thus
\(\mathrm{eRank}(uP_f+vP_c)\) depends only on the ratio
\(\rho=v/u\). Moreover, direct calculation gives
\[
\frac{d}{d\rho}\log E(\rho)
=
-\frac{4r_fr_c\rho}{(r_f+r_c\rho^2)^2}\log \rho,
\]
which is strictly negative for every \(\rho>1\). Hence \(E(\rho)\) is strictly decreasing on \((1,\infty)\).
Consequently, once the optimizer trajectories are written in canonical
coordinates, comparing effective ranks is equivalent to comparing their
ratios \(\rho=v/u\): among ratios larger than one, the smaller ratio gives
the larger effective rank.

\medskip
\noindent\textbf{Step 3: Ratio comparisons.}

We now use the reduction from Step 2 to compare the effective ranks induced by three optimizers through their ratios \(\rho=v/u\). From Step 1, Muon and Adam have fixed ratios
\[
\rho_{\mathsf{Muon}}=\sqrt{1+\frac{a^2n}{b^2}},
\qquad
\rho_{\mathsf{Adam}}=1+\frac{an}{b}.
\]
Hence \(\rho_{\mathsf{Adam}}>\rho_{\mathsf{Muon}}>1\). By Step 2, the effective rank is strictly decreasing in \(\rho\) on \((1,\infty)\). Therefore, for every
\(\epsilon\in(0,\log C)\), we have
\(\mathrm{eRank}\bigl( Z(W_\epsilon^{\mathsf{Muon}})\bigr)>\mathrm{eRank}\bigl( Z(W_\epsilon^{\mathsf{Adam}})\bigr)\).
This proves part \textnormal{(i)}.

We next compare the effective rank of Muon with that of GD. Let
\(\theta=\sqrt{1+{a^2n}/{b^2}}-1,
\,\rho_M=1+\theta=\rho_{\mathsf{Muon}},\,\epsilon_\star=mn(1+m\theta)^{-n/\theta}.\) By Lemma~\ref{lem:eps-star-scale}, we have \(0<\epsilon_\star<\log C .\) We first prove the comparison on the explicit loss window
\(\epsilon\in[\epsilon_\star,\log C).\) By
Proposition~\ref{lem:gd-matched-loss-time}, for any fixed $\epsilon$ in this window, there exists a unique time
\(t_\epsilon^{\mathsf{GD}}\) satisfying
\(
\mathcal L(W^{\mathsf{GD}}(t_\epsilon^{\mathsf{GD}}))=\epsilon .
\)
Since \(\epsilon\ge\epsilon_\star\), Proposition~\ref{lem:gd-ratio-threshold}
gives
\[
\frac{
v^{\mathsf{GD}}(t_\epsilon^{\mathsf{GD}})
}{
u^{\mathsf{GD}}(t_\epsilon^{\mathsf{GD}})
}
>
\rho_M=\rho_{\mathsf{Muon}}.
\]

Since effective rank is strictly decreasing in \(\rho\) by Step 2, we obtain \(\mathrm{eRank}\bigl( Z(W_\epsilon^{\mathsf{Muon}})\bigr)
>
\mathrm{eRank}\bigl( Z(W_\epsilon^{\mathsf{GD}})\bigr)\). Hence Muon has strictly larger effective rank than GD for every
\(\epsilon\in[\epsilon_\star,\log C)\).

\medskip
\noindent\textbf{Step 4: Converting the explicit threshold into the asymptotic window.}

The argument is identical to Step 4 in the proof of 
Theorem~\ref{thm:main-margin}: by Lemma~\ref{lem:eps-star-scale}, 
the asymptotic loss windows in the theorem lie above \(\epsilon_\star\), 
so the explicit-threshold comparison from Step 3 yields
\[
\mathrm{eRank}\bigl(Z(W_\epsilon^{\mathsf{Muon}})\bigr)
> \mathrm{eRank}\bigl(Z(W_\epsilon^{\mathsf{GD}})\bigr).
\]
This proves part \textnormal{(ii)}. Thus, we conclude the proof of Theorem~\ref{thm:erank-merged}. 
\end{proof}

%% file: appendix/proof_prop.tex
\section{Proofs of Supporting Propositions}
\label{app:prop}

\subsection{Proof of Proposition~\ref{prop:gd-centered-logit}}
\label{proof_prop:gd-centered-logit}
\begin{proof}
By Lemma~\ref{lem:W-reduction}, whenever
\(W=\mu_fQ_f+\mu_cQ_c\), \(u=b\mu_f\), and \(v=s\mu_c\), we have
\[
\nabla_W\mathcal L(W)
=
\frac{b\,\alpha(u,v)}{C}\cdot Q_f
+
\frac{s\,\beta(u,v)}{C}\cdot Q_c .
\]
Thus, the GD vector field is tangent to \(\operatorname{span}\{Q_f,Q_c\}\).
Since \(W^{\mathsf{GD}}(0)=0\), the GD trajectory stays in this plane:
\(
W^{\mathsf{GD}}(t)=\mu_f(t)Q_f+\mu_c(t)Q_c .
\)
Define
\(
u^{\mathsf{GD}}(t)=b\mu_f(t),
\,
v^{\mathsf{GD}}(t)=s\mu_c(t).
\)
Then Lemma~\ref{lem:canonical-full} gives
\(
W^{\mathsf{GD}}(t)F
=
u^{\mathsf{GD}}(t)P_f+v^{\mathsf{GD}}(t)P_c.
\)
Since \(\Pi P_f=P_f\) and \(\Pi P_c=P_c\) by Lemma~\ref{lem:projectors-full}, we have
\[
Z^{\mathsf{GD}}(t)
=
 \Pi W^{\mathsf{GD}}(t)F
=
u^{\mathsf{GD}}(t)P_f+v^{\mathsf{GD}}(t)P_c.
\]

It remains to derive the ODE for \(u^{\mathsf{GD}}\) and \(v^{\mathsf{GD}}\).
From the GD flow and the expression for the gradient above, we have
\[
\dot u^{\mathsf{GD}}(t)
=
-\frac{b^2\alpha(u^{\mathsf{GD}}(t),v^{\mathsf{GD}}(t))}{C},
\qquad
\dot v^{\mathsf{GD}}(t)
=
-\frac{s^2\beta(u^{\mathsf{GD}}(t),v^{\mathsf{GD}}(t))}{C}.
\]
Substituting the formulas for \(\alpha\) and \(\beta\) from
Lemma~\ref{lem:loss-reduction}, with
$q^{\mathsf{GD}}(t)=\bigl((n-1)u^{\mathsf{GD}}(t)+v^{\mathsf{GD}}(t)\bigr)/n$
and
\(
D^{\mathsf{GD}}(t)
=
1+(n-1)e^{-u^{\mathsf{GD}}(t)}
+n(m-1)e^{-q^{\mathsf{GD}}(t)},
\)
we have
\[
\begin{cases}
\dot u^{\mathsf{GD}}(t)
=
\frac{b^2}{m}\,
\frac{
e^{-u^{\mathsf{GD}}(t)}
+
(m-1)e^{-q^{\mathsf{GD}}(t)}
}{
D^{\mathsf{GD}}(t)
},\\[6pt]
\dot v^{\mathsf{GD}}(t)
=
s^2\,
\frac{
e^{-q^{\mathsf{GD}}(t)}
}{
D^{\mathsf{GD}}(t)
}.
\end{cases}
\]
Finally, since \(W^{\mathsf{GD}}(0)=0\), we have
\(
u^{\mathsf{GD}}(0)=v^{\mathsf{GD}}(0)=0.
\)
This proves Proposition~\ref{prop:gd-centered-logit}.
\end{proof}

\subsection{Proof of Proposition~\ref{prop:muon-centered-logit}}
\label{proof_prop:muon-centered-logit}
\begin{proof}
By Lemma~\ref{lem:W-reduction}, whenever \(W=\mu_fQ_f+\mu_cQ_c\), \(u=b\mu_f\), and
\(v=s\mu_c\), we have
\[
\nabla_W\mathcal L(W)
=
\frac{b\,\alpha(u,v)}{C}Q_f
+
\frac{s\,\beta(u,v)}{C}Q_c,
\qquad
\alpha(u,v)<0,\quad \beta(u,v)<0.
\]
Moreover, Lemma~\ref{lem:canonical-full} shows that \(Q_f\) and \(Q_c\)
are orthogonal partial isometries with orthogonal row and column spaces.
Therefore, we have
\(
\spec\!\bigl(\nabla_W\mathcal L(W)\bigr)
=
-Q_f-Q_c
\)
on the canonical plane. Hence the idealized Muon flow satisfies
\[
\dot W^{\mathsf{Muon}}(t)=Q_f+Q_c,
\qquad
W^{\mathsf{Muon}}(0)=0,
\]
and consequently, we have
\(
W^{\mathsf{Muon}}(t)=t(Q_f+Q_c).
\) Multiplying by \(F\) and using Lemma~\ref{lem:canonical-full}, we obtain
\(
W^{\mathsf{Muon}}(t)F
=
t(Q_fF+Q_cF)
=
t(bP_f+sP_c).
\)
Since \(\Pi(uP_f+vP_c)=uP_f+vP_c\) by Lemma~\ref{lem:projectors-full}, we have
\(
Z^{\mathsf{Muon}}(t)
=
 \Pi W^{\mathsf{Muon}}(t)F
=
t(bP_f+sP_c).
\)
Equivalently, we have
\[
 Z^{\mathsf{Muon}}(t)
=
u^{\mathsf{Muon}}(t)P_f+v^{\mathsf{Muon}}(t)P_c,
\]
where
\(
u^{\mathsf{Muon}}(t)=bt,
\,
v^{\mathsf{Muon}}(t)=st.
\)
This proves Proposition~\ref{prop:muon-centered-logit}.
\end{proof}

\subsection{Proof of Proposition~\ref{prop:signgd-centered-logit}}
\label{proof_prop:signgd-centered-logit}
\begin{proof}
Set
\(
N=
\big[\,2R^\top-\mathbf 1_C\mathbf 1_m^\top,\ 2I_C-J\,\big].
\)
We first show that \(W^{\mathsf{Adam}}(t)=tN\). For this candidate
trajectory, direct calculation gives
\[
W^{\mathsf{Adam}}(t)F
=
t\bigl(a(2S-J)+b(2I_C-J)\bigr).
\]
By Lemma~\ref{lem:loss-reduction},
\(
\soft(W^{\mathsf{Adam}}(t)F)-I_C=\alpha_tP_f+\beta_tP_c
\)
for some \(\alpha_t<0\) and \(\beta_t<0\). Let \(p_{0,t},p_{1,t},p_{2,t}\)
be the correct, sibling, and other-group probabilities induced by
\(W^{\mathsf{Adam}}(t)F\). Then, we have
\(
\nabla_W\mathcal L(W^{\mathsf{Adam}}(t))
=
1/C\,[\,aM_tR^\top,\ bM_t\,],
\,
M_t=\soft(W^{\mathsf{Adam}}(t)F)-I_C .
\)
For a row \(c=(g,r)\), the coarse block satisfies
\[
(M_tR^\top)_{c,g}
=
(p_{0,t}-1)+(n-1)p_{1,t}
=
-n(m-1)p_{2,t}<0,
\]
and, for \(g'\neq g\),
\(
(M_tR^\top)_{c,g'}=np_{2,t}>0.
\)
The fine block satisfies
\(
(M_t)_{c,c}=p_{0,t}-1<0,
\,
(M_t)_{c,c'}>0
\,(c'\neq c).
\)
Thus, every entry of the gradient has a fixed nonzero sign along the
candidate trajectory, and hence we have
\(
-\sgn\!\bigl(\nabla_W\mathcal L(W^{\mathsf{Adam}}(t))\bigr)
=
N.
\)
Therefore, the Adam flow reduces to \(\dot W^{\mathsf{Adam}}(t)=N\) with
\(W^{\mathsf{Adam}}(0)=0\), so
\(
W^{\mathsf{Adam}}(t)=tN.
\) Using \(S=nP_c+J/m\), \(I_C=P_f+P_c+J/C\), and \(C=mn\), we get \(2S-J = 2nP_c + n(2-m)J/C\) and \(2I_C-J = 2P_f+2P_c+(2-C)J/C\).
Therefore, we have
\[
W^{\mathsf{Adam}}(t)F
=
2bt\,P_f
+
2(an+b)t\,P_c
+
\frac{t\bigl(an(2-m)+b(2-mn)\bigr)}{C}J.
\]
Thus, we have
\(
u^{\mathsf{Adam}}(t)=2bt,
\,
v^{\mathsf{Adam}}(t)=2(an+b)t,
\,
\zeta^{\mathsf{Adam}}(t)
=
t\bigl(an(2-m)+b(2-mn)\bigr).
\)
Finally, since \(\Pi P_f=P_f\), \(\Pi P_c=P_c\), and \(\Pi J=0\) by Lemma~\ref{lem:projectors-full}, we have
\(
Z^{\mathsf{Adam}}(t)
=
u^{\mathsf{Adam}}(t)P_f
+
v^{\mathsf{Adam}}(t)P_c.
\)
This proves Proposition~\ref{prop:signgd-centered-logit}.
\end{proof}
\subsection{Proof of Proposition \ref{lem:fixed-ratio-margin}}
\label{proof_fixed-ratio-margin}
\begin{proof}
Fix \(\epsilon\in(0,\log C)\). The proof proceeds in three steps.

\begin{itemize}
    \item \textbf{Step 1.} We compute the loss and margin for logits of the form
    \(M=rP_f+\rho rP_c+\zeta J/C\).

    \item \textbf{Step 2.} For each fixed \(\rho\ge1\), we show that there is a unique \(r>0\) for which the corresponding logits reach loss \(\epsilon\).
    We also show that all logits with this scale have the same margin,
    which makes \(r_\epsilon(\rho)\) well-defined.

    \item \textbf{Step 3.} We compare the matched-loss scales corresponding to
    two ratios \(\rho_1<\rho_2\), and show that \(r_\epsilon(\rho)\) is
    strictly decreasing in \(\rho\).
\end{itemize}

\medskip
\noindent\textbf{Step 1.}
Fix \(\rho\ge1\), and consider logits of the form
\(
M=rP_f+\rho rP_c+\zeta J/C .
\)
Define
\(
\kappa_\rho={(n-1+\rho)}/{n},
\,
D_\rho(r)
=
1+(n-1)e^{-r}
+n(m-1)e^{-\kappa_\rho r}.
\)
By Lemma~\ref{lem:loss-reduction}, the loss is
\(
\mathcal L(M)=\log D_\rho(r),
\)
which is independent of \(\zeta\). Since \(\rho\ge1\),
Lemma~\ref{lem:source-margin-symmetry} gives that
\(M_{c,c}-\max_{j\ne c}M_{j,c}=r
,\,\text{for all } c\in\mathcal C,\)
which is also independent of \(\zeta\).

\medskip
\noindent\textbf{Step 2.}
For fixed \(\rho\ge1\), reaching loss \(\epsilon\) is equivalent to
\(
D_\rho(r)=e^\epsilon .
\)
The function \(D_\rho(r)\) is continuous and strictly decreasing in \(r\),
with
\(
D_\rho(0)=C,
\,
\lim_{r\to\infty}D_\rho(r)=1.
\)
Since \(e^\epsilon\in(1,C)\), there exists a unique positive solution
\(r\) to \(D_\rho(r)=e^\epsilon\). Therefore, among logits of the form
\(M=rP_f+\rho rP_c+\zeta J/C\), there is a unique scale \(r>0\) that reaches
loss \(\epsilon\).

By Step 1, all logits with this matched-loss scale have the same margin,
namely \(r\), independently of \(\zeta\). Thus the common margin
defined in the lemma is well-defined, and it equals this unique
matched-loss scale:
\(
r_\epsilon(\rho)=r .
\)

\medskip
\noindent\textbf{Step 3.}
Take \(1\le\rho_1<\rho_2\). Then
\(\kappa_{\rho_1}<\kappa_{\rho_2}\), so for every \(r>0\),
\(
D_{\rho_2}(r)<D_{\rho_1}(r).
\)
Let \(r_i=r_\epsilon(\rho_i)\). Since \(D_{\rho_1}(r_1)=e^\epsilon\), we have
\(
D_{\rho_2}(r_1)<e^\epsilon .
\)
Because \(D_{\rho_2}\) is strictly decreasing in \(r\), its unique solution
to \(D_{\rho_2}(r)=e^\epsilon\) must satisfy \(r_2<r_1\). Therefore
\(\rho\mapsto r_\epsilon(\rho)\) is strictly decreasing on
\([1,\infty)\). This proves Proposition \ref{lem:fixed-ratio-margin}.
\end{proof}

\subsection{Proof of Proposition~\ref{lem:gd-matched-loss-time}}
\label{proof_gd-matched-loss-time}
\begin{proof} We start with the dynamics of GD. Since \(W^{\mathsf{GD}}(t)\) follows gradient flow, we have
\[
\frac{d}{dt}\mathcal L(W^{\mathsf{GD}}(t))
=
\left\langle
\nabla_W\mathcal L(W^{\mathsf{GD}}(t)),
\dot W^{\mathsf{GD}}(t)
\right\rangle_F
=
-\left\|\nabla_W\mathcal L(W^{\mathsf{GD}}(t))\right\|_F^2
<0
\]
for every finite \(t>0\). Hence \(\mathcal L(W^{\mathsf{GD}}(t))\) is
strictly decreasing.

By Proposition~\ref{prop:gd-centered-logit}, the GD centered representation satisfies
\(
Z^{\mathsf{GD}}(t)
=
u^{\mathsf{GD}}(t)P_f+v^{\mathsf{GD}}(t)P_c,
\)
and therefore Lemma~\ref{lem:loss-reduction} gives
\(
\mathcal L(W^{\mathsf{GD}}(t))
=
\log D^{\mathsf{GD}}(t).
\)
At initialization, \(u^{\mathsf{GD}}(0)=v^{\mathsf{GD}}(0)=0\), so
\(D^{\mathsf{GD}}(0)=C\), and hence
\(
\mathcal L(W^{\mathsf{GD}}(0))=\log C.
\)

It remains to show that the loss converges to \(0\). By
Proposition~\ref{prop:gd-centered-logit}, we have
\[
\dot u^{\mathsf{GD}}(t)
=
\frac{b^2}{m}
\frac{
e^{-u^{\mathsf{GD}}(t)}
+
(m-1)e^{-q^{\mathsf{GD}}(t)}
}{
D^{\mathsf{GD}}(t)
}
\ge
\frac{b^2}{mC}e^{-u^{\mathsf{GD}}(t)}.
\]
Direct calculation gives
\[
\frac{d}{dt}e^{u^{\mathsf{GD}}(t)}
=
e^{u^{\mathsf{GD}}(t)}
\dot u^{\mathsf{GD}}(t)
\ge
\frac{b^2}{mC}.
\]
Therefore, \(u^{\mathsf{GD}}(t)\to\infty\). Since
\(v^{\mathsf{GD}}(t)\ge0\), we also have
\(q^{\mathsf{GD}}(t)\to\infty\). Hence
\(D^{\mathsf{GD}}(t)\to1\), so
\(
\mathcal L(W^{\mathsf{GD}}(t))
=
\log D^{\mathsf{GD}}(t)
\to0.
\)

Thus, the loss is continuous and strictly decreasing from \(\log C\) to
\(0\). By the intermediate value theorem, for every
\(\epsilon\in(0,\log C)\), there exists a unique time
\(t_\epsilon^{\mathsf{GD}}>0\) such that
\(
\mathcal L(W^{\mathsf{GD}}(t_\epsilon^{\mathsf{GD}}))=\epsilon .
\)
Next, we consider the dynamics of Muon and Adam. By Proposition~\ref{prop:muon-centered-logit}, the raw output of Muon satisfies
\(
W^{\mathsf{Muon}}(t)F=bt\,P_f+st\,P_c .
\)
Applying Lemma~\ref{lem:loss-reduction} with \(u=bt\) and \(v=st\), we get
\[
\mathcal L(W^{\mathsf{Muon}}(t))
=
\mathcal L(W^{\mathsf{Muon}}(t)F)
=
\log\!\left(
1+(n-1)e^{-bt}
+n(m-1)e^{-((n-1)b+s)t/n}
\right).
\]
This expression is continuous and strictly decreasing in \(t\), equals
\(\log C\) at \(t=0\), and converges to \(0\) as \(t\to\infty\). Similarly, by Proposition~\ref{prop:signgd-centered-logit}, the raw output of Adam satisfies
\(
W^{\mathsf{Adam}}(t)F
=
2bt\,P_f
+
2(an+b)t\,P_c
+
\zeta^{\mathsf{Adam}}(t)J/C .
\)
The additive \(J/C\) component does not affect the loss. Applying
Lemma~\ref{lem:loss-reduction} with \(u=2bt\) and \(v=2(an+b)t\), we get
\[
\mathcal L(W^{\mathsf{Adam}}(t))
=
\mathcal L(W^{\mathsf{Adam}}(t)F)
=
\log\!\left(
1+(n-1)e^{-2bt}
+n(m-1)e^{-2((n-1)b+an+b)t/n}
\right).
\]
This expression is also continuous and strictly decreasing in \(t\), equals
\(\log C\) at \(t=0\), and converges to \(0\) as \(t\to\infty\).

The existence and uniqueness of \(t_\epsilon^{\mathsf{Muon}}\) and
\(t_\epsilon^{\mathsf{Adam}}\) then follow from the intermediate value theorem.
This proves Proposition~\ref{lem:gd-matched-loss-time}.
\end{proof}




\subsection{Proof of Proposition
\ref{lem:gd-ratio-threshold}}
\label{Proof_lem:gd-ratio-threshold}
\begin{proof}
Set
\[
w(t)=v(t)-u(t),
\qquad
g(t)=w(t)-\theta u(t)
=
v(t)-(1+\theta)u(t).
\]
For \(t>0\), since \(u(t)>0\) by Proposition~\ref{prop:gd-centered-logit}, we have
\(
{v(t)}/{u(t)}>\rho_M=1+\theta
\) if and only if \(
g(t)>0.
\)
Thus it suffices to show that \(g(t)>0\) whenever
\(\mathcal L(W^{\mathsf{GD}}(t))\ge\epsilon_\star\).

We proceed in three steps:
\begin{itemize}
    \item \textbf{Step 1.} We show that if \(g(t)\) first reaches zero, then the corresponding
    value of \(u(t)\) must be at least
    \[
    U_\star=\frac{n}{\theta}\log(1+m\theta).
    \]
    \item \textbf{Step 2.} We use this lower bound on \(u(t)\) to show that the GD loss at any
    such first hitting time is already strictly below \(\epsilon_\star\).
    \item \textbf{Step 3.} We conclude by contradiction that, as long as
    \(\mathcal L(W^{\mathsf{GD}}(t))\ge\epsilon_\star\), the hitting event cannot have
    occurred. Hence \(g(t)>0\), equivalently
    \(v(t)/u(t)>\rho_M\).
\end{itemize}

\medskip
\noindent\textbf{Step 1: a lower bound at the first hitting time.}
By Proposition~\ref{prop:gd-centered-logit}, with
\(
q(t)=u(t)+{w(t)}/{n},
\,
D(t)=1+(n-1)e^{-u(t)}+n(m-1)e^{-q(t)},
\)
the GD coordinates satisfy
\begin{equation}
\label{eq:ratio-lemma-u-ode}
\dot u(t)
=
\frac{b^2e^{-u(t)}}{mD(t)}
\left(1+(m-1)e^{-w(t)/n}\right),
\end{equation}
and
\begin{equation}
\label{eq:ratio-lemma-w-ode}
\dot w(t)
=
\dot v(t)-\dot u(t)
=
\frac{b^2e^{-u(t)}}{mD(t)}
\left((1+m\delta)e^{-w(t)/n}-1\right).
\end{equation}
At initialization,
\(
\dot u(0)={b^2}/{C},
\,
\dot v(0)={s^2}/{C}.
\)
Therefore, we have
\(
\lim_{t\downarrow0}{v(t)}/{u(t)}
=
{\dot v(0)}/{\dot u(0)}
=
{s^2}/{b^2}
=
1+\delta.
\)
Since
\(
1+\delta>\sqrt{1+\delta}=1+\theta=\rho_M,
\)
we have \(g(t)>0\) for all sufficiently small \(t>0\). If \(g(t)>0\) for all \(t>0\), then the lemma is already proved. Otherwise,
define the first hitting time
\(
t_\star=\inf\{t>0:\ g(t)\le0\}.
\)
By construction, we have \(g(t)>0\) for all \(0<t<t_\star\), and \(
g(t_\star)=0,
\,
\dot g(t_\star)\le0.
\)
On the boundary \(g=0\), we have \(w=\theta u\). Substituting
\(w=\theta u\) into Eqn.~\eqref{eq:ratio-lemma-u-ode} and
\eqref{eq:ratio-lemma-w-ode}, and using
\(
1+\delta=(1+\theta)^2,
\)
we have
\begin{align}
\dot g
&=
\dot w-\theta\dot u =
\frac{b^2e^{-u}}{mD}
(1+\theta)
\left((1+m\theta)e^{-\theta u/n}-1\right).
\label{eq:ratio-lemma-g-boundary}
\end{align}
Hence \(\dot g>0\) whenever
\(
u<U_\star,
\,
U_\star=\frac{n}{\theta}\log(1+m\theta).
\)
This contradicts \(\dot g(t_\star)\le0\). Therefore every first hitting time
must satisfy
\begin{equation}
\label{eq:ratio-lemma-u-lower}
u(t_\star)\ge U_\star
=
\frac{n}{\theta}\log(1+m\theta).
\end{equation}

\medskip
\noindent\textbf{Step 2: an upper bound on the loss at the first hitting time.}
By definition, at \(t_\star\), we have \(g(t_\star)=0\), which is equivalent to 
\(
v(t_\star)=(1+\theta)u(t_\star).
\)
Consequently, we have
\(
q(t_\star)
=
{\left((n-1)u(t_\star)+v(t_\star)\right)}/{n}=\left(1+{\theta}/{n}\right)u(t_\star).
\)
By Lemma~\ref{lem:loss-reduction} and direct calculation, we have
\begin{align}
\mathcal L(W^{\mathsf{GD}}(t_\star))
&=
\log\left(
1+(n-1)e^{-u(t_\star)}
+n(m-1)e^{-(1+\theta/n)u(t_\star)}
\right) \notag \\
&\le
\log\left(1+mn\,e^{-u(t_\star)}\right) \notag \\
&<
mn\,e^{-u(t_\star)} .
\label{eq:ratio-lemma-loss-hit-raw}
\end{align}
Using Eqn.~\eqref{eq:ratio-lemma-u-lower}, we get
\(
e^{-u(t_\star)}
\le
e^{-U_\star}
=
(1+m\theta)^{-n/\theta}.
\)
Combining this with Eqn.~\eqref{eq:ratio-lemma-loss-hit-raw} gives
\begin{equation}
\label{eq:ratio-lemma-loss-hit}
\mathcal L(W^{\mathsf{GD}}(t_\star))
<
mn(1+m\theta)^{-n/\theta}
=
\epsilon_\star.
\end{equation}

\medskip
\noindent\textbf{Step 3: the explicit loss window.}
Suppose, for contradiction, that there exists \(t>0\) such that
\(
\mathcal L(W^{\mathsf{GD}}(t))\ge\epsilon_\star
\) and \(
{v(t)}/{u(t)}\le\rho_M.
\)
Equivalently, \(g(t)\le0\). Since \(g(t)>0\) for all sufficiently small
\(t>0\), the first hitting time \(t_\star\) exists and satisfies
\(t_\star\le t\). By Proposition~\ref{lem:gd-matched-loss-time}, the GD loss is strictly
decreasing along the trajectory. Hence
\(
\mathcal L(W^{\mathsf{GD}}(t))
\le
\mathcal L(W^{\mathsf{GD}}(t_\star))
<
\epsilon_\star,
\)
where the strict inequality follows from Eqn.~\eqref{eq:ratio-lemma-loss-hit}.
This contradicts \(\mathcal L(W^{\mathsf{GD}}(t))\ge\epsilon_\star\). Therefore,
\(
\mathcal L(W^{\mathsf{GD}}(t))\ge\epsilon_\star
\) implies \(
{v(t)}/{u(t)}>\rho_M .
\) The equivalent exact-loss statement follows immediately by taking
\(t=t_\epsilon^{\mathsf{GD}}\). This proves Proposition
\ref{lem:gd-ratio-threshold}.
\end{proof}

%% file: appendix/proof_lemma.tex
\section{Proofs of Supporting Lemmas}
\label{sec:proofs-support}
\begin{lemma}[Projector algebra]
\label{lem:projectors-full}
The matrices \(P_f\) and \(P_c\) satisfy
\[
P_f^2=P_f,\qquad P_c^2=P_c,\qquad P_fP_c=P_cP_f=0,\qquad \Pi=P_f+P_c.\]
In particular, \(\Pi P_f=P_f\) and \(\Pi P_c=P_c\). Moreover, \(\Pi J=0\). 
Their ranks are
\(
r_f=\operatorname{rank}(P_f)=m(n-1),
\,
r_c=\operatorname{rank}(P_c)=m-1.
\)
\end{lemma}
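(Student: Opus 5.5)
The plan is to reduce every claim to algebraic identities satisfied by the two structural matrices $J=J_C$ and $S=R^\top R=I_m\otimes J_n$. First, under the index ordering $c=(g,r)$, $J_C=J_m\otimes J_n$. Using $J_n^2=nJ_n$, we get
\[
S^2=I_m\otimes J_n^2=nS,\qquad J^2=CJ.
\]
The mixed products are
\[
SJ=JS=(I_m\otimes J_n)(J_m\otimes J_n)=J_m\otimes nJ_n=nJ.
\]
So $S/n$ and $J/C$ are idempotent and symmetric, with $(S/n)(J/C)=(J/C)(S/n)=J/C$. All three projector identities follow from these relations by direct expansion.

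Next I verify idempotence and orthogonality. For $P_f=I-S/n$, expanding gives $P_f^2=I-2S/n+S^2/n^2=I-S/n=P_f$. For $P_c=S/n-J/C$, I get
\[
P_c^2=S/n-2(S/n)(J/C)+J/C=S/n-J/C=P_c.
\]
For the cross term, $P_fP_c=(I-S/n)(S/n-J/C)$ expands to $S/n-J/C-S/n+J/C=0$. The factors commute because $S$ and $J$ commute, so $P_cP_f=0$ as well. The identity $\Pi=P_f+P_c$ is immediate from the definitions, since the $S/n$ terms cancel. Then $\Pi P_f=P_f^2+P_cP_f=P_f$, and likewise $\Pi P_c=P_c$. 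Finally, $\Pi J=J-J^2/C=0$.

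For the ranks, $P_f$ and $P_c$ are symmetric idempotents, so each rank equals its trace. The trace of $S$ is $C$, since its diagonal entries equal $1$, and the trace of $J$ is also $C$. This gives
\[
\operatorname{tr}P_f=C-C/n=m(n-1),\qquad \operatorname{tr}P_c=C/n-C/C=m-1.
\]

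There is no real obstacle here. The only point needing care is the Kronecker identity $J_C=J_m\otimes J_n$, which depends on ordering the classes as $c=(g,r)$ consistently with the definition of $R$. That is harmless, because all the relations used are invariant under simultaneously permuting rows and columns.
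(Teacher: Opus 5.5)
Your proof is correct and follows the paper's route: both reduce everything to $S^2=nS$, $SJ=JS=nJ$, $J^2=CJ$ and then expand each product directly. Your rank computation via $\operatorname{rank}=\operatorname{tr}$ for idempotents is a more explicit version of the paper's one-line appeal to the dimensions of the within-group and between-group contrast subspaces. Computing $\Pi J = J - J^2/C = 0$ directly is equivalent to the paper's $P_fJ + P_cJ = 0$.
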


\begin{proof}
Since \(S=I_m\otimes J_n\), we have \(S^2=nS\), \(SJ=JS=nJ\), and \(J^2=CJ\).
Direct expansion gives
\[
P_f^2=\Big(I_C-\frac1nS\Big)^2=I_C-\frac2nS+\frac1{n^2}S^2=I_C-\frac1nS=P_f,
\]
and similarly, we have
\[
P_c^2=\Big(\frac1nS-\frac1C J\Big)^2
=\frac1{n^2}S^2-\frac2{nC}SJ+\frac1{C^2}J^2
=\frac1nS-\frac1C J=P_c.
\]
Direct calculation gives
\[
P_fP_c=\Big(I_C-\frac1nS\Big)\Big(\frac1nS-\frac1CJ\Big)=0,
\qquad
P_cP_f=0, \qquad
P_f+P_c=I_C-\frac1nS+\frac1nS-\frac1CJ=\Pi.
\]
Combining \(P_f^2=P_f\), \(P_cP_f=0\), and \(P_f+P_c=\Pi\) yields \(\Pi P_f=P_f\); the identity \(\Pi P_c=P_c\) follows symmetrically. Finally, using \(SJ = nJ\) and \(J^2 = CJ\), direct expansion gives \(P_fJ = J - SJ/n = 0\) and \(P_cJ = SJ/n - J^2/C = 0\), hence \(\Pi J = P_fJ + P_cJ = 0\). The rank statements follow from the dimensions of the within-group and between-group contrast subspaces. 
This proves Lemma~\ref{lem:projectors-full}.
\end{proof}

\begin{lemma}[Loss on the exact symmetry class]
\label{lem:loss-reduction}
Let
\(
M=u P_f+v P_c+\zeta {J}/{C}\in\mathcal M,\,
q={\big((n-1)u+v}\big)/{n},
\,
D(u,v)=1+(n-1)e^{-u}+n(m-1)e^{-q}.
\)
Then every training example has the same loss, and we have
\[
\mathcal L_{ }(M)=\log D(u,v).
\]
Moreover, we have
\(
\soft(M)-I_C=\alpha(u,v)P_f+\beta(u,v)P_c,
\)
where \(\soft(M)\) denotes the columnwise softmax matrix and
\[
\alpha(u,v)
=
-\frac{n\big(e^{-u}+(m-1)e^{-q}\big)}{D(u,v)},
\qquad
\beta(u,v)
=
-\frac{mn\,e^{-q}}{D(u,v)}.
\]
In particular, \(\alpha(u,v)<0\) and \(\beta(u,v)<0\).
\end{lemma}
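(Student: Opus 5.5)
The plan is to compute the column-wise softmax of $M$ directly, using the explicit entries of $P_f$, $P_c$ and $J$. Then re-express $\soft(M)-I_C$ in the basis $\{P_f,P_c,J/C\}$. Since $\Pi J=0$ and softmax is shift-invariant, $\zeta$ plays no role, so I would carry it along only to confirm it cancels.

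\textbf{Step 1: entries of a column.} Fix a column $c=(g,r)$ and read off the entries of the two projectors (Lemma~\ref{lem:projectors-full}). For $P_f=I_C-S/n$, the entry in row $j$ is $1-1/n$ if $j=c$, $-1/n$ if $j$ is a sibling of $c$ (same block, $j\neq c$), and $0$ otherwise. For $P_c=S/n-J/C$, it is $1/n-1/C$ if $j$ lies in block $g$, and $-1/C$ otherwise. Hence the logit gaps are
\[
M_{c,c}-M_{j,c}=u \ \text{for siblings}, \qquad M_{c,c}-M_{j,c}=\tfrac{(n-1)u+v}{n}=q \ \text{for the }n(m-1)\text{ other-block rows},
\]
and both gaps are independent of $\zeta$ and of $c$.

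\textbf{Step 2: probabilities and loss.} Dividing numerator and denominator of the softmax by $e^{M_{c,c}}$ gives, in every column,
\[
p_0=\frac{1}{D},\qquad p_1=\frac{e^{-u}}{D},\qquad p_2=\frac{e^{-q}}{D},
\]
for the correct, sibling and other-block probabilities, with $D=D(u,v)$. Every example therefore has loss $-\log p_0=\log D$, and averaging gives $\mathcal L(M)=\log D(u,v)$.

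\textbf{Step 3: decomposition of $\soft(M)-I_C$.} Collecting the entries column by column,
\[
\soft(M)=p_0 I_C+p_1(S-I_C)+p_2(J-S).
\]
I would then substitute $I_C=P_f+P_c+J/C$, $S=n(P_c+J/C)$ and $J=C\cdot J/C$, all of which follow from the definitions of $P_f$ and $P_c$.
\begin{itemize}
\item The $J/C$ coefficient is $(p_0-1)+(n-1)p_1+(C-n)p_2$, which is the column sum of probabilities minus one, hence $0$.
\item The $P_f$ coefficient is $p_0-1-p_1=\bigl(1-D-e^{-u}\bigr)/D$. Substituting $D$, this equals $-n\bigl(e^{-u}+(m-1)e^{-q}\bigr)/D=\alpha(u,v)$.
\item The $P_c$ coefficient is $p_0-1+(n-1)p_1-np_2=\bigl(1-D+(n-1)e^{-u}-ne^{-q}\bigr)/D$. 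This simplifies to $-mn\,e^{-q}/D=\beta(u,v)$.
\end{itemize}
Both $\alpha$ and $\beta$ are negative, since $D>0$ and every exponential is positive.

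The only step needing care is the bookkeeping in Step 3, namely verifying the identity $S=n(P_c+J/C)$ and the vanishing $J/C$ coefficient. This identity follows immediately from $P_c=S/n-J/C$, so I do not expect a genuine obstacle.
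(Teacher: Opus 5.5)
Your proposal is correct and follows essentially the same route as the paper: compute the per-column logit gaps $u$ and $q$, read off $p_0=1/D$, $p_1=e^{-u}/D$, $p_2=e^{-q}/D$, and identify $\alpha,\beta$ from the three-level column pattern. The only difference is bookkeeping in Step~3: you expand $\soft(M)=p_0I_C+p_1(S-I_C)+p_2(J-S)$ in the basis $\{P_f,P_c,J/C\}$ and explicitly check that the $J/C$ coefficient vanishes, whereas the paper matches only the other-group entry and the correct-minus-sibling gap and leaves that consistency check implicit.
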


\begin{proof}
Fix a column \(j=(g,r)\).
Since \(\zeta J/C\) adds the same scalar \(\zeta/C\) to every entry of every column,
it does not affect the columnwise softmax.
Hence it suffices to compute \(\soft(uP_f+vP_c)\).

In column \(j\), there are three logit values:
\[
z_{\mathrm{corr}}
=
u\Big(1-\frac1n\Big)+v\Big(\frac1n-\frac1C\Big),
\qquad
z_{\mathrm{sib}}
=
-\frac{u}{n}+v\Big(\frac1n-\frac1C\Big),\qquad
z_{\mathrm{other}}
=
-\frac{v}{C}.
\]
Therefore, we have
\(
z_{\mathrm{corr}}-z_{\mathrm{sib}}=u,
\,
z_{\mathrm{corr}}-z_{\mathrm{other}}={\left((n-1)u+v\right)}/{n}=q.
\)
Hence the correct probability is
\(
p_0=1/{D(u,v)},
\)
each sibling probability is
\(
p_1={e^{-u}}/{D(u,v)},
\)
and each other-group probability is
\(
p_2={e^{-q}}/{D(u,v)}.
\)
Thus every training example has the same loss and
\(
\mathcal L_{ }(M)=-\log p_0=\log D(u,v).
\)

Now the \(j\)-th column of \(\soft(M)-I_C\) takes the value
\(
p_0-1\) {on the correct entry,}
\(
p_1\) {on siblings,} \(p_2\) {on other-group entries.}
This is exactly the three-level pattern of a matrix of the form \(\alpha P_f+\beta P_c\).
Matching the other-group value gives $-{\beta}/{C}=p_2$, which gives
\(
\beta=-Cp_2=-{mn\,e^{-q}}/{D(u,v)}.
\)
Matching the difference between a correct entry and a sibling entry gives
\(
\alpha=(p_0-1)-p_1
=
-{n\big(e^{-u}+(m-1)e^{-q}\big)}/{D(u,v)}.
\)
The negativity of \(\alpha,\beta\) is immediate.
This proves Lemma~\ref{lem:loss-reduction}.
\end{proof}

\begin{lemma}[Canonical feature directions]
\label{lem:canonical-full}
Let \(s=\sqrt{a^2n+b^2}\). The following identities hold.

\begin{enumerate}
    \item The canonical directions are orthogonal:
    \[
    Q_fQ_f^\top=P_f,\qquad
    Q_cQ_c^\top=P_c,\qquad
    Q_fQ_c^\top=0.
    \]

    \item Multiplication by the feature matrix \(F\) gives the fine and
    coarse logit components:
    \[
    Q_fF=bP_f,
    \qquad
    Q_cF=sP_c.
    \]

    \item For every \(u,v\in\mathbb R\), the canonical weight
    $W_{\mathrm{cano}}(u,v)=u/b\,Q_f+v/s\,Q_c$
    satisfies
    \[
    W_{\mathrm{cano}}(u,v)F
    =
    \Pi W_{\mathrm{cano}}(u,v)F
    =
    uP_f+vP_c.
    \]
\end{enumerate}
\end{lemma}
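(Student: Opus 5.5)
The proof is direct matrix algebra. It uses only the projector identities of Lemma~\ref{lem:projectors-full} and the two relations $S=R^\top R$ and $S^2=nS$, $SJ=JS=nJ$ established in its proof. The one extra identity I will need is
\[
P_cS=SP_c=nP_c .
\]
It follows by expanding $(S/n-J/C)S=S^2/n-JS/C=S-nJ/C=nP_c$, and similarly on the other side. From this and $P_cJ=0$ I also get $P_cSP_c=nP_c^2=nP_c$.

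\textbf{Item 1 (orthogonality).} Since $Q_f=[0,P_f]$, we have $Q_fQ_f^\top=P_fP_f^\top=P_f$, because $P_f$ is symmetric and idempotent. For $Q_c$, block multiplication gives
\[
Q_cQ_c^\top=\tfrac{1}{s^2}\bigl(a^2P_cR^\top RP_c+b^2P_c^2\bigr)=\tfrac{1}{s^2}\bigl(a^2P_cSP_c+b^2P_c\bigr).
\]
Using $P_cSP_c=nP_c$, this equals $\tfrac{a^2n+b^2}{s^2}P_c=P_c$ by the definition of $s$. The cross term is $Q_fQ_c^\top=\tfrac1s\bigl(0\cdot aRP_c+bP_fP_c\bigr)=0$, since $P_fP_c=0$.

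\textbf{Item 2 (action on features).} The feature matrix is $F=[aR^\top,bI_C]^\top$, so its top $m\times C$ block is $aR$ and its bottom block is $bI_C$. Then $Q_fF=P_f\cdot bI_C=bP_f$. For the coarse direction,
\[
Q_cF=\tfrac1s\bigl(a^2P_cR^\top R+b^2P_c\bigr)=\tfrac1s\bigl(a^2P_cS+b^2P_c\bigr)=\tfrac{a^2n+b^2}{s}P_c=sP_c,
\]
again using $P_cS=nP_c$.

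\textbf{Item 3 (canonical weights).} By linearity and item 2,
\[
W_{\mathrm{cano}}(u,v)F=\tfrac ub\,bP_f+\tfrac vs\,sP_c=uP_f+vP_c .
\]
Applying $\Pi$ leaves this unchanged, because $\Pi P_f=P_f$ and $\Pi P_c=P_c$ by Lemma~\ref{lem:projectors-full}.

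The only step requiring care is the identity $P_cS=nP_c$. Everything else is bookkeeping, provided one keeps the block orientation of $F$ straight: its top block is $aR$, not $aR^\top$.
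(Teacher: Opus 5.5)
Your proof is correct and follows the paper's argument step for step: block-multiply $Q_f$ and $Q_c$ against their transposes and against $F$, simplify with $R^\top R=S$ and $P_cS=nP_c$, and obtain item 3 by linearity together with $\Pi P_f=P_f$ and $\Pi P_c=P_c$. The only difference is that you verify $P_cS=SP_c=nP_c$ explicitly, whereas the paper uses it without comment; your appeal to $P_cJ=0$ when deriving $P_cSP_c=nP_c$ is unnecessary, since $P_c^2=P_c$ already suffices.
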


\begin{proof}
By Lemma~\ref{lem:projectors-full}, we have \(P_fP_c=0\). Moreover, by direct calculation, we have \(P_fR^\top=0\). Hence, we have
\[
Q_fQ_c^\top=\frac1s\big(0\cdot aRP_c + P_f\cdot bP_c\big)=0,
\qquad
Q_fQ_f^\top=P_f^2=P_f.
\]
Also, direct calculation gives
\[
Q_cQ_c^\top
=
\frac1{s^2}\big(a^2P_cR^\top RP_c+b^2P_c^2\big)
=
\frac1{s^2}\big(a^2P_cSP_c+b^2P_c\big)
=
\frac1{s^2}(a^2n+b^2)P_c
=P_c\]
\[Q_fF=[\,0,\ P_f\,]\begin{bmatrix}aR\\ bI_C\end{bmatrix}=bP_f,
\]
and
\[
Q_cF
=
\frac1s[\,aP_cR^\top,\ bP_c\,]\begin{bmatrix}aR\\ bI_C\end{bmatrix}
=
\frac1s(a^2P_cR^\top R+b^2P_c)
=
\frac1s(a^2P_cS+b^2P_c)
=
\frac1s(a^2n+b^2)P_c
=
sP_c.
\]
The final identity follows immediately. This proves Lemma~\ref{lem:canonical-full}.
\end{proof}

\begin{lemma}[Canonical-plane closure in weight space]
\label{lem:W-reduction}
Let
\(
\mathcal W_{ \mathrm{cano}}=\operatorname{span}\{Q_f,Q_c\}\subset\mathbb R^{C\times(m+C)}.
\)
If \(W\in\mathcal W_{ \mathrm{cano}}\) and
\(
W=\mu_fQ_f+\mu_cQ_c,
\,
u=b\mu_f,\, v=s\mu_c,
\) we have
\(
WF=uP_f+vP_c\in\mathcal M,
\)
and
\(
\mathcal L_{ }(W)=\log D(u,v),
\,
\soft(WF)-I_C=\alpha(u,v)P_f+\beta(u,v)P_c.
\)
Moreover, we have
\(
\nabla_W\mathcal L_{ }(W)
=
{b\,\alpha(u,v)}/{C}\cdot Q_f
+
{s\,\beta(u,v)}/{C}\cdot Q_c
\in \mathcal W_{ \mathrm{cano}}.
\)
In particular, the plane \(\mathcal W_{ \mathrm{cano}}\) is invariant under GD.
Since \(\alpha(u,v),\beta(u,v)<0\), it is also invariant under idealized Muon.
\end{lemma}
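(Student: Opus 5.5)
The plan is to verify the three claims in turn: the identity $WF=uP_f+vP_c$, the loss and softmax formulas, and the gradient formula. The first two follow directly from results already stated. By linearity and Lemma~\ref{lem:canonical-full}(2),
\[
WF=\mu_fQ_fF+\mu_cQ_cF=b\mu_fP_f+s\mu_cP_c=uP_f+vP_c,
\]
which lies in $\mathcal M$ with $\zeta=0$. Lemma~\ref{lem:projectors-full} gives $\Pi WF=WF$. Lemma~\ref{lem:loss-reduction} applied to $M=WF$ then yields
\[
\mathcal L(W)=\log D(u,v),\qquad \soft(WF)-I_C=\alpha(u,v)P_f+\beta(u,v)P_c .
\]

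For the gradient, I would use the chain rule for the cross-entropy with centered logits. Since $\mathcal L(W)=-\frac1C\sum_c\log\soft(\Pi WF)_{c,c}$, we get
\[
\nabla_W\mathcal L=\tfrac1C\,\Pi^\top\bigl(\soft(\Pi WF)-I_C\bigr)F^\top .
\]
Write $M_s=\soft(WF)-I_C=\alpha P_f+\beta P_c$. Because $\Pi$ is symmetric and fixes $P_f$ and $P_c$, the prefactor $\Pi^\top$ drops out, leaving $\frac1C M_sF^\top$. Now $F^\top=[aR^\top,\,bI_C]$, so
\[
M_sF^\top=[\,a(\alpha P_f+\beta P_c)R^\top,\ b(\alpha P_f+\beta P_c)\,].
\]
Using $P_fR^\top=0$ (already noted in the proof of Lemma~\ref{lem:canonical-full}), this becomes
\[
M_sF^\top=\alpha[\,0,\ bP_f\,]+\beta[\,aP_cR^\top,\ bP_c\,]=b\alpha\,Q_f+s\beta\,Q_c ,
\]
where the last step uses the definitions $Q_f=[0,P_f]$ and $Q_c=\frac1s[aP_cR^\top,bP_c]$. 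Dividing by $C$ gives the stated formula, and GD invariance of $\mathcal W_{\mathrm{cano}}$ is immediate.

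For Muon, the key point is that $G=\frac{b\alpha}{C}Q_f+\frac{s\beta}{C}Q_c$ is a combination of partial isometries with mutually orthogonal ranges and co-ranges. Lemma~\ref{lem:canonical-full}(1) gives $Q_fQ_f^\top=P_f$, $Q_cQ_c^\top=P_c$ and $Q_fQ_c^\top=0$. I also need $Q_f^\top Q_c=0$, which follows from $P_fP_c=0$ via $Q_f^\top Q_c=\frac1s[0;P_f][aP_cR^\top,bP_c]$. Consequently $GG^\top=\frac{b^2\alpha^2}{C^2}P_f+\frac{s^2\beta^2}{C^2}P_c$, so the singular directions split along the two blocks. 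Since $\alpha,\beta<0$, this gives $\spec(G)=-Q_f-Q_c\in\mathcal W_{\mathrm{cano}}$. Equivalently, $\spec(G)=(GG^\top)^{-1/2}G$ on the range, and here $(GG^\top)^{-1/2}$ acts as $\frac{C}{b|\alpha|}$ on $\mathrm{Im}\,P_f$ and as $\frac{C}{s|\beta|}$ on $\mathrm{Im}\,P_c$.

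The only mildly delicate step is this polar-decomposition argument for $\spec$, in particular checking the co-range orthogonality $Q_f^\top Q_c=0$. Everything else is bookkeeping.
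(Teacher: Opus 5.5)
Your proposal is correct and follows the paper's proof essentially step for step. You obtain $WF=uP_f+vP_c$ from Lemma~\ref{lem:canonical-full}, the loss and softmax formulas from Lemma~\ref{lem:loss-reduction}, and the gradient $b\alpha Q_f+s\beta Q_c$ from the chain rule with $P_fR^\top=0$; for Muon you use the block-wise SVD. Your explicit treatment of the $\Pi^\top$ factor and your polar-decomposition check, including $Q_f^\top Q_c=0$, spell out what the paper states in one line when it says the compact SVD is obtained by concatenating the two blocks.
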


\begin{proof}
By Lemma~\ref{lem:canonical-full}, direct calculation gives
\[
WF=\mu_fQ_fF+\mu_cQ_cF=b\mu_f\,P_f+s\mu_c\,P_c=uP_f+vP_c\in\mathcal M.
\]
By Lemma~\ref{lem:loss-reduction}, we have
\(
\mathcal L_{ }(W)=\log D(u,v),
\,
\soft(WF)-I_C=\alpha(u,v)P_f+\beta(u,v)P_c.
\) By the chain rule, we have
\(
\nabla_W\mathcal L_{ }(W)
=
1/C\big(\soft(WF)-I_C\big)F^\top
=
\frac1C(\alpha P_f+\beta P_c)[\,aR^\top,\ bI_C\,].
\)
Using \(P_fR^\top=0\), we get
\(
(\alpha P_f+\beta P_c)[\,aR^\top,\ bI_C\,]
=
[\,a\beta P_cR^\top,\ b\alpha P_f+b\beta P_c\,]
=
b\alpha Q_f+s\beta Q_c.
\)
Therefore, we have
\[
\nabla_W\mathcal L_{ }(W)
=
\frac{b\,\alpha(u,v)}{C}\cdot Q_f+\frac{s\,\beta(u,v)}{C}\cdot Q_c
\in\mathcal W_{ \mathrm{cano}}.
\]

Hence GD preserves \(\mathcal W_{ \mathrm{cano}}\).
For Muon, since \(\alpha(u,v),\beta(u,v)<0\) and \(Q_f,Q_c\) are orthogonal partial isometries,
the compact SVD of \(\nabla_W\mathcal L_{ }(W)\) is obtained by concatenating the two blocks, so
\[
\spec\!\big(\nabla_W\mathcal L_{ }(W)\big)
=
-Q_f-Q_c\in\mathcal W_{ \mathrm{cano}}.
\]
Thus, Muon also preserves \(\mathcal W_{ \mathrm{cano}}\). The final identity follows immediately. This proves Lemma~\ref{lem:W-reduction}.
\end{proof}

\begin{lemma}[Margin on the symmetry class]
\label{lem:source-margin-symmetry}
Let
\(
M=uP_f+vP_c+\zeta{J}/{C}\in\mathcal M,
\,
u,v\ge0.
\)
Then every class has the same margin, i.e.,
\(
M_{c,c}-\max_{j\ne c}M_{j,c}=\min\left\{
u,\,
{\left((n-1)u+v\right)}/{n}
\right\}
\quad \text{for all } c\in\mathcal C.
\)
In particular, if \(v\ge u\), then we have
\(
M_{c,c}-\max_{j\ne c}M_{j,c}
=u
\) for all \(c\in\mathcal C\).
\end{lemma}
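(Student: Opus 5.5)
The plan is to compute the entries of $M$ explicitly, column by column, and read off the margin. The scalar shift $\zeta J/C$ adds the same constant $\zeta/C$ to every entry of $M$, and hence to every entry of a given column. It therefore cancels in $M_{c,c}-\max_{j\ne c}M_{j,c}$, so we may assume $\zeta=0$.

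Fix a column $c=(g,r)$ and use $P_f=I_C-S/n$, $P_c=S/n-J/C$ with $S=I_m\otimes J_n$. The entries of column $c$ of $uP_f+vP_c$ take only three values, exactly as in the proof of Lemma~\ref{lem:loss-reduction}:
\begin{itemize}
\item on the correct entry, $z_{\mathrm{corr}}=u(1-1/n)+v(1/n-1/C)$;
\item on the $n-1$ sibling entries $(g,r')$ with $r'\ne r$, $z_{\mathrm{sib}}=-u/n+v(1/n-1/C)$;
\item on the $C-n$ entries $(g',r')$ with $g'\ne g$, $z_{\mathrm{other}}=-v/C$.
\end{itemize}
Since $m,n\ge2$, both the sibling set and the other-group set are nonempty. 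Hence
\[
\max_{j\ne c}M_{j,c}=\max\{z_{\mathrm{sib}},z_{\mathrm{other}}\},
\]
and the margin is
\[
\min\{z_{\mathrm{corr}}-z_{\mathrm{sib}},\,z_{\mathrm{corr}}-z_{\mathrm{other}}\}=\min\{u,\,((n-1)u+v)/n\},
\]
using the two differences already recorded in Lemma~\ref{lem:loss-reduction}. None of these values depends on $(g,r)$, so every class has the same margin.

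For the final claim, suppose $v\ge u$. Then $((n-1)u+v)/n\ge((n-1)u+u)/n=u$, so the minimum equals $u$. The hypothesis $u,v\ge0$ is not needed for the identity itself; it only frames the regime in which the result is applied.

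There is no real obstacle here. The only care needed is to check that the sibling and other-group sets are nonempty (guaranteed by $n\ge2$ and $m\ge2$), and to verify the three entry values from the explicit projector formulas. That verification is a one-line computation for each of the three cases.
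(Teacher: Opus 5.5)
Your proof is correct and matches the paper's argument essentially line for line. Like the paper, you discard the $\zeta J/C$ shift, read off the three logit values in column $c$, and take the minimum of $z_{\mathrm{corr}}-z_{\mathrm{sib}}=u$ and $z_{\mathrm{corr}}-z_{\mathrm{other}}=((n-1)u+v)/n$. Your two extra remarks are accurate refinements the paper leaves implicit: $m,n\ge2$ makes both competitor sets nonempty, and the sign condition $u,v\ge0$ is not needed for the identity.
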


\begin{proof}
Fix a column \(c=(g,r)\). The additive term
\(\zeta J/C\) shifts all logits in this column by the same constant, so it does
not affect the margin. Hence it suffices to compute the margin of \(uP_f+vP_c\).

For the correct class, a sibling class in the same coarse group, and a class in
a different coarse group, the logits are respectively
\[
z_{\mathrm{corr}}
=
u\left(1-\frac1n\right)
+
v\left(\frac1n-\frac1C\right),
\qquad
z_{\mathrm{sib}}
=
-\frac{u}{n}
+
v\left(\frac1n-\frac1C\right),
\qquad
z_{\mathrm{other}}
=
-\frac{v}{C}.
\]
Therefore, we have
\(
z_{\mathrm{corr}}-z_{\mathrm{sib}}=u,
\,
z_{\mathrm{corr}}-z_{\mathrm{other}}
=
{\left((n-1)u+v\right)}/{n}.
\)
The largest incorrect logit is the larger of \(z_{\mathrm{sib}}\) and
\(z_{\mathrm{other}}\), so the margin equals
\(
\min\left\{
u,\,
{\left((n-1)u+v\right)}/{n}
\right\}.
\)
This value is independent of \(c\), and therefore every class has the same margin. If \(v\ge u\), then we have
\(
{\left((n-1)u+v\right)}/{n}\ge u,
\)
so the margin reduces to \(u\).
This proves Lemma~\ref{lem:source-margin-symmetry}.
\end{proof}

\begin{lemma}[Scale of the explicit threshold]
\label{lem:eps-star-scale}
Fix constants \(0<a\le b\), and let \(C=mn\). For \(m,n\ge2\), define
\[
\theta
=
\sqrt{1+\frac{a^2n}{b^2}}-1,
\qquad
\epsilon_\star
=
mn(1+m\theta)^{-n/\theta}.
\]
Then the following statements hold.

\begin{enumerate}
    \item
    The explicit threshold lies in the valid loss range:
    \(0<\epsilon_\star<1<\log C .
    \)

    \item
    There exists a constant \(c_0>0\), depending only on \(a,b\), such that
    for every \(0<c<c_0\),
    \(
    \epsilon_\star
    =
    o\!\left(
    mn(1+m\sqrt n)^{-c\sqrt n}
    \right)
    \)
    along any sequence \(m=m(n)\ge2\) with \(n\to\infty\). Consequently, if
    \(
    \epsilon
    =
    \Omega\!\left(
    mn(1+m\sqrt n)^{-c\sqrt n}
    \right)
    \)
    for some \(0<c<c_0\), then
    \(
    \epsilon\ge \epsilon_\star
    \)
    for all sufficiently large \(n\).

    \item 
    If \(m=\Theta(n)\), then we have
    \(
    \epsilon_\star
    =
    \exp\!\left(-\Theta(\sqrt n\log n)\right)
    =
    n^{-\Theta(\sqrt n)}.
    \)
    In particular, there exists a constant \(c'_0>0\), depending only on
    \(a,b\), such that for every \(0<c'<c'_0\), we have
    \(
    \epsilon_\star
    =
    o\!\left(\exp(-c'\sqrt n\log n)\right)
    =
    o\!\left(n^{-c'\sqrt n}\right).
    \)
    Consequently, if
    \(
    \epsilon
    =
    \Omega\!\left(\exp(-c'\sqrt n\log n)\right)
    =
    \Omega\!\left(n^{-c'\sqrt n}\right)
    \)
    for some \(0<c'<c'_0\), then we have
    \(
    \epsilon\ge \epsilon_\star
    \)
    for all sufficiently large \(n\).
\end{enumerate}
\end{lemma}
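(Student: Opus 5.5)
The plan is to reduce all three parts to a single monotonicity fact and two elementary bounds on $\theta$. The monotonicity fact is that for fixed $m\ge 1$ the map $x\mapsto \log(1+mx)/x$ is strictly decreasing on $(0,\infty)$, which holds because $\log(1+y)/y$ is decreasing in $y$. The two bounds on $\theta$ are the following. First, since $\sqrt{1+z}-1\le z/2$ and $\sqrt{1+z}-1\le\sqrt z$, we have $0<\theta\le \min\{a^2n/(2b^2),\,(a/b)\sqrt n\}$, and $a\le b$ gives $\theta\le n/2$. Second, for large $n$ we have $\theta\ge \kappa\sqrt n$ for some $\kappa>0$ depending only on $a,b$; for instance $\kappa=a/(2b)$ works. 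The key quantity throughout is the exponent $E:=\frac{n}{\theta}\log(1+m\theta)$, so that $\epsilon_\star=mn\,e^{-E}$.

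For part 1, positivity of $\epsilon_\star$ is immediate. To get $\epsilon_\star<1$, I will combine the monotonicity with $\theta\le n/2$:
\[
E\ \ge\ \frac{n}{n/2}\log\Bigl(1+\frac{mn}{2}\Bigr)=\log\Bigl(\bigl(1+\tfrac{C}{2}\bigr)^2\Bigr)=\log\Bigl(1+C+\tfrac{C^2}{4}\Bigr)>\log C.
\]
This gives $\epsilon_\star=Ce^{-E}<1$. The remaining inequality $1<\log C$ holds because $C=mn\ge4>e$.

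For part 2, I will compare $E$ with $c\sqrt n\log(1+m\sqrt n)$. Using monotonicity with the upper bound $\theta\le\lambda\sqrt n$, where $\lambda:=a/b\in(0,1]$, gives $E\ge \lambda^{-1}\sqrt n\,\log(1+\lambda m\sqrt n)$. Concavity of $\log(1+\cdot)$ together with $\lambda\le1$ gives $\log(1+\lambda x)\ge\lambda\log(1+x)$. Together these yield $E\ge\sqrt n\log(1+m\sqrt n)$. Hence
\[
\frac{\epsilon_\star}{mn(1+m\sqrt n)^{-c\sqrt n}}=\exp\bigl(c\sqrt n\log(1+m\sqrt n)-E\bigr)\le \exp\bigl(-(1-c)\sqrt n\log(1+m\sqrt n)\bigr)\to0
\]
for every $c<c_0:=1$, uniformly over any sequence $m(n)\ge2$. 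The "consequently" clause then follows: if $\epsilon\ge K\,mn(1+m\sqrt n)^{-c\sqrt n}$ for some $K>0$, then eventually $\epsilon_\star$ is below this bound, so $\epsilon\ge\epsilon_\star$.

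For part 3, take $m=\Theta(n)$. The lower bound from part 2 gives $E\ge\sqrt n\log(1+m\sqrt n)\ge \tfrac32\sqrt n\log n-O(\sqrt n)$. For the matching upper bound I will use $\theta\ge\kappa\sqrt n$ together with $\theta\le\sqrt n$, which give $E\le\kappa^{-1}\sqrt n\log(1+m\sqrt n)=O(\sqrt n\log n)$. Since $\log(mn)=O(\log n)$ is of lower order, $-\log\epsilon_\star=E-\log(mn)=\Theta(\sqrt n\log n)$. From $-\log\epsilon_\star\ge(\tfrac32-o(1))\sqrt n\log n$ we get $\epsilon_\star=o(\exp(-c'\sqrt n\log n))$ for every $c'<c_0':=3/2$. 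The consequence for $\epsilon=\Omega(\exp(-c'\sqrt n\log n))$ follows as in part 2.

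I expect the main obstacle to be the uniform lower bound on $E$ in part 2, specifically ensuring the constant does not degrade when $\theta$ is not close to its asymptote $(a/b)\sqrt n$. The monotonicity trick handles this cleanly, because only the upper bound on $\theta$ is needed for the lower bound on $E$. Beyond that, only bookkeeping of $\log(mn)$ versus $\sqrt n\log n$ remains.
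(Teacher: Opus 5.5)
Your proof is correct. It shares the paper's skeleton: write $\epsilon_\star=mn\,e^{-E}$ with $E=\frac{n}{\theta}\log(1+m\theta)$, then lower-bound $E$ using bounds on $\theta$. The key tool is different, though.

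In part 1 the paper uses $\theta\le n$ and Bernoulli's inequality $(1+m\theta)^{n/\theta}\ge 1+mn$. Your monotonicity step with $\theta\le n/2$ is an equivalent, slightly stronger version of this.

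The real difference is in part 2. The paper takes asymptotic two-sided bounds $c_1\sqrt n\le\theta\le c_2\sqrt n$. It uses the lower bound inside $\log(1+m\theta)$ and the upper bound in $n/\theta$, and then needs $\log(1+c_1x)\ge c_3\log(1+x)$ for large $x$. So its constant $c_0=c_3/c_2$ is only valid for sufficiently large $m,n$. Because $\theta\mapsto\log(1+m\theta)/\theta$ is decreasing, you only need the upper bound $\theta\le (a/b)\sqrt n$. The concavity inequality $\log(1+\lambda x)\ge\lambda\log(1+x)$ then gives the non-asymptotic bound $\epsilon_\star\le mn(1+m\sqrt n)^{-\sqrt n}$ for all $m,n$, with universal constants $c_0=1$ and $c_0'=3/2$.

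The price is sharpness. If you track constants in the paper's route ($c_2=a/b$, $c_1\to a/b$, $c_3\to1$), $c_0$ can approach $b/a\ge1$, and your concavity step discards that factor. Since the lemma only asks for some $c_0>0$, this does not affect validity.

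Part 3 is the same bookkeeping in both arguments.
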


\begin{proof}
First, \(\epsilon_\star>0\) is immediate. Since \(a\le b\), we have
\(
\theta
=
\sqrt{1+{a^2n}/{b^2}}-1
\le
\sqrt{1+n}-1
\le n.
\)
Thus, \(n/\theta\ge1\). By Bernoulli's inequality, we have
\(
(1+m\theta)^{n/\theta}
\ge
1+({n}/{\theta})\cdot m\theta
=
1+mn.
\)
Therefore, we have
\[
0<\epsilon_\star
=
mn(1+m\theta)^{-n/\theta}
\le
\frac{mn}{1+mn}
<1<\log C,
\]
where \(C=mn\ge4\).

We next prove the general high-dimensional scale. Since \(a,b\) are fixed
positive constants, we have
\(
\theta
=
\sqrt{1+{a^2n}/{b^2}}-1
=
\Theta(\sqrt n).
\)
Hence there exist constants \(c_1,c_2>0\), depending only on \(a,b\), such
that for all sufficiently large \(n\), we have
\(
c_1\sqrt n\le \theta\le c_2\sqrt n.
\)
It follows that
\[
\frac{n}{\theta}\ge \frac{1}{c_2}\sqrt n,
\qquad
1+m\theta\ge 1+c_1m\sqrt n.
\]
Let
\(
\phi(x)={\log(1+c_1x)}/{\log(1+x)},
\, x>0.
\)
Since \(\phi(x)\to1\) as \(x\to\infty\), there exists a constant
\(c_3>0\), depending only on \(a,b\), such that
\(
\log(1+c_1x)\ge c_3\log(1+x)
\)
for all sufficiently large \(x\). Applying this with \(x=m\sqrt n\), we get
\(
\log(1+m\theta)
\ge
c_3\log(1+m\sqrt n)
\)
for all sufficiently large \(m,n\). Therefore, we have
\[
\log\epsilon_\star
=
\log(mn)-\frac{n}{\theta}\log(1+m\theta) \notag 
\le
\log(mn)
-
\frac{c_3}{c_2}\sqrt n\log(1+m\sqrt n).
\]
Let \(c_0=c_3/c_2\). Then, we have
\(
\epsilon_\star
\le
mn(1+m\sqrt n)^{-c_0\sqrt n}
\)
for all sufficiently large \(m,n\). For any \(0<c<c_0\), we have
\[
\frac{\epsilon_\star}
     {mn(1+m\sqrt n)^{-c\sqrt n}}
\le
(1+m\sqrt n)^{-c\sqrt n}
\to0.
\]
Thus, we have
\(
\epsilon_\star
=
o\!\left(
mn(1+m\sqrt n)^{-c\sqrt n}
\right).
\)

Finally, suppose \(m=\Theta(n)\). Since \(a,b\) are fixed, we have
\(
\theta=\Theta(\sqrt n),
\,
m\theta=\Theta(n^{3/2}),
\,
\frac{n}{\theta}=\Theta(\sqrt n).
\)
From the definition of \(\epsilon_\star\), we obtain
\(\log\epsilon_\star = \log(mn) - (n/\theta)\log(1+m\theta)\)
Moreover, we have
\(
\log(mn)=\Theta(\log n),
\,
\log(1+m\theta)=\Theta(\log n).
\)
Direct calculation gives
\(
\log\epsilon_\star
=
-\Theta(\sqrt n\log n),
\)
and, therefore, we have
\(
\epsilon_\star
=
\exp(-\Theta(\sqrt n\log n))
=
n^{-\Theta(\sqrt n)}.
\)
The final little-\(o\) statement follows by choosing \(c'>0\) smaller than
the implicit constant in the lower bound of
\(-\log\epsilon_\star=\Theta(\sqrt n\log n)\). This proves Lemma~\ref{lem:eps-star-scale}.
\end{proof}